\DeclareMathAlphabet{\mathsfsl}{OT1}{cmss}{m}{sl}
\renewcommand{\phi}{\varphi}
\newcommand{\argmin}{\operatorname*{arg\,min}}
\newcommand{\sign}{\operatorname{sign}}
\newcommand{\mode}{\operatorname{mode}}
\newcommand{\bx}{\boldsymbol{x}}
\newcommand{\bX}{\boldsymbol{X}}
\newcommand{\bI}{\boldsymbol{I}}
\newcommand{\bP}{\boldsymbol{P}}
\def\b0{\boldsymbol{0}}
\def\bP{\boldsymbol{P}}
\def\bY{\boldsymbol{Y}}
\DeclareMathOperator{\Tr}{Tr}
\begin{document}
\newcommand{\ith}{\textsc{A}\xspace}
 \newcommand{\er}{\textsc{B}\xspace}
 \newcommand{\eqc}[1]{%
  #1/{\sim}%
  }

\title{Robust Group Synchronization via Cycle-Edge Message Passing \thanks{This work was supported by NSF award DMS-1821266.}
}


\author{Gilad Lerman        \and
        Yunpeng Shi 
}


\institute{Gilad Lerman \at
              School of Mathematics, University of Minnesota, 127 Vincent Hall, 206 Church Street SE, Minneapolis, MN
55455 USA \\
              Tel.: +1-612-624-5541\\
              Fax: +1-612-624-6702\\
              \email{lerman@umn.edu}           
           \and
           Yunpeng Shi \at
              Program in Applied and Computational Mathematics, Princeton University, Princeton, NJ
08544 USA, \email{yunpengs@princeton.edu} 
}

\date{Received: date / Accepted: date}

\maketitle

\begin{abstract}
We propose a general framework for solving the  group synchronization problem, where we focus on the setting of adversarial or uniform corruption and sufficiently small noise.  Specifically, we apply a novel message passing procedure that uses cycle consistency information in order to estimate the corruption levels of group ratios and consequently solve the synchronization problem in our setting. We first explain why the group cycle consistency information is essential for effectively solving group synchronization problems. We then establish exact recovery and linear convergence guarantees for the proposed message passing procedure under a deterministic setting with adversarial corruption. These guarantees hold as long as the ratio of corrupted cycles per edge is bounded by a reasonable constant. We also establish the stability of the proposed procedure to sub-Gaussian noise. We further establish exact recovery with high probability under a common uniform corruption model.  
\keywords{Group synchronization \and Robust estimation \and Exact recovery \and Message passing}
\subclass{90-08 \and 62G35  \and 68Q25 \and 68W40  \and 68Q87 \and 93E10}
\end{abstract}

\section{Introduction}
The problem of synchronization arises in important data-related tasks, such as structure from motion (SfM), simultaneous localization and mapping (SLAM), Cryo-EM, community detection and sensor network localization. The underlying setting of the problem includes objects with associated states, where examples of states are locations, rotations and binary labels. The main problem is estimating the states of objects from the relative state measurements between pairs of objects. One example is rotation synchronization, which aims to recover rotations of objects from the relative rotations between pairs of objects. The problem is simple when one has the correct measurements of all relative states. However, in practice the measurements of some relative states can be erroneous or missing. The main goal of this paper is to establish a theoretically-guaranteed solution for general compact group synchronization that can tolerate large amounts of measurement error. 

We mathematically formulate the general problem in Section \ref{sec:math_formulation} and discuss common special cases of this problem in Section \ref{sec:examples}. Section \ref{sec:previous_short} briefly mentions the computational difficulties in solving this problem and the disadvantages of the common convex relaxation approach. Section \ref{sec:ours} non-technically describes our method, and Section \ref{sec:contribution} highlights its contributions. At last,  Section \ref{sec:rest} provides a roadmap for the rest of the paper.

\subsection{Problem Formulation}
\label{sec:math_formulation}
The most common mathematical setting of synchronization is group synchronization, which asks to recover group elements from their noisy group ratios. It assumes a group $\mathcal{G}$, a subset of this group $\{g_i^*\}_{i=1}^n$ and a graph $G([n],E)$ with $n$ vertices indexed by $[n]=\{1,\ldots,n\}$.  The group ratio between $g_i^*$ and $g_j^*$ is defined as $g_{ij}^*=g_i^*g_j^{*-1}$.   
We use the star superscript to emphasize original elements of $\mathcal{G}$, since the actual measurements can be corrupted or noisy.  We remark that since $g_{ji}^*={g_{ij}^*}^{-1}$, our setting of an undirected graph, $G([n],E)$, is fine.

We say that a ratio $g_{ij}^*$ is corrupted when it is replaced by $\tilde g_{ij} \in  \mathcal{G}\setminus \{g_{ij}^*\}$, either deterministically or probabilistically. 
We partition $E$ into the sets of uncorrupted (good) and corrupted (bad) edges, which we denote by $E_g$ and $E_b$, respectively. 

We denote the group identity by  $e_{\mathcal G}$. We assume a metric $d_\mathcal G$ on $\mathcal G$, which is bi-invariant. This means that for any $g_1,g_2,g_3\in \mathcal G$, 
 \begin{equation*}
 d_{\mathcal G}(g_1,g_2)=d_{\mathcal G}(g_3g_1,g_3g_2)=d_{\mathcal G}(g_1g_3,g_2g_3).
 \end{equation*}
We further assume that $\mathcal G$ is bounded with respect to $d_\mathcal G$
and we thus restrict our theory to compact groups. 
We  appropriately scale $ d_{\mathcal G}$ so that the diameter of $\mathcal G$ is at most 1. 
 
Additional noise can be applied to the group ratios associated with edges in $E_g$.  
For $ij \in E_g$, the noise model replaces $g^*_{ij}$ with $g^*_{ij}g_{ij}^\epsilon$, where $g_{ij}^\epsilon$ is a $\mathcal G$-valued random variable such that
$d_{\mathcal G}(g_{ij}^\epsilon, e_{\mathcal G})$ is sub-Gaussian. 
We denote the corrupted and noisy group ratios by $\{g_{ij}\}_{ij \in E}$ and summarize their form as follows:
\begin{align}\label{eq:model}
g_{ij}=\begin{cases}
g^*_{ij}g_{ij}^\epsilon, & ij \in E_g;\\
\tilde g_{ij}, & ij \in E_b.
\end{cases}
\end{align}
We refer to the case where $g_{ij}^\epsilon=e_\mathcal G$ for all $ij \in E$ as the noiseless case.
We view \eqref{eq:model} as an adversarial corruption model since the corrupted group ratios and the corrupted edges in $E_b$ can be arbitrarily chosen; however,  our theory introduces some restrictions on both of them.

The problem of group synchronization asks to recover the original group elements $\{g_i^*\}_{i\in [n]}$ given the graph $G([n],E)$ and corrupted and noisy group ratios $\{g_{ij}\}_{ij\in E}$. One can only recover, or approximate, the original group elements $\{g_i^*\}_{i\in [n]}$ up to a right group action.
Indeed, for any $g_0\in\mathcal G$, $g_{ij}^*$ can also be written as $g_i^*g_0(g_j^*g_0)^{-1}$ and thus $\{g_i^*g_0\}_{i\in [n]}$ is also a solution. 
It is natural to assume that $G([n], E_g)$ is connected, since in this case the arbitrary right multiplication is the only degree of freedom of the solution.

In the noiseless case, one aims to exactly recover the original group elements under certain conditions on the corruption and the graph.  
In the noisy case, one aims to nearly recover the original group elements with recovery error depending on the distribution of $d_{\mathcal G}(g_{ij}^\epsilon, e_{\mathcal G})$.

At last, we remark that for similar models where the measurement $g_{ij}$ may not be in $\mathcal G$ but in an embedding space, one can first project $g_{ij}$ onto $\mathcal G$ and then apply our proposed method. Any theory developed for our model can extend for the latter one by projecting onto $\mathcal G$.

\subsection{Examples of Group Synchronization}
\label{sec:examples}
We review the three common instances of group synchronization. 

\subsubsection{$\mathbb Z_2$ Synchronization}
\label{sec:z2}
This is the simplest and most widely known problem of group synchronization. The underlying group, $\mathbb Z_2$, is commonly represented in this setting by $\{-1,1\}$ with direct multiplication.
A natural motivation for this problem is binary graph clustering, where one wishes to recover the labels in $\{-1,1\}$ of two different clusters of graph nodes from corrupted measurements of signed interactions between pairs of nodes connected by edges. Namely, the signed interaction of two nodes is 1 if they are in the same cluster and -1 if they are in a different cluster. Note that without any erroneous measurement, the signed interaction is obtained by multiplying the corresponding labels and thus it corresponds to the group ratio  $g_{ij}^*=g_i^*g_j^{*-1}$. Also note that clusters are determined up to a choice of labels, that is, up to multiplication by an element of $\mathbb Z_2$.
The $\mathbb Z_2$ synchronization problem is directly related to the Max-Cut problem~\cite{wang2013exact} and to a special setting of community detection~\cite{Z2abbe,Z2}. It was also applied to solve a specific problem in sensor network localization~\cite{Z2singer}.
\subsubsection{Permutation Synchronization}
\label{sec:permutation}
The underlying group of this problem is the symmetric group, that is, the discrete group of permutations, $S_N$. This synchronization problem was proposed in computer vision in order to find globally consistent image matches from relative matches~\cite{deepti}.
More specifically, one has a set of images and $N$ feature points that are common to all images, such as distinguished corners of objects that appear in all images (they correspond to a set of $N$ points in the 3D scene). These feature points, often referred to as keypoints, are arbitrarily labeled in each image. For any pair of images one is given possibly corrupted versions of the relative permutations between their keypoints. 
One then needs to consistently label all keypoints in the given images. That is, one needs to find absolute permutations of the labels of keypoints of each image into the fixed labels of the $N$ 3D scene points. 
\subsubsection{Rotation Synchronization}
\label{sec:rotation}
The problem of rotation synchronization, or equivalently, $SO(3)$ synchronization, asks to recover absolute rotations from corrupted relative rotations up to a global rotation. Its special case of angular synchronization, or $SO(2)$ synchronization, asks to recover the locations of points on a circle  (up to an arbitrary rotation) given corrupted relative angles between pairs of points. More generally, one may consider $SO(d)$ synchronization for any $d \geq 2$.
Rotation synchronization is widely used in 3D imaging and computer vision tasks. In particular, \cite{singer_cryo} applies rotation synchronization for solving absolute rotations of molecules and \cite{Nachimson_LS,ChatterjeeG13_rotation,Govindu04_Lie,HartleyAT11_rotation,MartinecP07_rotation,OzyesilSB15_SDR,wang2013exact} synchronize the relative rotations of cameras to obtain the global camera rotations in the problem of structure from motion.

\subsection{On the Complexity of the Problem and Its Common Approach}
\label{sec:previous_short}
Many groups, such as $\mathbb Z_2$, $S_N$ and $SO(d)$ are non-convex and their synchronization problems are usually NP-hard~\cite{rotationNP,Z2NP,deepti}. Thus, many classic methods of group synchronization instead solve a relaxed semidefinite programming (SDP) problem (see review of previous methods and guarantees in Section \ref{sec:previous}). However, relaxation techniques may change the original problem and may  thus not recover the original group elements when the group ratios are severely corrupted. Furthermore, the SDP formulations and analysis are specialized to the different groups. Moreover, their computational time can still be slow in practice. 

\subsection{Short and Non-technical Description of Our Work and Guarantees}
\label{sec:ours}
The goal of this work is to formulate a universal and flexible framework that can address different groups in a similar way. It exploits cycle consistency, which is a common property shared by any group. That is, let $L=\{i_1i_2, i_2i_3\dots i_m i_1\}$ be any cycle of length $m$\footnote{Recall that a cycle in a graph is a closed trail whose first and last vertices are the only repeated vertices.} and define $g^*_L=g^*_{i_1i_2}g^*_{i_2i_3}\cdots g^*_{i_mi_1}$,
then the cycle consistency constraint is
\begin{equation}\label{eq:cyclecons}
g^*_L= e_\mathcal G.
\end{equation}
That is, the multiplication of the original group ratios along a cycle yields the group identity.
In practice, one may only compute the following approximation for $g^*_L$:
\begin{equation}\label{eq:defgL}
g_L=g_{i_1i_2}g_{i_2i_3}\cdots g_{i_mi_1},
\end{equation}
where for faster computation we prefer using only 3-cycles, so that $g_L = g_{ij} \, g_{jk} \, g_{ki}$. 

One basic idea is that the distances between $g_L$ and $e_\mathcal G$ for cycles $L$ containing edge $ij$, which we refer to as cycle inconsistencies, provide information on the distance between $g^*_{ij}$ and $g_{ij}$, which we refer to as the corruption level of edge $ij$. 
Our proposed Cycle-Edge Message Passing (CEMP) algorithm thus estimates these corruption levels using the cycle inconsistencies by 
alternatingly updating messages between cycles and edges. The edges with high corruption levels can then be confidently removed.

In theory, the latter cleaning (or removal) procedure can be used for recovering the original group elements in the noiseless case and for nearly recovering them in the case of sufficiently small noise. In fact, we obtain the strongest theoretical guarantees for general group synchronization with adversarial corruption.  

In practice, Section \ref{sec:postproc} suggests methods of using the estimated corruption levels for solving the group synchronization problem in general scenarios.

The basic idea of this work was first sketched for the different problem of camera location estimation in a conference paper \cite{AAB} (we explain this problem later in Section \ref{sec:energy}).
In addition to formulating this idea to the general group synchronization problem as well as carefully explaining it in the context of message passing, we present nontrivial theoretical guarantees, unlike the very basic and limited ones in \cite{AAB}. Most importantly, we establish exact and fast recovery of the underlying group elements.

\subsection{Contribution of This Work}
\label{sec:contribution}
The following are the main contributions of this work:\\
\textbf{New insight into group synchronization:} We mathematically establish the relevance of cycle consistency information to the group synchronization problem (see Section~\ref{sec:reform}).\\
\textbf{Unified framework via message passing:} 
CEMP 
applies to any compact group. This is due to the careful incorporation of cycle consistency, which is a general property of groups. As later explained in Section~\ref{sec:compare}, our algorithm is different from all previous message passing approaches, and in particular, does not require assumptions on the underlying joint distributions. \\
\textbf{Strongest theory for adversarial Corruption:} We claim that CEMP is the first algorithm that is guaranteed to exactly recover group elements from adversarially corrupted group ratios under reasonable assumptions (see Section \ref{sec:determin}).
Previous guarantees for group synchronization assume very special generative models and often asymptotic scenarios and special groups.
We are only aware of somewhat similar guarantees in \cite{HandLV15,truncatedLS,LUDrecovery}, but for the different problem of camera location estimation.
We claim that our theory is stronger since it only requires a constant uniform upper bound on the local corruption levels, whereas a similar upper bound in \cite{HandLV15,LUDrecovery} depends on $n$ and the sparsity of the graph. Moreover, our argument is much simpler than \cite{HandLV15,LUDrecovery} and we also need not assume the restrictive Erd\H{o}s-R\'{e}nyi model for generating the graph. 
While \cite{HandLV15,LUDrecovery} suggest a constructive solution and we only estimate the corruption levels, the guarantees of \cite{HandLV15,LUDrecovery} only hold for the noiseless case, and in this case correct estimation of corruption by our method is equivalent with correct solution of the group elements. \\
\textbf{Stability to noise:} We establish results for approximate recovery of CEMP in the presence of both adversarial corruption and noise (see Sections \ref{sec:bounded} and \ref{sec:subg}). For sub-Gaussian noise, we only require that the distribution of $d_\mathcal  G(g_{ij}, g_{ij}^*)$ is independent and sub-Gaussian, unlike previous specific noise distribution assumptions of $g_{ij}$ \cite{deepti,AMP_compact}. For the case where $d_\mathcal  G(g_{ij}, g_{ij}^*)$ is bounded for $ij\in E_g$, we state a deterministic perturbation result.\\
\textbf{Recovery under uniform corruption}:
When the edges in $G([n],E)$ are generated by the Erd\H{o}s-R\'{e}nyi model and the corrupted group ratios are i.i.d.~sampled from the Haar measure on $\mathcal G$, we can guarantee with high probability exact recovery and fast convergence by CEMP for any fixed corruption probability, $0 \leq q < 1$, and any edge connection probability, $0 < p \leq 1$, as long as the sample size is sufficiently large. 
Our analysis is not restricted anymore by a sufficiently small uniform upper bound on the local corruption levels.
Using these results, we derive sample complexity bounds for CEMP with respect to common groups. We point at a gap between these bounds and the information-theoretic ones as well as ones for other algorithms. Nevertheless, to the best of our knowledge, there are no other results for continuous groups that hold for any $q < 1$.  

\subsection{Organization of the Paper}
\label{sec:rest}
Section \ref{sec:previous} gives an overview of previous relevant works. 
Section \ref{sec:reform} mathematically establishes the relevance of cycle-based information to the solution of group synchronization.
Section \ref{sec:cemp} describes our proposed method, CEMP, and carefully interprets it as a message passing algorithm.
Section \ref{sec:theory} establishes exact recovery and fast convergence for CEMP under the adversarial corruption model and shows the stability of CEMP under bounded and sub-Gaussian noise.  Section \ref{sec:uniform} establishes guarantees under a special random corruption model. Section \ref{sec:experiment} demonstrates the numerical performance of CEMP using artificial datasets generated according to either adversarial or uniform corruption models.
Section \ref{sec:conclusion} concludes this work, while discussing possible extensions of it. The appendix contains various proofs and technical details, where the central ideas of the proofs are in the main text.

\section{Related Works}
\label{sec:previous}
This section reviews existing algorithms and guarantees for group synchronization and also reviews methods that share similarity with the proposed approach. Section~\ref{sec:energy} overviews previous works that utilize energy minimization and formulates a general  framework for these works. Section~\ref{sec:inference} reviews previous methods for inferring corruption in special group synchronization problems by the use of cycle-consistency information. Section~\ref{sec:message passing} reviews message passing algorithms and their applications to group synchronization.  

\subsection{Energy Minimization}\label{sec:energy}
Most works on group synchronization require minimizing an energy function. We first describe a general energy minimization framework for group synchronization and then review relevant previous works. This framework uses a metric $d_\mathcal G$ defined on $\mathcal G$
and a function $\rho$ from $\mathbb R_+^{|E|}$ to $\mathbb R_+$. We remark that $\mathbb R_+$ denotes the set of nonnegative numbers and $|\cdot|$ denotes the cardinality of a set. The general framework aims to solve
\begin{align}\label{eq:gs}
\min_{g_i\in \mathcal G} \rho \left(\left( d_\mathcal G(g_{ij},g_ig_j^{-1})\right)_{ij\in E} \right).
\end{align}
Natural examples of $\rho$ include the sum of $p$th powers of elements, where $p>0$, the number of non-zero elements, and the maximal element. 

The elements of $\mathbb Z_2$, $S_m$ and $SO(d)$ (that is, the most common groups that arise in synchronization problems), can be represented by orthogonal matrices with sizes  $N=1$, $m$, and $d$, respectively. For these groups,  it is common to identify each $g_i$, $i \in [n]$, with its representing matrix, choose $d_\mathcal G$ as the Frobenius norm of the difference of two group elements (that is, their representing matrices), $\rho(\cdot)= \| \cdot \|_{\nu}^{\nu}$, where $\nu=2$ or $\nu=1$, and consider the following minimization problem
\begin{align}
\label{eq:frob}
\min_{g_i\in \mathcal G} \sum_{ij\in E}  \|g_ig_j^{-1}-g_{ij}\|_F^\nu.
\end{align}

The best choice of $\nu$ depends on $\mathcal G$, the underlying noise and the corruption model. For Lie groups, $\nu=2$ is optimal under Guassian noise, and $\nu=1$ is more robust to outliers (i.e., robust to significantly corrupted group ratios). For some examples of discrete groups, such as $\mathbb Z_2$ and $S_N$, $\nu=2$ is information-theoretically optimal for both Gaussian noise and uniform corruption.

For $\nu=2$, one can form an equivalent formulation of \eqref{eq:frob}. It uses the block matrix $\bY\in \mathbb R^{nN\times nN}$, where $\bY_{ij}=g_{ij}$ if $ij\in E$ and $\bY_{ij}=\b0_{N\times N}$, otherwise. Its solution is a block matrix $\bX \in \mathbb R^{nN\times nN}$, whose $[i,j]$-th block, $i,j \in [n]$, is denoted by $\bX_{ij}$.  
It needs to satisfy $\bX_{ij} = g_ig_j^T$, where $\{g_i\}_{i \in [n]}$, is the solution of \eqref{eq:frob}, or equivalently, $\bX=\bx \bx^T$, where $\bx=(g_i)_{i\in [n]}\in \mathbb R^{nN\times N}$. In order to obtain this, $\bX$ needs to be positive semi-definite of rank $N$ and its blocks need to represent elements of $\mathcal G$, where the diagonal ones need to be the identity matrices. 
For $SO(2)$, it is more convenient to represent $g_i$ and $g_{ij}$ by elements of $U(1)$, the unit circle in $\mathbb C$, and thus replace $\mathbb R^{2n\times 2}$ and $\mathbb R^{2n\times 2n}$ with $\mathbb C^{n\times 1}$ and $\mathbb C^{n\times n}$. 
Using these components, the equivalent formulation can be written as
\begin{equation}\label{eq:SDR}
\begin{array} { c l } { \underset { \bX \in \mathbb { R } ^ { nN \times nN }}  { \operatorname { max } } } & { \operatorname { Tr } ( \bX^T\bY) } \\ { \text { subject to } } & {\{\bX_{ij}\}_{i,j=1}^n\subset \mathcal G}\\
{ } & { \bX _ { i i } = \bI_{N\times N} , i = 1 , \ldots , n } \\ { } & { \bX \succeq \b0 }
 \\ { } & { \text{rank}\,(\bX)=N.} \end{array}
\end{equation}

The above formulation is commonly relaxed by removing its two nonconvex constraints: rank$(\bX)=N$ and $\{\bX_{ij}\}_{i,j=1}^n\subset \mathcal G$.
The solution $\hat\bX$ of this relaxed formulation can be found by an SDP solver. One then commonly computes its top $N$ eigenvectors and stacks them as columns to obtain the $n \times 1$ vector of $N \times N$ blocks, $\tilde{\bx}$ (note that $\tilde{\bx}\tilde{\bx}^T$ is the best rank-N approximation of $\hat\bX$ in Frobenius norm). 
Next, one projects each of the $N$ blocks of $\tilde{\bx}$ (of size $N\times N$)  onto $\mathcal G$. This whole procedure, which we refer to in short as SDP, is typically slow to implement \cite{singer2011angular}. A faster common method, which we refer to as Spectral, applies a similar procedure while ignoring all constraints in \eqref{eq:SDR}. In this case, the highly relaxed solution of \eqref{eq:SDR} is $\hat\bX: = \bY$ and one only needs to find its top $N$ eigenvectors and project their blocks on the group elements \cite{singer2011angular}.

The formulation \eqref{eq:SDR} and its SDP  relaxation first appeared in the celebrated work of Goemans and Williamson  \cite{Goemans_Williamson_1995} on the max-cut problem. Their work can be viewed as a formulation for solving $\mathbb Z_2$ synchronization. Amit Singer \cite{singer2011angular} proposed the generalized formulation and its relaxed solutions for group synchronization, in particular, for angular synchronization. 

The exact recovery for $\mathbb Z_2$ synchronization is studied in \cite{Z2Afonso2,Z2Afonso}  by assuming an Erd\H{o}s-R\'{e}nyi graph, where each edge is independently corrupted with probability  $q<1/2$. Abbe et al.~\cite{Z2Afonso2} specified an information-theoretic lower bound on the average degree of the graph in terms of $q$. 
Bandeira~\cite{Z2Afonso} established asymptotic exact recovery for SDP for $\mathbb{Z}_2$ synchronization
w.h.p.~(with high probability) under the above information-theoretic regime. 
Montanari and Sen \cite{STOC_Montanari} studied the detection of good edges, instead of their recovery, under  i.i.d.~additive Gaussian noise.

Asymptotic exact recovery for convex relaxation methods of permutation synchronization appears in \cite{chen_partial,deepti}. In \cite{deepti}, noise is added to the relative permutations in $S_N$. The permutations are represented by $N \times N$ matrices and the elements of the additive $N \times N$ noise matrix are i.i.d.~$N(0,\eta^2)$. 
In this setting, exact recovery can be guaranteed when $\eta^2<{(n/N)}/{(1+4(n/N)^{-1})}$ as $nN\to \infty$. An SDP relaxation, different from~\eqref{eq:SDR}, is proposed in \cite{chen_partial,Huang13}. It is shown in \cite{Huang13} that for fixed $N$ and probability of corruption less than $0.5$, their method exactly recovers the underlying permutations w.h.p.~as $n\to\infty$. We remark that \cite{Huang13} assumes element-wise corruption of permutation matrices which is different from ours. An improved theoretical result is given by Chen et al.~\cite{chen_partial}, which matches the information-theoretic bound.

Rotation synchronization has been extensively studied \cite{Nachimson_LS,ChatterjeeG13_rotation,Govindu04_Lie,HartleyAT11_rotation,MartinecP07_rotation,wang2013exact}. In order to deal with corruption, it is most common to use $\ell_1$ energy minimization \cite{ChatterjeeG13_rotation,HartleyAT11_rotation,wang2013exact}. For example, Wang and Singer formulated a robust $SO(d)$ synchronization, for any $d \geq 2$, as the solution of \eqref{eq:frob} with $\nu=1$ and $\mathcal G = SO(d)$. 
Inspired by the analysis of \cite{ZhangL14_novel,LMTZ2014}, 
they established asymptotic and probabilistic exact recovery by the solution of their minimization problem under the following very special probabilistic model: The graph is complete or even Erd\H{o}s-R\'{e}nyi, the corruption model for edges is Bernoulli with corruption probability less than a critical probability $p_c$ that depends on $d$, and the corrupted rotations are i.i.d.~sampled from the Haar distribution on $SO(d)$. They proposed an alternating direction augmented Lagrangian method for practically solving their formulation, but their analysis only applies to the pure minimizer.

A somewhat similar problem to group synchronization is camera location estimation \cite{HandLV15,cvprOzyesilS15,OzyesilSB15_SDR,AAB}. It uses the non-compact group $\mathbb{R}^3$ with vector addition and its input includes possibly corrupted measurements of $\{Tg_{ij}^*\}_{ij\in E}$, where $T(g_{ij}^*)=g_{ij}^*/\|g_{ij}^*\|$ and $\|\cdot\|$ denotes the Euclidean norm.
The application of $T$ distorts the group structure and may result in loss of information. 

For this problem other forms of energy minimization have been proposed, which often differ from the framework in \eqref{eq:frob}. 
The first exact recovery result for a specific energy-minimization algorithm was established by Hand, Lee and Voroniski \cite{HandLV15}. The significance of this work is in the   weak assumptions of the corruption model,
whereas in the previously mentioned works on exact recovery \cite{Z2Afonso2,Z2Afonso,Z2,Huang13,deepti,wang2013exact}, the corrupted group ratios followed very specific probability distributions. 
More specifically, the main model in \cite{HandLV15} assumed an Erd\H{o}s-R\'{e}nyi graph $G([n],E)$ with parameter $p$ for connecting edges and an arbitrary corrupted set of edges $E_b$, whose corruption is quantified by the maximal degree of $G([n],E_b)$ divided by $n$, which is denoted by $\epsilon_b$. The transformed group ratios, $T(g_{ij})$, are $T(g_{ij}^*)$ for $ij \in E_g$ and  are arbitrarily chosen in $S^2$, the unit sphere, for  $ij \in E_b$. 
They established exact recovery under this model with $\epsilon_b=O(p^5/\log^3 n)$. A similar exact recovery theory for another energy-minimization algorithm, namely the Least Unsquared Deviations (LUD) \cite{cvprOzyesilS15}, was established by Lerman, Shi and Zhang \cite{LUDrecovery}, but with the stronger corruption bound, $\epsilon_b=O(p^{7/3}/\log^{9/2} n)$. 

Huang et al.~\cite{truncatedLS} solved an $\ell_1$ formulation for 1D translation synchronization, where $\mathcal G=\mathbb R$ with regular addition. They proposed a special version of IRLS and provided a deterministic exact recovery guarantee that depends on $\epsilon_b$ and a quantity that uses the graph Laplacian.

\subsection{Synchronization Methods Based on Cycle Consistency} \label{sec:inference}
Previous methods that use the cycle consistency constraint in \eqref{eq:cyclecons} only focus on synchronizing camera rotations. Additional methods use a different cycle consistency constraint to synchronize camera locations.  Assuming that $\mathcal G$ lies in a metric space with a metric $d_{\mathcal G}(\cdot\,,\cdot)$, the corruption level in a cycle $L$ can be indicated by the cycle inconsistency measure $
d_{\mathcal G}(g_L\,, e_{\mathcal G})
$, where $g_L$ was defined in \eqref{eq:defgL}.
 There exist few works that exploit such information to identify and remove the corrupted edges. A likelihood-based method \cite{Zach2010} was proposed to classify the corrupted and uncorrupted edges (relative camera motion) from  observations $d_{\mathcal G}(g_L\,, e_{\mathcal G})$ of many sampled $L$'s. 
This work has no theoretical guarantees. It seeks to solve the following problem: 
\begin{equation}\label{eq:bp}
\max_{x_{ij}\in \{0,1\}}\prod_{L}\Pr\left(\{x_{ij}\}_{ij\in L}|d_{\mathcal G}(g_L\,, e_{\mathcal G})\right).
\end{equation}
The variables $\{x_{ij}\}_{ij \in E}$ provide the assignment of edge $ij$ in the sense that $x_{ij}=\mathbf 1_{\{ij\in E_g\}}$, where $\mathbf 1$ denotes the indicator function. 
One of the proposed solutions in \cite{Zach2010} is a linear programming relaxation of \eqref{eq:bp}. The other proposed solution of \eqref{eq:bp} uses belief propagation. It is completely different from the message passing approach proposed in this work.

Shen et al.~\cite{shen2016} finds a cleaner subset of edges by searching for consistent cycles. In particular, if a cycle $L$ of length $m$ satisfies $d_{\mathcal G}(g_L\,, e)<\epsilon/\sqrt{m}$, then all the edges in the cycle are treated as uncorrupted. However, this approach lacks any theoretical guarantees and may fail in various cases. For example, the case where edges are maliciously corrupted and some  cycles with corrupted edges satisfy $d_{\mathcal G}(g_L\,, e)<\epsilon/\sqrt{m}$.

An iterative reweighting strategy, referred to as IR-AAB, was proposed in \cite{AAB} to identify corrupted pairwise directions when estimating camera locations. Experiments on synthetic data showed that IR-AAB was able to detect exactly the set of corrupted pairwise directions that were uniformly distributed on $S^2$ with low or medium corruption rate. However, this strategy was only restricted to camera location estimation and no exact recovery guarantees were provided for the reweighting algorithm. We remark that our current work is a generalization of \cite{AAB} to compact group synchronization problems. We also provide a message-passing interpretation for the ideas of \cite{AAB} and stronger mathematical guarantees in our context, but we do not address here the camera location estimation problem.

\subsection{Message Passing Algorithms}\label{sec:message passing}
  Message passing algorithms are efficient methods for statistical inference on graphical models. The most famous message passing algorithm is belief propagation (BP)~\cite{BP}. It is an efficient algorithm for solving marginal distribution or maximizing the joint probability density of a set of random variables that are defined on a Bayesian network. The joint density and the corresponding Bayesian network can be uniquely described by a  factor graph that encodes the dependencies of factors on the random variables. In particular, each factor is considered as a function of a small subset of random variables and the joint density is assumed as the product of these factors. The BP algorithm passes messages between the random variables and factors in the factor graph. When the factor graph is a tree, then BP is equivalent to dynamic programming and can converge in finite iterations. However, when the factor graph contains loops, BP has no guarantee of convergence and accuracy.  The BP algorithm is applied in \cite{Zach2010} to solve the maximal likelihood problem
  \eqref{eq:bp}. However, since the factor graph defined in \cite{Zach2010} contains many loops, there are no convergence and accuracy guarantees of the solution.

Another famous class of message passing algorithms is  approximate message passing (AMP) \cite{AMP_Donoho,AMP_compact}. AMP can be viewed as a modified version of BP and it is also used to compute marginal distribution and maximal likelihood. The main advantage of AMP over BP is that it enjoys asymptotic convergence guarantees even on loopy factor graphs. AMP was first proposed by Donoho, Maleki, and Montanari~\cite{AMP_Donoho} to solve the compressed sensing problem. They formulated the convex program for this problem as a maximal likelihood estimation problem and then solved it by AMP. Perry et al.~\cite{AMP_compact} applies AMP to group synchronization over any compact group. However, they have no corruption and only assume additive i.i.d.~Gaussian noise model, where they seek an asymptotic solution that is statistically optimal. 

  Another message passing algorithm \cite{Z2} was proposed for $\mathbb{Z}_2$ synchronization. It assigns probabilities of correct labeling to each node and each edge. These probabilities are iteratively passed and updated between nodes and edges until convergence. There are several drawbacks of this method. First of all, it cannot be generalized to other group synchronization problems. Second, its performance is worse than  SDP under high corruption \cite{Z2}. At last, no theoretical guarantee of exact recovery is established. We remark that this method is completely different from the method proposed here.

\section{Cycle Consistency is Essential for Group Synchronization}\label{sec:reform}
In this section, we establish a fundamental relationship between cycle consistency and group synchronization, while assuming the noiseless case.
We recall that $d_{\mathcal G}$ is a  bi-invariant metric on $G$ and that the diameter of $\mathcal G$ is 1, that is, $d_{\mathcal G}(\cdot\,,\cdot)\leq 1$.

Although the ultimate goal of this paper is to estimate group elements $\{g_i^*\}_{i\in [n]}$ from group ratios $\{g_{ij}\}_{ij\in E}$, we primarily focus on a variant of such a task. That is, estimating the corruption level
\begin{equation}
\label{eq:def_sij*}
s_{ij}^*= d_\mathcal G(g_{ij},g_{ij}^*),\quad ij\in E,
\end{equation}
from the cycle-inconsistency measure
\begin{equation}
\label{eq:def_dL}
d_L = d_{\mathcal G}(g_L, e_\mathcal G), \quad L\in \mathcal{C},
\end{equation}
where $\mathcal C$ is a set of cycles that are either randomly sampled or deterministically selected.
 We remark that in our setting, exact estimation of $\{s_{ij}^*\}_{ij\in E}$ is equivalent to exact recovery of $\{g_i^*\}_{i\in [n]}$. Proposition~\ref{prop:equiv}, which is proved in Appendix \ref{sec:prop_equiv}, clarifies this point. In Section~\ref{sec:postproc}
we discuss how to practically use $\{s_{ij}^*\}_{ij\in E}$ to infer $\{g_i^*\}_{i\in [n]}$ in more general settings.

\begin{proposition}\label{prop:equiv}
Assume data generated by the noiseless adversarial corruption model, where $G([n], E_g)$ is connected. Then the following problems are equivalent
\begin{enumerate}
    \item\label{item:Eg} Exact recovery of $E_g$;
    \item\label{item:sij} Exact estimation of  $\{s_{ij}^*\}_{ij\in E}$;
    \item\label{item:gi} Exact recovery of $\{g_i^*\}_{i\in [n]}$.
\end{enumerate}    
\end{proposition}

We first remark that, in practice, shorter cycles are preferable due to faster implementation and less uncertainties~\cite{Zach2010}, and thus when establishing the theory for CEMP in Sections \ref{sec:theory} and \ref{sec:uniform} we let $\mathcal C$ be the set of 3-cycles $\mathcal C_3$. However, we currently leave the general notation as our work extends to the more general case.

We further remark that for corruption estimation, only the set of real numbers $\{d_L\}_{L\in \mathcal C}$ is needed, which is simpler than the set of given group ratios $\{g_{ij}\}_{ij\in E}$. This may enhance the underlying statistical inference. 

We next explain why cycle-consistency information is essential for solving the problems of corruption estimation and group synchronization. 
Section \ref{sec:wp}
shows that under a certain condition the set of cycle-inconsistency measures, $\{d_L\}_{L\in\mathcal C}$, provides sufficient information for recovering corruption levels. Section \ref{sec:cc} shows that cycle consistency is closely related to group synchronization and plays a central role in its solution. It further explains that many previous works implicitly exploit cycle consistency information.
\subsection{Exact Recovery Relies on a Good-Cycle Condition}\label{sec:wp}
In general, it is not obvious that the set $\{d_L\}_{L\in \mathcal C}$ contains sufficient information for recovering $\{s_{ij}^*\}_{ij \in E}$. Indeed, the former set generally contains less information than the original input of our problem, $\{g_{ij}\}_{ij\in E}$. 
Nevertheless, Proposition \ref{prop:good cycle} implies that if 
every edge is contained in a good cycle (see formal definition below), then $\{d_L\}_{L\in \mathcal C}$ actually contains the set  $\{s_{ij}^*\}_{ij\in E}$. 
\begin{definition}[Good-Cycle Condition]\label{def:good cycle}
$G([n], E)$, $E_g$ and $\mathcal C$ satisfy the good-cycle condition if
for each $ij\in E$, there exists at least one cycle $L\in \mathcal C$ containing $ij$ such that $L\setminus \{ij\}\subseteq E_g$.
\end{definition}
\begin{proposition}\label{prop:good cycle}
Assume data generated by the noiseless adversarial corruption model, satisfying the good-cycle condition. Then, $s_{ij}^*=d_L$ $\forall ij \in E$,  $L \in \mathcal C$ such that $ij \in L$ and $L\setminus \{ij\}\subseteq E_g$.
\end{proposition}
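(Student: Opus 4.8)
The plan is to fix an arbitrary edge $ij\in E$ together with a cycle $L\in\mathcal C$ that contains $ij$ and satisfies $L\setminus\{ij\}\subseteq E_g$, and to compute $g_L$ explicitly by exploiting the noiseless model on the good edges. First I would observe that the cycle-inconsistency measure $d_L=d_{\mathcal G}(g_L,e_{\mathcal G})$ does not depend on the choice of starting vertex of the cycle: a cyclic shift replaces $g_L$ by a conjugate $h^{-1}g_L h$, and bi-invariance gives $d_{\mathcal G}(h^{-1}g_L h,e_{\mathcal G})=d_{\mathcal G}(g_L,e_{\mathcal G})$ (apply left-invariance with $h$ to get $d_{\mathcal G}(g_Lh,h)$, then right-invariance with $h^{-1}$). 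Hence, without loss of generality, I may relabel the vertices of $L$ as $v_0v_1\cdots v_{m-1}v_0$ so that $ij=v_0v_1$ is the first edge and every subsequent edge $v_kv_{k+1}$ is good.

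Next I would substitute the noiseless values on the good edges. Since the model with $g^\epsilon_{ij}=e_{\mathcal G}$ gives $g_{v_kv_{k+1}}=g^*_{v_kv_{k+1}}=g^*_{v_k}(g^*_{v_{k+1}})^{-1}$ for $k\geq 1$, I obtain
\begin{align*}
g_L &= g_{ij}\, g^*_{v_1v_2}g^*_{v_2v_3}\cdots g^*_{v_{m-1}v_0}\\
&= g_{ij}\,\big(g^*_{v_1}(g^*_{v_2})^{-1}\big)\big(g^*_{v_2}(g^*_{v_3})^{-1}\big)\cdots\big(g^*_{v_{m-1}}(g^*_{v_0})^{-1}\big).
\end{align*}
The telescoping product collapses to $g^*_{v_1}(g^*_{v_0})^{-1}=\big(g^*_{v_0}(g^*_{v_1})^{-1}\big)^{-1}=(g^*_{ij})^{-1}$, whence $g_L=g_{ij}(g^*_{ij})^{-1}$.

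Finally I would translate this identity into the claimed distance equality using bi-invariance once more. Applying right-invariance with the group element $g^*_{ij}$ gives
\[
d_L=d_{\mathcal G}\big(g_{ij}(g^*_{ij})^{-1},e_{\mathcal G}\big)=d_{\mathcal G}\big(g_{ij}(g^*_{ij})^{-1}g^*_{ij},\,g^*_{ij}\big)=d_{\mathcal G}(g_{ij},g^*_{ij})=s^*_{ij},
\]
which is exactly the assertion.

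I do not expect a genuine obstacle here, since the content is entirely algebraic. The only points that require care are (i) justifying that $d_L$ is independent of where the cycle is ``cut,'' which is precisely where bi-invariance (rather than merely one-sided invariance) is genuinely needed, and (ii) keeping the orientation of the edges consistent in the telescoping, so that the product of the good ratios is $(g^*_{ij})^{-1}$ rather than $g^*_{ij}$; in connection with the latter one should also note that the corruption level is orientation-symmetric, $s^*_{ij}=s^*_{ji}$, which again follows from the same bi-invariance manipulation applied to $d_{\mathcal G}(g_{ij}^{-1},(g^*_{ij})^{-1})$.
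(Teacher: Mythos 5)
Your proof is correct and follows essentially the same route as the paper's: write the cycle starting at the edge $ij$, substitute the true ratios on the good edges so that the remainder of the product collapses to $(g^*_{ij})^{-1}$ (the paper invokes the cycle consistency \eqref{eq:cyclecons} where you telescope explicitly, but these are the same computation), and then use right-invariance of $d_{\mathcal G}$ to conclude $d_L=d_{\mathcal G}(g_{ij},g^*_{ij})=s^*_{ij}$. Your additional remarks---that bi-invariance makes $d_L$ independent of where the cycle is cut, and that $s^*_{ij}=s^*_{ji}$---are correct points of care that the paper handles implicitly by simply writing $L$ with $ij$ as its first edge.
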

\begin{proof}
Fix $ij \in E$ and let  $L=\{ij,jk_1,k_1k_2,k_2k_3,\dots,k_mi\} \ni ij$ be a good cycle, i.e., $L\setminus \{ij\}\subseteq E_g$. Applying the definitions of $d_L$ and then $g_L$, next right multiplying with $g_{ij}^*$ while using the bi-invariance of $d_\mathcal G$, then applying  \eqref{eq:cyclecons} and at last using the definition of $s_{ij}^*$, yield 
\begin{multline*}
    d_L=d_\mathcal G(g_L,e_\mathcal G )=d_\mathcal G(g_{ij}g_{jk_1}^*\cdots g_{k_mi}^*,e_\mathcal G )\\ = d_\mathcal G(g_{ij}g_{jk_1}^* \cdots g_{k_mi}^*g_{ij}^*,g_{ij}^* )
=d_\mathcal G(g_{ij},g_{ij}^*)=s_{ij}^*.
\end{multline*}
\qed
\end{proof}

We formulate a stronger
quantitative version of Proposition~\ref{prop:good cycle}, which we frequently use in 
establishing our exact recovery theory. We prove it in Appendix~\ref{sec:appbi}.
\begin{lemma}\label{lemma:bi}
For all $ij\in E$ and any cycle 
$L$ containing  $ij$ in 
$G([n], E)$,
\begin{align*}
|d_L-s_{ij}^*|\leq \sum_{ab\in L\setminus\{ij\}}s_{ab}^*.
\end{align*}
\end{lemma}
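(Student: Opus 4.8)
The plan is to reduce the two-sided bound to a one-sided distance estimate via the reverse triangle inequality, and then to obtain that estimate by swapping the measured group ratios $g_{ab}$ for their true counterparts $g_{ab}^*$ one edge at a time, charging each swap to the corresponding corruption level $s_{ab}^*$ through bi-invariance.

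First I would fix a cycle $L=\{ij, jk_1, k_1k_2,\dots,k_mi\}$ and introduce the hybrid element $h=g_{ij}g_{jk_1}^*g_{k_1k_2}^*\cdots g_{k_mi}^*$, in which only the factor on edge $ij$ retains its measured value. Exactly as in the proof of Proposition~\ref{prop:good cycle}, cycle consistency \eqref{eq:cyclecons} gives $g_{jk_1}^*\cdots g_{k_mi}^*=(g_{ij}^*)^{-1}$, so $h=g_{ij}(g_{ij}^*)^{-1}$ and hence $d_{\mathcal G}(h,e_\mathcal G)=d_{\mathcal G}(g_{ij},g_{ij}^*)=s_{ij}^*$ by right-invariance. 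Since $d_L=d_{\mathcal G}(g_L,e_\mathcal G)$, the reverse triangle inequality yields $|d_L-s_{ij}^*|=|d_{\mathcal G}(g_L,e_\mathcal G)-d_{\mathcal G}(h,e_\mathcal G)|\le d_{\mathcal G}(g_L,h)$, so it remains to bound $d_{\mathcal G}(g_L,h)$ by $\sum_{ab\in L\setminus\{ij\}}s_{ab}^*$.

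For that I would interpolate between $h$ and $g_L$. Writing the non-$ij$ edges in order as $f_1,\dots,f_m$, define the partial products $P_t=g_{ij}\,g_{f_1}\cdots g_{f_t}\,g_{f_{t+1}}^*\cdots g_{f_m}^*$, so that $P_0=h$ and $P_m=g_L$. Consecutive terms $P_{t-1}$ and $P_t$ differ only in the single factor on edge $f_t$ (starred versus measured), flanked on the left by a common word $A$ and on the right by a common word $B$; applying left-invariance to cancel $A$ and right-invariance to cancel $B$ gives $d_{\mathcal G}(P_{t-1},P_t)=d_{\mathcal G}(g_{f_t}^*,g_{f_t})=s_{f_t}^*$. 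Summing over $t$ with the triangle inequality then gives $d_{\mathcal G}(g_L,h)\le\sum_{t=1}^m s_{f_t}^*=\sum_{ab\in L\setminus\{ij\}}s_{ab}^*$, which completes the argument.

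I do not expect a genuinely hard step here; the work is entirely bookkeeping. The one point requiring care is isolating each single-edge swap so that both the left cancellation (left-invariance) and the right cancellation (right-invariance) apply simultaneously, which is precisely what the telescoping decomposition $P_0,\dots,P_m$ is designed to guarantee. I would also double-check the orientation and labeling conventions of the cycle so that the product $g_L$ matches \eqref{eq:defgL} and that the inverse identity coming from cycle consistency is used with the correct handedness.
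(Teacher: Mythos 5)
Your proposal is correct and follows essentially the same route as the paper's proof: the reverse triangle inequality against the hybrid element $g_{ij}(g_{ij}^*)^{-1}$ (whose distance to $e_\mathcal G$ is $s_{ij}^*$ by cycle consistency and invariance), followed by a telescoping sum of single-edge swaps, each charged to $s_{ab}^*$ via bi-invariance. The only cosmetic difference is that the paper first left-cancels $g_{ij}$ before telescoping, while you telescope on $d_{\mathcal G}(g_L,h)$ directly; the steps are in exact correspondence.
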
 
 
\subsection{A Natural Mapping of Group Elements onto Cycle-Consistent Ratios}\label{sec:cc}
Another reason for exploiting the cycle consistency constraint \eqref{eq:cyclecons} is its crucial connection to group synchronization. Before stating the relationship clearly, we define the following notation.

Denote by $(g_i)_{i\in [n]}\in \mathcal G^n$ and $(g_{ij})_{ij\in E}\in \mathcal G^{|E|}$ the elements of the product spaces $\mathcal{G}^{n}$ and $\mathcal{G}^{|E|}$, respectively.
We say that $(g_i)_{i\in [n]}$ and $(g_i')_{i\in [n]}$ are equivalent, which we denote by $(g_i)_{i\in [n]}\sim (g_i')_{i\in [n]}$, if there exists $g_0\in\mathcal G$ such that $g_i=g_i'g_0$ for all $i\in [n]$. This relationship induces an equivalence class $[(g_i)_{i\in [n]}]$ for each $(g_i)_{i\in [n]}\in \mathcal G^n$. In other words, each $[(g_i)_{i\in [n]}]$ is an element of the quotient space $\eqc{\mathcal G^n }$.
We define the set of cycle-consistent $(g_{ij})_{ij\in E}$ with respect to $\mathcal C$ by \[\mathcal G_\mathcal C=\{(g_{ij})_{ij\in E} \in \mathcal G^{|E|}: g_L=e_{\mathcal G}, \forall L\in \mathcal C\}.\]

The following proposition demonstrates a bijection between the group elements and cycle-consistent group ratios. Its proof is included in Appendix \ref{sec:proofs_bijection}.

\begin{proposition}\label{prop:cycle}
Assume that $G([n],E)$ is connected and any $ij\in E$ is contained in at least one cycle in $\mathcal C$.  Then, $h: \eqc{\mathcal G^n} \rightarrow\mathcal G_\mathcal C$ defined by $h([(g_{i})_{i\in [n]}])=(g_ig_j^{-1})_{ij\in E}$ is a bijection. 
\end{proposition}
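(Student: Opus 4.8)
The plan is to verify the three standard ingredients of a bijection: that $f$ is well-defined on equivalence classes and takes values in $\mathcal G_\mathcal C$, that it is injective, and that it is surjective. Well-definedness and the range statement are immediate. If $(g_i)_{i\in[n]} \sim (g_i')_{i\in[n]}$, say $g_i = g_i' g_0$ for a common $g_0$, then $g_i g_j^{-1} = g_i' g_0 g_0^{-1} g_j'^{-1} = g_i' g_j'^{-1}$, so $f([(g_i)_{i\in[n]}])$ does not depend on the chosen representative. To see that its value lies in $\mathcal G_\mathcal C$, take any cycle $L = \{i_1i_2,\ldots,i_mi_1\}\in\mathcal C$ and form $g_L$ from the ratios $g_{ab}=g_ag_b^{-1}$: the product telescopes, $(g_{i_1}g_{i_2}^{-1})(g_{i_2}g_{i_3}^{-1})\cdots(g_{i_m}g_{i_1}^{-1}) = e_{\mathcal G}$, so the cycle-consistency constraint $g_L = e_\mathcal G$ holds for every $L\in\mathcal C$.

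For injectivity I would use connectivity. Suppose $f([(g_i)_{i\in[n]}]) = f([(g_i')_{i\in[n]}])$, that is, $g_ig_j^{-1} = g_i'g_j'^{-1}$ for all $ij\in E$. Left-multiplying by $g_i'^{-1}$ and right-multiplying by $g_j$ yields $g_i'^{-1}g_i = g_j'^{-1}g_j$ on every edge, so the element $g_i'^{-1}g_i$ is constant along edges; since $G([n],E)$ is connected, it equals a single $g_0\in\mathcal G$ independent of $i$. Then $g_i = g_i'g_0$ for all $i$, i.e.\ $(g_i)_{i\in[n]}\sim(g_i')_{i\in[n]}$, so the two classes coincide and $f$ is injective.

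The main work, and the main obstacle, is surjectivity. Given $(g_{ij})_{ij\in E}\in\mathcal G_\mathcal C$ I would reconstruct potentials along a spanning tree: connectivity provides a spanning tree $T$; root it at vertex $1$, set $h_1 = e_\mathcal G$, and for each vertex $i$ define $h_i$ by multiplying the given ratios along the unique tree path from $1$ to $i$, with the orientation chosen so that a telescoping identity gives $h_ih_j^{-1} = g_{ij}$ on every tree edge. By construction $f([(h_i)_{i\in[n]}])$ already agrees with $(g_{ij})_{ij\in E}$ on all edges of $T$, so it remains only to check agreement on each non-tree edge $ij$. Writing the defect as $h_ih_j^{-1}g_{ij}^{-1}$, one sees it equals the holonomy of $(g_{ab})$ around the fundamental cycle determined by $ij$ and the tree path joining $i$ and $j$, and the goal is to show this holonomy is trivial.

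Collapsing that defect is exactly where cycle consistency must be invoked, and it is the delicate step: the fundamental cycle of $ij$ need not itself belong to $\mathcal C$, so one cannot simply quote $g_L = e_\mathcal G$. My plan is to use the hypothesis that $ij$ lies in some $L\in\mathcal C$ and to propagate the constraints $\{g_L = e_\mathcal G\}_{L\in\mathcal C}$ across the cycles of $\mathcal C$ — effectively arguing that $\mathcal C$-consistency already forces consistency around the fundamental cycles of $T$ — so that the holonomy vanishes and $h_ih_j^{-1} = g_{ij}$ for every edge. I expect this propagation to be the crux: it is immediate when the remaining edges of the chosen $L$ are tree edges, but in general it requires chaining several consistency constraints, and one must ensure the cycles in $\mathcal C$ are rich enough to reach every non-tree edge. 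Once the defect is shown to vanish we obtain $f([(h_i)_{i\in[n]}]) = (g_{ij})_{ij\in E}$, establishing surjectivity and hence that $f$ is a bijection.
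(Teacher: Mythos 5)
Your treatment of well-definedness, of the range statement, and of injectivity is correct and matches the substance of the paper's argument (the paper packages everything as an explicit two-sided inverse $f^{-1}((g_{ij})_{ij\in E})=[(g_{L_{ik}}g_k)_{i\in[n]}]$, where $g_{L_{ik}}$ is the product of the given ratios along a path from $i$ to $k$ --- the same potential construction you set up on a spanning tree). The problem is surjectivity, which you leave as an acknowledged plan to ``propagate'' the constraints of $\mathcal C$ to the fundamental cycles. This is a genuine gap, and in fact it cannot be closed under the stated hypothesis: the propagation you hope for is impossible in general. Take $G$ to be the triangular prism (triangles $123$ and $456$, matching edges $14,25,36$) and let $\mathcal C$ consist of the three quadrilaterals $1254$, $2365$, $3146$. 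Then $G$ is connected and every edge lies in at least one cycle of $\mathcal C$, yet for $\mathcal G=\mathbb Z_2$ the assignment $g_{12}=g_{45}=-1$ and $g_{ij}=1$ on all other edges satisfies $g_L=e_{\mathcal G}$ on all three quadrilaterals while the triangle $123$ has holonomy $-1$; so this element of $\mathcal G_{\mathcal C}$ is not of the form $(g_ig_j^{-1})_{ij\in E}$ and $f$ is not surjective. The three quadrilaterals cover every edge but span only a three-dimensional subspace of the four-dimensional cycle space --- exactly the failure mode you flagged when you worried whether ``the cycles in $\mathcal C$ are rich enough to reach every non-tree edge.''

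The conclusion is that the missing step is not a technicality but a missing hypothesis: what is needed is not that every edge lies in some cycle of $\mathcal C$, but that the cycles in $\mathcal C$ generate all cycle holonomies (for instance, that $\mathcal C$ contains a fundamental cycle basis of some spanning tree, or more generally that the closed walks in $\mathcal C$ generate the fundamental group of $G$). Under such a hypothesis your argument closes immediately: each non-tree defect $h_ih_j^{-1}g_{ij}^{-1}$ is a product of conjugates of holonomies of cycles in $\mathcal C$, all trivial. You should also be aware that the paper's own proof contains the same lacuna: the chain of equalities ends with $g_{L_{ik}}g_{L_{jk}}^{-1}=g_{ij}$, justified only by an appeal to ``the cycle consistency constraints for cycles in $\mathcal C$,'' and even the well-definedness of $f^{-1}$ (independence of the choice of path $L_{ik}$) presupposes trivial holonomy around arbitrary cycles. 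So your refusal to take that step for granted identifies a real weakness of the proposition as stated, but your proposal as written does not prove it either.
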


\begin{remark}
The function $f$ is an isomorphism, that is, 
$$h([(g_{i})_{i\in [n]}]\cdot[(g_{i}')_{i\in [n]}])=(g_ig_j^{-1})_{ij\in E}\cdot (g_i'g_j^{\prime -1})_{ij\in E}, $$
if and only if $\mathcal G$ is Abelian.
Indeed, if $\mathcal G$ is Abelian the above equation is obvious. If the above equation holds $\forall   (g_i)_{i\in [n]}$, 
$(g_i')_{i\in [n]}\in \mathcal G^n$, 
then 
$g_ig_i'g_j^{\prime -1}g_j^{-1}=g_ig_j^{-1}g_i'g_j^{\prime -1}$  
$\, \forall (g_i)_{i\in [n]}$, 
$(g_i')_{i\in [n]}\in \mathcal G^n$. 
Letting $g_i=g_j'=e_\mathcal G$ yields that $g_i'g_j^{-1}=g_j^{-1}g_i'$ $\, \forall g_i', g_j \in \mathcal G$, and thus $\mathcal G$ is Abelian.

\end{remark}

\begin{remark}
The condition on $\mathcal C$ of Proposition \ref{prop:cycle} holds under the good-cycle condition.
\end{remark}

This proposition signifies that previous works on group synchronization implicitly enforce cycle consistency information. Indeed, consider the formulation in \eqref{eq:gs} that searches for $(g_{i})_{i\in [n]} \in \mathcal G^n$ (more precisely, $[(g_{i})_{i\in [n]}] \in \eqc{\mathcal G^n}$) that minimize a function of $\{ d_\mathcal G(g_{ij},g_ig_j^{-1}) \}_{ij\in E}$. In view of the explicit expression for the bijection $f$
in Proposition \ref{prop:cycle}, this is equivalent to finding the closest cycle-consistent group ratios $(g_{ij}')_{ij\in E}\in \mathcal G_\mathcal C$ to the given group ratios   $(g_{ij})_{ij\in E}$. However, direct solutions of \eqref{eq:gs} are hard and proposed algorithms often relax the original minimization problem and thus their relationship with cycle-consistent group ratios may not be clear.
A special case that may further demonstrate the implicit use of cycle-consistency in group synchronization is when using $\rho(\cdot)=\|\cdot\|_0$ (that is, $\rho$ is the number of non-zero elements) in \eqref{eq:gs}. 
We note that this formulation asks  to minimize among ${g_i\in \mathcal G}$ the number of non-zero elements in $(d_\mathcal G(g_{ij},g_ig_j^{-1}))_{ij\in E}$.
By Proposition~\ref{prop:cycle}, it is equivalent to minimizing among $(g'_{ij})_{ij\in E} \in \mathcal{G}_\mathcal{C}$ the number of elements in $\{ij \in E: g_{ij} \neq g'_{ij}\}$, or similarly, maximizing the number of elements in $\{ij \in E: g_{ij} = g'_{ij}\}$.
Thus the problem can be formulated as finding the maximal $E'\subseteq E$ such that $\{g_{ij}\}_{ij\in E'}$ is cycle-consistent. If the maximal set is $E_g$, which makes the problem well-defined, then in view of Proposition~\ref{prop:equiv}, its recovery is equivalent with exact recovery of $\{s_{ij}^*\}_{ij\in E}$.

\section{Cycle-Edge Message Passing (CEMP)}\label{sec:cemp}
We describe CEMP and explain the underlying statistical model that motivates the algorithm.  Section~\ref{sec:CEG} defines the cycle-edge graph (CEG) that will be used to describe the message passing procedure. Section~\ref{sec:des} describes CEMP and discusses at length its interpretation and some of its properties. Section~\ref{sec:compare} compares CEMP with BP, AMP and IRLS.
\subsection{Cycle-Edge Graph}
\label{sec:CEG}
We define the notion of a cycle-edge graph (CEG), which is analogous to the factor graph in belief propagation. We also demonstrate it in Figure \ref{fig:ceg}. Given the graph $G([n], E)$ and a set of cycles $\mathcal C$, the corresponding cycle-edge graph $G_{CE}(V_{CE}, E_{CE})$ is formed in the following way.
\begin{enumerate}
\item The set of vertices in $G_{CE}$ is $V_{CE}=\mathcal C\cup E$. All $L\in \mathcal C$ are called cycle nodes and all $ij\in E$ are called edge nodes.
\item 
$G_{CE}$ is a bipartite graph, where
the set of edges in $G_{CE}$ is all the pairs $(ij,L)$ such that $ij\in L$ in the original graph $G([n], E)$.
\end{enumerate}

\begin{figure}[htbp]
\begin{center}
   \includegraphics[width=0.9\linewidth]{./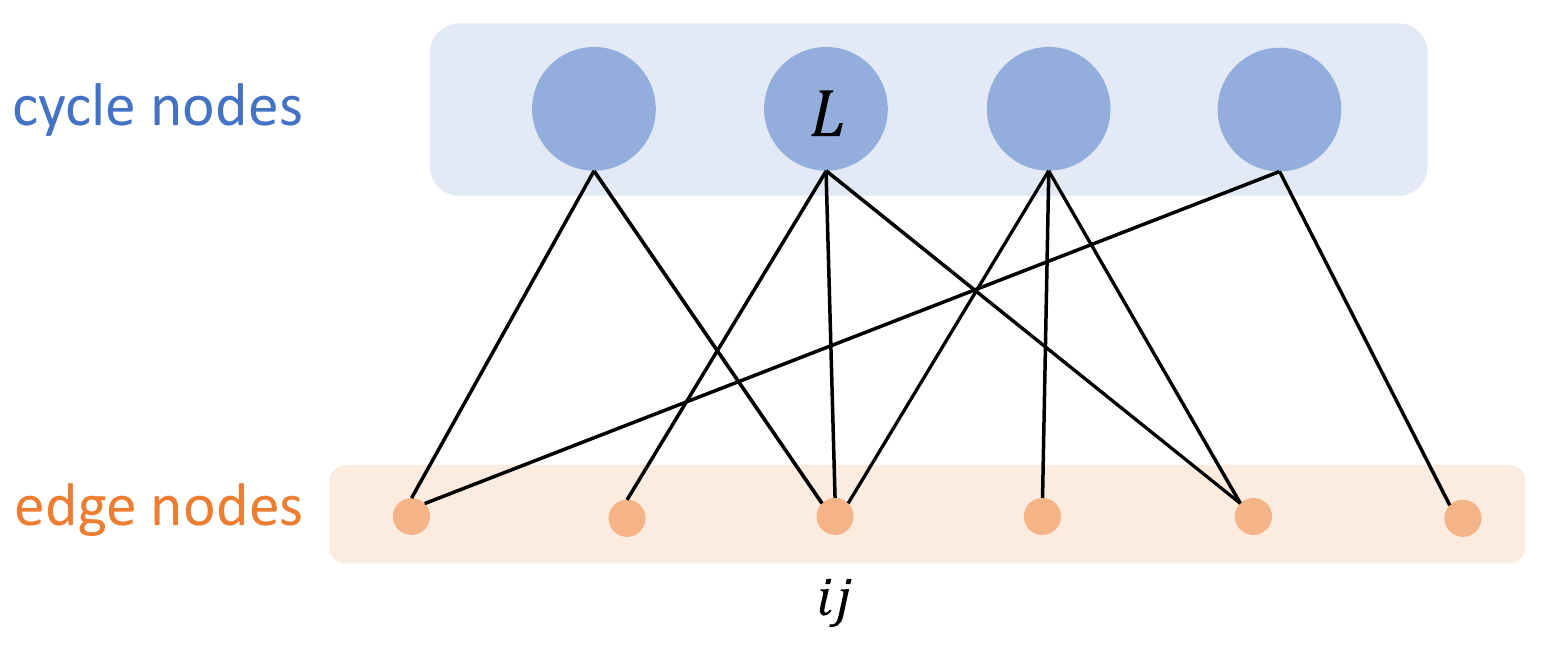}
\end{center}
   \caption{An illustration of the cycle-edge graph. Cycle node $L$ and edge node $ij$ are connected if $ij\in L$ in the original graph $G([n], E)$.\label{fig:ceg}}
\end{figure}

For each cycle node $L$ in $G_{CE}$, the set of its neighboring edge nodes in $G_{CE}$ is $N_L=\{ij\in E: ij\in L\}$. We can also describe it as the set of edges contained in $L$ in the original graph $G([n], E)$. We remark that we may treat edges and cycles as elements of either $G_{CE}$ or $G([n], E)$ depending on the context. 
For each edge node $ij$ in $G_{CE}$, the set of its neighboring cycle nodes in $G_{CE}$ is
$N_{ij} = \{L \in \mathcal C : ij\in L\}$. Equivalently, it it is the set of cycles containing  $ij$ in the original graph $G([n], E)$.

\subsection{Description of CEMP}\label{sec:des}
Given relative measurements $(g_{ij})_{ij\in E}$ with respect to a graph $G([n],E)$, the CEMP algorithm tries to estimate the corruption levels $s_{ij}^*$, $ij\in E$, defined in \eqref{eq:def_sij*} by using the inconsistency measures $d_L$, $L\in \mathcal C$, defined in \eqref{eq:def_dL}. It does it iteratively, where we denote by $s_{ij}(t)$ the estimate of $s_{ij}^*$ at iteration $t$. Algorithm~\ref{alg:cemp} sketches CEMP and Figure~\ref{fig:cemp_illustration} illustrates its main idea. We note that Algorithm~\ref{alg:cemp} has the following stages: 1) generation of CEG (which is described in Section \ref{sec:CEG}); 2) computation of the cycle inconsistency measures (see \eqref{eq:compute_cycle_con}); 3) corruption level initialization for message passing (see \eqref{eq:initialize_wij}); 4) message passing from edges to cycles (see \eqref{eq:from_edge_to_cycle}); and 5) message passing from cycles to edges (see \eqref{eq:from_cycle_to_edge}).

\begin{algorithm}[ht]
\caption{Cycle-Edge Message Passing (CEMP)}\label{alg:cemp}
\begin{algorithmic}
\REQUIRE graph $G([n],E)$, relative measurements $(g_{ij})_{ij\in E}$, choice of metric $d_{\mathcal G}$, the  set of sampled/selected cycles $\mathcal C$ (default: $\mathcal C=\mathcal C_3$), total time step $T$, increasing parameters $\{\beta_t\}_{t=0}^T$ (theoretical choices are discussed in Sections \ref{sec:theory}
and \ref{sec:uniform}), reweighting function 
\begin{equation}
\label{eq:choice_f}
f(x;\beta_t)= \mathbf{1}_{\{x\leq\frac{1}{\beta_t}\}} \ \text{ or } \ f(x;\beta_t)= e^{-\beta_t x}
\end{equation}
\STATE \textbf{Steps:}
\STATE Generate CEG from $G([n], E)$ and $\mathcal C$
     \FOR {$ij\in E$ and $L\in N_{ij}$}
\STATE  \begin{equation} d_L = d_{\mathcal G}(g_L, e_\mathcal G) \label{eq:compute_cycle_con} \end{equation}
\ENDFOR
\FOR {$ij\in E$}
\STATE \begin{equation} s_{ij}(0) = \frac{1}{|N_{ij}|} \sum_{L\in N_{ij}} d_L\label{eq:initialize_wij} \end{equation}
\ENDFOR
\FOR {$t=0:T$}
\FOR {$ij\in E$ and $L\in N_{ij}$}
\STATE \begin{equation} w_{ij,L}(t) = \frac{1}{Z_{ij}(t)}\prod_{ab\in N_L\setminus \{ij\}}f(s_{ab}(t);\beta_t), \quad Z_{ij}(t)=\sum_{L\in N_{ij}} \prod_{ab\in N_L\setminus \{ij\}}f(s_{ab}(t);\beta_t)
\label{eq:from_edge_to_cycle}
\end{equation}
\ENDFOR
\FOR {$ij\in E$}
\STATE \begin{equation} \label{eq:from_cycle_to_edge} s_{ij}(t+1)=\sum_{L\in N_{ij}} w_{ij,L}(t)d_L \end{equation}
\ENDFOR

\ENDFOR

\ENSURE $\left(s_{ij}(T)\right)_{ij\in E}$
\end{algorithmic}\label{alg:iraab}
\end{algorithm}

\begin{figure}[htbp]
\begin{center}
   \includegraphics[width=1\linewidth]{./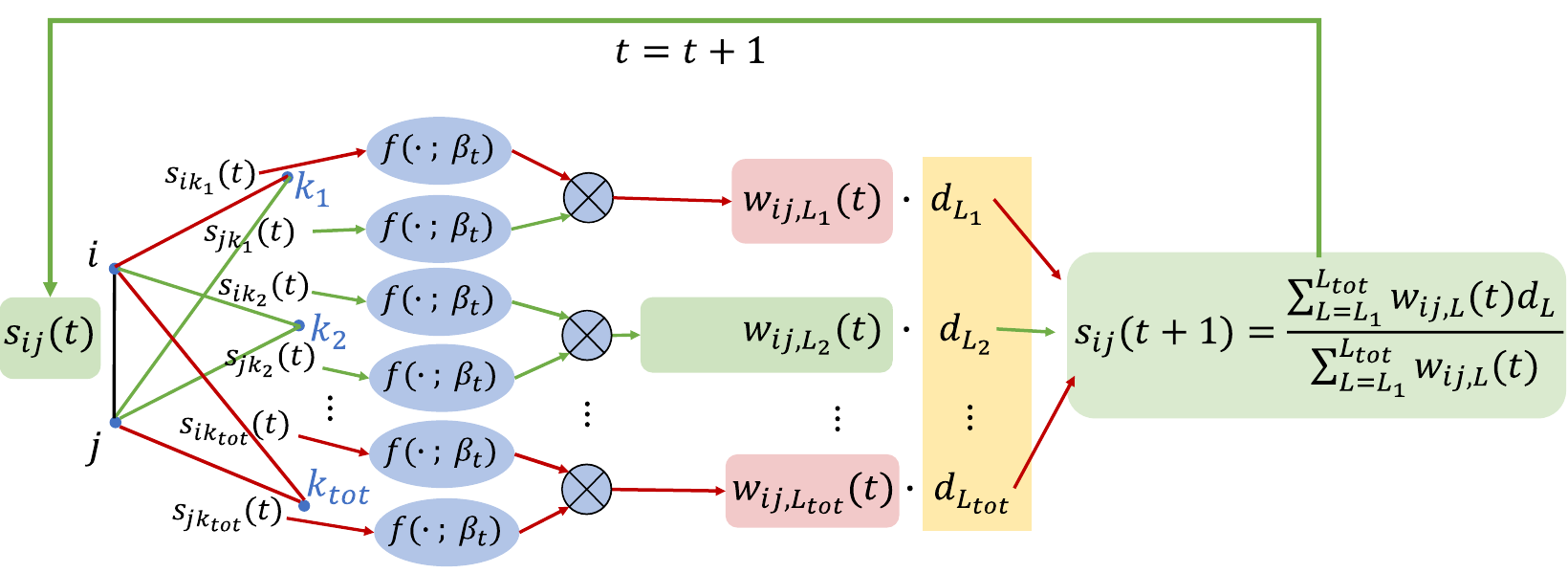}
\end{center}
   \caption{Illustration of CEMP with 3-cycles. The good and bad edges are respectively marked as green and red. For the  edge $ij$ and any 3-cycle $L = \{ij,jk,ki\}$, CEMP computes $w_{ij,L}(t)$, an 
   estimate for the probability that $L$ is good given estimates of the corruption levels for edges $ik$ and $ij$, $s_{ik}(t)$ and $s_{jk}(t)$. Note that $\{w_{ij,L}(t)\}_{L\in N_{ij}}$ (normalized so that its sum is 1) is a discrete distribution on the set of cycle inconsistencies, $\{d_{ij,L}\}_{L\in N_{ij}}$. This distribution aims to emphasize good cycles $L$ with respect to the edge $ij$. For example, for the good cycle w.r.t.~$ij$, $L_2=\{i,j,k_2\}$, both $f(s_{ik_2}(t),\beta_t)$ and $f(s_{jk_2}(t),\beta_t)$ are expected to be relatively high (since the edges $ik_2$ and $jk_2$ are good); thus the weight $w_{ij,L_2}(t)=f(s_{ik_2}(t),\beta_t) \cdot f(s_{jk_2}(t),\beta_t)$ is relatively high. On the other hand, $f(s_{ik_2}(t),\beta_t)$ is expected to be relatively low (since $ik_2$ is bad) and thus the weight $w_{ij,L_2}(t)$ is relatively low. At last, an estimate of $s_{ij}(t+1)$ is obtained by a
   weighted average that uses the weights $\{w_{ij,L}(t)\}_{L\in N_{ij}}$. 
     }\label{fig:cemp_illustration}
\end{figure}

The above first three steps of the algorithms are straightforward. In order to explain the last two steps we introduce some notation in Section \ref{sec:steps_notation}. Section \ref{sec:fifth} explains the fourth step of CEMP and for this purpose it introduces a statistical model. We emphasize that this model and its follow-up extensions are only used for clearer interpretation of CEMP, but are not used in our theoretical guarantees. 
Section \ref{sec:fourth_probabilistic} explains the fifth step of CEMP using this model with additional two assumptions. Section \ref{sec:summarizing_diagram} summarizes the basic insights about CEMP in a simple diagram.  Section \ref{sec:cemp_functions} interprets the use of two specific reweighting functions in view of the statistical model (while extending it).
Section \ref{sec:practical} explains why the exponential reweighting function is preferable in practice.
Section \ref{sec:computational} clarifies the computational complexity of CEMP. 
Section \ref{sec:postproc} explains how to post-process CEMP in order to recover the underlying group elements (and not just the corruption levels) in general settings.

We remark that we separate the fourth and fifth steps of CEMP for clarity of presentation, however, one may combine them 
using a single loop that computes for each $ij \in E$ 
\begin{align}\label{eq:update}
    s_{ij}(t+1)=\frac{\sum\limits_{L\in N_{ij}}\prod\limits_{ab\in N_L\setminus \{ij\}}f(s_{ab}(t);\beta_t) \cdot d_L}{\sum\limits_{L\in N_{ij}} \prod\limits_{ab\in N_L\setminus \{ij\}}f(s_{ab}(t);\beta_t)}.
\end{align}
For $\mathcal C=\mathcal C_3$, the update rule \eqref{eq:update} can be further simplified (see \eqref{eq:updateA} and \eqref{eq:updateB}).

\subsubsection{Notation}
\label{sec:steps_notation}
Let $D_{ij}=\{d_L:L\in N_{ij}\}$ denote the set of inconsistencies levels with respect to $ij$, $G_{ij}=\{L\in N_{ij}: N_L\setminus\{ij\}\subseteq E_g\}$ denote the set of ``good cycles'' with respect to $ij$, and $CI_{ij}=\{L\in N_{ij}: d_L=s_{ij}^*\}$ denote the set of cycles with correct information of corruption with respect to $ij$.

\subsubsection{Message Passing from Edges to Cycles  and a Statistical Model}
\label{sec:fifth}
Here we explain the fourth step of the algorithm, which estimates $w_{ij,L}(t)$ according to \eqref{eq:from_edge_to_cycle}. 
We remark that $Z_{ij}(t)$  is the normalization factor assuring that $\sum_{L\in N_{ij}}w_{ij,L}(t)=1$.

In order to better interpret our procedure, we propose a statistical model. We assume that $\{s_{ij}^*\}_{ij \in E}$ and $\{s_{ij}(t)\}_{ij \in E}$ are both i.i.d.~random variables and that for any $ij \in E$, $s_{ij}^{*}$ is independent of $s_{kl}(t)$ for $kl \neq ij \in E$. We further assume that  
\begin{equation}
\label{eq:pr_f}
\Pr(s_{ab}^*=0|s_{ab}(t)=x) = f(x;\beta_t).
\end{equation}
Unlike common message passing models, we do not need to specify other probabilities, such as joint densities.
In view of these assumptions,  \eqref{eq:from_edge_to_cycle} can be formally rewritten as
\begin{equation}\label{eq:wp}
w_{ij,L}(t) = \frac{1}{Z_{ij}(t)}\prod_{ab\in N_L\setminus \{ij\}}\Pr\left(s_{ab}^*=0\,|\,s_{ab}(t)\right) = \frac{1}{Z_{ij}(t)}\prod_{ab\in N_L\setminus \{ij\}} f(s_{ab}(t);\beta_t).
\end{equation}
We note that the choices for $f(x;\beta_t)$ in \eqref{eq:choice_f} lead to the following update rules:
\begin{align}
\text{Rule \ith:}&\quad w_{ij,L}(t)=\frac{1}{Z_{ij}(t)}\mathbf{1}_{\left\{\max\limits_{ab\in N_L\setminus\{ij\}}s_{ab}(t)\leq \frac{1}{\beta_t}\right\}}\label{eq:ith}\\
\text{Rule \er:}&\quad w_{ij,L}(t)=\frac{1}{Z_{ij}(t)}\exp\left(-\beta_t\sum\limits_{ab\in N_L\setminus\{ij\}}s_{ab}(t)\right).\label{eq:er}
\end{align}
 We refer to CEMP with rules \ith and \er as CEMP-A and CEMP-B, respectively. 

Given this statistical model, in particular, using the i.i.d.~property of $s_{ij}^*$ and $s_{ij}(t)$, the update rule~\eqref{eq:wp} can be rewritten as
\begin{align}
w_{ij,L}(t)&=\frac{1}{Z_{ij}(t)}\Pr\left(\left(s_{ab}^*\right)_{ab\in N_L\setminus \{ij\}}=\b0 \Big|\left(s_{ab}(t)\right)_{ab\in N_L\setminus \{ij\}}\right)\nonumber\\
&=\frac{1}{Z_{ij}(t)}\Pr\left(N_L\setminus \{ij\}\subseteq E_g \Big|\left(s_{ab}(t)\right)_{ab\in N_L\setminus \{ij\}}\right)\nonumber\\
&=\frac{1}{Z_{ij}(t)}\Pr\left(L\in G_{ij} \Big|\left(s_{ab}(t)\right)_{ab\in N_L\setminus \{ij\}}\right).\label{eq:wij}
\end{align}

Finally, we use the above new interpretation of the weights to demonstrate a natural fixed point of the update rules \eqref{eq:from_cycle_to_edge} and \eqref{eq:wij}. Theory of convergence to this fixed point is presented later in Section~\ref{sec:theory}.
We first note that \eqref{eq:from_cycle_to_edge} implies the following ideal weights for good approximation:
\begin{equation}w_{ij,L}^{*} = \frac{1}{|G_{ij}|}\mathbf 1_{\{L\in G_{ij}\}} \ \text{ for }  {ij\in E} \text{ and } L\in N_{ij}. \label{eq:wstar_def}\end{equation}
Indeed,
\begin{equation} \label{eq:wstar_after_def}
\sum_{L\in N_{ij}} w_{ij,L}^{*}d_L = \frac{1}{|G_{ij}|} \sum_{L\in N_{ij}} \mathbf 1_{\{L\in G_{ij}\}} d_L =
\frac{1}{|G_{ij}|} \sum_{L\in G_{ij}} d_L = \frac{1}{|G_{ij}|} \sum_{L\in G_{ij}} s_{ij}^* = s_{ij}^*,
\end{equation}
where the equality before last uses Proposition~\ref{prop:good cycle}. 
We further note that
\begin{equation}
w_{ij,L}^{*}=\frac{1}{Z_{ij}^{*}}\Pr\left(L\in G_{ij} \Big|\left(s^*_{ab}\right)_{ab\in N_L\setminus \{ij\}}\right).\label{eq:wij_star}
\end{equation}
This equation follows from the fact that the events $L\in G_{ij}$ and $s_{ab}^{*} = 0$  $\forall ab\in N_L\setminus \{ij\}$ coincide, and thus \eqref{eq:wstar_def} and \eqref{eq:wij_star} are equivalent, where
the normalization factor ${Z_{ij}^{*}}$ equals $|G_{ij}|$.
Therefore, in view of  \eqref{eq:from_cycle_to_edge} and \eqref{eq:wstar_after_def} as well as \eqref{eq:wij} and \eqref{eq:wij_star}, 
$((s_{ij}^{*})_{ij\in E}, (w_{ij,L}^{*})_{ij\in E, L \in N_{ij}})$ is a fixed point of the system of the update rules \eqref{eq:from_cycle_to_edge} and \eqref{eq:wij}.

\subsubsection{Message Passing from Cycles to Edges and Two Additional Assumptions}
\label{sec:fourth_probabilistic}
Here we explain the fifth step of the algorithm, which estimates, at iteration
$t$, $s_{ij}^*$ according to \eqref{eq:from_cycle_to_edge}. 
We further assume the good-cycle condition and that $G_{ij} = CI_{ij}$. We remark that the first assumption implies that 
$G_{ij}\subseteq CI_{ij}$ according to Proposition~\ref{prop:good cycle}, but it does not imply that $G_{ij} \supseteq CI_{ij}$. The first assumption, which we can state as $G_{ij} \neq \emptyset$, also implies that $CI_{ij}\neq \emptyset$, or equivalently,
\begin{equation}\nonumber\label{eq:Dij}
s_{ij}^*\in D_{ij}, \quad  ij\in E.
\end{equation}
This equation 
suggests an estimation procedure of $\{s_{ij}^*\}_{ij \in E}$.
One may greedily search for $s_{ij}^*$ among all elements of $D_{ij}$, but this is a hard combinatorial problem.
Instead, \eqref{eq:from_cycle_to_edge} relaxes this problem and searches over the convex hull of $D_{ij}$, using a weighted average.

We further interpret \eqref{eq:from_cycle_to_edge} in view of the above statistical model. 
Applying the assumption $G_{ij}=CI_{ij}$, we rewrite \eqref{eq:wij} as 
 \begin{equation}
 w_{ij,L}(t)=\frac{1}{Z_{ij}(t)}\Pr\left(s_{ij}^*=d_L \Big|\left(s_{ab}(t)\right)_{ab\in N_L\setminus \{ij\}}\right).\label{eq:ws}
 \end{equation}
The update rule \eqref{eq:from_cycle_to_edge} can thus be interpreted as an iterative voting procedure for estimating $s_{ij}^*$, where cycle $L\in N_{ij}$ estimates $s_{ij}^*$ at iteration $t$ by $d_L$ with confidence $w_{ij,L}(t-1)$ that $s_{ij}^*=d_L$. If $L\notin G_{ij}$, then its inconsistency measure $d_L$ is contaminated by corrupted edges in $L$ and we expect its weight to decrease with the amount of corruption. This is demonstrated in the update rules of \eqref{eq:ith} and \eqref{eq:er}, where any corrupted edge $ab$ in a cycle $L\in N_{ij}$, whose corruption is measured by the size of $s_{ab}(t)$, would decrease the weight $w_{ij,L}(t)$.

We can also express \eqref{eq:from_cycle_to_edge} in terms of the following probability mass function $\mu_{ij}(x;t)$ on $D_{ij}$:
\begin{equation*}
\mu_{ij}(x;t)=\sum_{L\in N_{ij}, d_L=x} w_{ij,L}(t) \ \text{ for any } x\in D_{ij}.
\end{equation*}
This probability mass function can be regarded as the estimated posterior mass function of $s_{ij}^*$ given the estimated corruption levels $\left(s_{ab}(t)\right)_{ab\in E\setminus \{ij\}}$. The update rule~\eqref{eq:from_cycle_to_edge} can then be reformulated as follows:
\begin{equation} \label{eq:expectation}
s_{ij}(t)=\mathbb E_{\mu_{ij}(x;t-1)}s_{ij}^*=\sum_{x\in D_{ij}}\mu_{ij}(x;t-1) \cdot x.
\end{equation}

\subsubsection{Summarizing Diagram for the Message Passing Procedure}
\label{sec:summarizing_diagram}
We further clarify the message passing procedure by the following simple diagram in Figure \ref{fig:diagram}. 
The right hand side (RHS) of the diagram expresses two main distributions. The first one is for edge $ij$ being uncorrupted and the second one is that cycle $L \in N_{ij}$ provides the correct information for edge $ij$. We use the term ``Message Passing" since CEMP iteratively updates these two probabilistic distributions by using each other in turn. The update of the second distribution by the first one is more direct. The opposite update requires the estimation of corruption levels.
\begin{figure}[hbp]
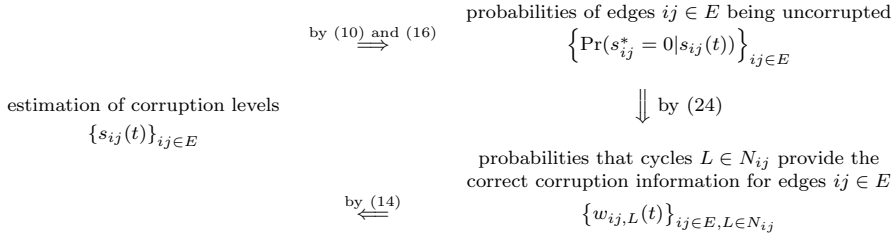

\resizebox{\textwidth}{!}{
\begin{tabular}{ccc}
 &&\text{probabilities of edges $ij \in E$ being uncorrupted}\\
&$\overset{\text{by } \eqref{eq:choice_f} \text{ and } \eqref{eq:pr_f}}{\Longrightarrow}$&$\left\{\Pr(s_{ij}^*=0|s_{ij}(t))\right\}_{ij\in E}$\\
&&\\
\text{estimation of corruption levels}&  &$\Big\Downarrow$ \footnotesize{by \eqref{eq:ws} }\\
$\left\{s_{ij}(t)\right\}_{ij\in E}$&&\\
&&\text{probabilities that cycles $L \in N_{ij}$ provide the}\\
&&\text{correct corruption information for edges $ij \in E$}\\
&$\overset{\text{by}~\eqref{eq:from_cycle_to_edge}}{\Longleftarrow}$& $\left\{w_{ij,L}(t)\right\}_{ij\in E, L\in N_{ij}}$
\end{tabular}
}\caption{Diagram for explaining CEMP}\label{fig:diagram}
\end{figure}

\subsubsection{Refined Statistical Model for the Specific Reweighting Functions}\label{sec:cemp_functions}

The two choices of $f(x;\beta_t)$ in \eqref{eq:choice_f} correspond to a more refined probabilistic model on $s_{ij}^*$ and $s_{ij}(t)$, which can also apply to other choices of reweighting functions. 
In addition to the above assumptions, 
this model assumes that
the edges in $E$ are independently  corrupted with probability $q$.

 We denote by 
 $F_g(x;t)$ and $F_b(x;t)$ the probability distributions of $s_{ij}(t)$ conditioned on the events $s_{ij}^*=0$ and $s_{ij}^*\neq 0$, respectively.
We further denote by $p_g(x;t)$ and $p_b(x;t)$, the respective probability density functions of $F_g(x;t)$ and $F_b(x;t)$ and define $r(x;t)=p_b(x;t)/p_g(x;t)$.
By Bayes' rule and the above assumptions, for any $ij\in E$
\begin{align}
f(s_{ij}(t);\beta_t)&=\Pr(s_{ij}^*=0|s_{ij}(t))\nonumber
=\frac{(1-q)\cdot p_g\left(s_{ij}(t);t\right)}{(1-q)\cdot p_g\left(s_{ij}(t);t\right)+q\cdot p_b\left(s_{ij}(t);t\right)}\\
&= \left(1+\frac{q}{1-q}\cdot r\left(s_{ij}(t);t\right)\right)^{-1}.\label{eq:wij3}
\end{align}
One can note that the update rule \ith in \eqref{eq:ith} corresponds to \eqref{eq:wp} with \eqref{eq:wij3} and
\begin{equation}\label{eq:rt1}
r(x;t)\propto \frac{\mathbf{1}_{\{x\leq 1\}}}{\mathbf{1}_{\{x\leq \frac{1}{\beta_t}\}}}=\begin{cases}1,&0\leq x<\frac{1}{\beta_t};\\ \infty, &  \frac{1}{\beta_t}\leq x\leq 1. \end{cases}
\end{equation}
 Due to the normalization factor and the fact that each cycle has the same length, the update rule \ith is invariant to the scale of $r(x;t)$, and we thus used the proportionality symbol.  Note that there are infinitely many $F_g(x;t)$ and $F_b(x;t)$ that result in such $r(x;t)$.  One simple example is uniform $F_g(x;t)$ and $F_b(x;t)$ on  $[0,1/\beta_t]$ and $[0,1]$,  respectively.
 
 One can also note that the update rule \er approximately corresponds to \eqref{eq:wp} with \eqref{eq:wij3} and \begin{equation}\label{eq:rt2}
r(x;t) = \alpha e^{\beta_t x} \ \text{ for sufficiently large }  \alpha  \text{ and }  x \in [0,1].
\end{equation}
Indeed, by plugging \eqref{eq:rt2} in \eqref{eq:wij3} we obtain that for $\alpha'=\alpha q/(1-q)$
\begin{equation}\label{eq:approx_e}
    f(s_{ij}(t);\beta_t) = (1+\alpha'e^{\beta_t x})^{-1} \approx e^{-\beta_t x}/ \alpha'.
\end{equation}
Since the update rule \er is invariant to scale (for the same reason explained above for the update rule \ith),
$\alpha$ can be chosen arbitrarily large to yield a good approximation in \eqref{eq:approx_e} with sufficiently large $\alpha'$. 
One may obtain \eqref{eq:rt2} by choosing $F_g(x;t)$ and $F_b(x;t)$ as exponential distributions restricted to $[0,1]$, or normal distributions restricted to $[0,1]$ with the same variance but different means.

As explained later in Section \ref{sec:theory}, $\beta_t$ needs to approach infinity in the noiseless case. We note that this implies that $r(x;t)$ (in either \eqref{eq:rt1} or \eqref{eq:rt2}) is infinite at $x \in (0,1]$ and finite at  $x=0$. Therefore, in this case, $F_g(x;t)\to \delta_0$. This makes sense since $s_{ij}^*=0$ when $ij \in E_g$.

\begin{remark}
Neither rules \ith nor \er makes explicit assumptions on the distributions of  $s_{ij}(t)$ and $s_{ij}^*$ and thus there are infinitely many choices of $F_g(x;t)$ and $F_b(x;t)$, which we find flexible.
\end{remark}

\subsubsection{The Practical Advantage of the Exponential Reweighting Function}\label{sec:practical}
In principle, one may choose any nonincreasing reweighting functions  $f(x;\beta)$ such that 
\begin{equation}
\label{eq:require_f}
\lim_{\beta\to 0}f(x;\beta)=1 \ \text{ and } \ \lim_{\beta\to +\infty} f(x; \beta) = 1-\sign(x) \text{ for all } \ x\in [0,1].
\end{equation}
In practice, we advocate using $f(x;\beta) = \exp(-\beta x)$ of CEMP-\er due to its nice property of shift invariance, which we formulate next and prove in Appendix \ref{sec:prop_shift}.
\begin{proposition}\label{prop:shift}
Assume that $\mathcal C$ consists of cycles with equal length $l$. For any fixed $ij\in E$ and $s\in \mathbb R$, the estimated corruption levels $\{s_{ij}(t)\}_{ij\in E}$ and $\{s_{ij}(t)+s\}_{ij\in E}$ result in the same cycle weights $\{w_{ij,L}(t)\}_{ij\in E, L\in N_{ij}}$ in CEMP-\er.
\end{proposition}

We demonstrate the advantage of the above shift invariance property with a simple example: Assume that an edge $ij$ is only contained in two cycles $L_1=\{ij,ik_1, jk_1\}$ and $L_2=\{ij, ik_2, jk_2\}$. Using the notation $s_{ij,L_1}(t):=s_{ik_1}(t)+s_{jk_1}(t)$ and $s_{ij,L_2}(t):=s_{ik_2}(t)+s_{jk_2}(t)$,  
we obtain that 
$$w_{ij,L_1}(t)/w_{ij,L_2}(t)=\exp(-\beta_t (s_{ij,L_1}(t)-s_{ij,L_2}(t)))$$ 
and since 
$w_{ij,L_1}(t)+w_{ij,L_2}(t)=1$, $w_{ij,L_1}(t)$ and $w_{ij,L_2}(t)$ are determined by  $s_{ij,L_1}(t)-s_{ij,L_2}(t)$. Therefore, the  choice of $\beta_t$ for CEMP-\er only depends on the ``corruption variation'' for edge $ij$, $s_{ij,L_1}(t)-s_{ij,L_2}(t)$. It is completely independent of the average scale of the corruption levels, which is proportional in this case to $s_{ij,L_1}(t)+s_{ij,L_2}(t)$.
On the contrary, CEMP-\ith heavily depends on the average scale of the corruption levels. Indeed, the general expression in CEMP-\ith is
\begin{align*}
  w_{ij,L}(t) =  \frac{\mathbf 1_{\{\max_{ab\in N_L\setminus \{ij\}}s_{ab}(t)\leq \frac{1}{\beta_t}\}}}{\sum_{L'\in N_{ij}} \mathbf 1_{\{\max_{ab\in N_{L'}\setminus \{ij\}}s_{ab}(t)\leq \frac{1}{\beta_t}\}}}.
\end{align*}
Redefining $s_{ij,L_1}(t):=\max\{s_{ik_1}(t),s_{jk_1}(t)\}$ and $s_{ij,L_2}(t):=\max\{s_{ik_2}(t),s_{jk_2}(t)\}$, we obtain that
$$w_{ij,L_1}(t)/w_{ij,L_2}(t)={\mathbf 1}_{\{s_{ij,L_1}(t) \leq \frac{1}{\beta_t}\}}/{\mathbf 1}_{\{s_{ij,L_2}(t) \leq \frac{1}{\beta_t}\}}.$$
The choice of $\beta_t$ depends on both values of  $s_{ij,L_1}(t)$ and $s_{ij,L_2}(t)$ and not on any meaningful variation. One can see that in more general cases, the correct choice of $\beta_t$ for CEMP-\ith can be rather restrictive and will depend on different local corruption levels of edges.

\subsubsection{On the Computational Complexity of CEMP}
\label{sec:computational}
We note that for each $L\in\mathcal C$, CEMP needs to compute $d_L$ and thus the complexity at each iteration is of order $O(\sum_{L\in \mathcal C}|L|)$. In the case of $\mathcal C=\mathcal C_3$, which we advocate later, this complexity is of order  $O(|\mathcal C_3|)$ and thus $O(n^3)$ for sufficiently dense graphs.
In practice, one can implement a faster version of CEMP by only selecting a fixed number of $3$-cycles per edge which reduces the complexity per iteration to  $O(|E|)$, which is $O(n^2)$ for sufficiently dense graphs. Nevertheless we have not discussed the full guarantees for this procedure.
In order to obtain the overall complexity, and not the complexity per iteration, one needs to guarantee sufficiently fast convergence. Our later theoretical statements guarantee linear convergence of CEMP under various conditions and consequently guarantee that the overall complexity is practically of the same order of the complexity per iteration. 
We note that  the upper bound $O(n^3)$ for the complexity of CEMP with $\mathcal C=\mathcal C_3$ is lower than  the complexity of SDP for common group synchronization problems. We thus refer to our method as   fast. 
In general, for CEMP with $\mathcal C=\mathcal C_l$, the complexity is $O(ln^l)$.

We exemplify two different scenarios where the complexity of CEMP can be lower than the bounds stated above. 

\noindent
{\bf An example of lower complexity due to graph sparsity:}
We assume the special case, where the underlying graph $G([n], E)$ is generated by the Erd\H{o}s-R\'{e}nyi model $G(n,p)$ and the group is  $\mathcal G=SO(d)$. We estimate the complexity of CEMP with $\mathcal{C} = \mathcal{C}_3$.
Note that the number of edges concentrates at $n^2p$. Each edge is contained in about $np^2$ 3-cycles. Thus, the number of $d_L$'s concentrate at $n^3p^3$. The computational complexity of each $d_L$ is $d^3$. Therefore, the computational complexity of initializing CEMP is about $n^3p^3d^3$. In each iteration of the reweighting stage, one only needs to compute $w_{ij,L}$ for each 3-cycle and average over  $ij\in E$ and thus the complexity is $n^3p^3$. We assume, e.g., the noiseless adversarial case, where the convergence is linear. Thus, the total complexity of CEMP is $O((npd)^3)$.  The complexity of Spectral is of order $O(n^2d^3)$, so the complexity of CEMP is lower than that of Spectral when $p=O(n^{-1/3})$. Observe that the upper bound $p=n^{-1/3}$ is higher than the phase transition threshold for the existence of 3-cycles (which is $p=n^{-1/2}$); thus, this fast regime of CEMP (with $\mathcal{C}= \mathcal{C}_3$) is nontrivial.   

\noindent
{\bf An example of low complexity with high-order cycles:}
In \cite{shi_NEURIPS2020}, a lower complexity of CEMP with $\mathcal{C}=\mathcal{C}_l$, $l>3$, is obtained for the following special case:  $\mathcal G = S_N$ with $d_\mathcal G(g_1, g_2)=\|g_1-g_2\|_F^2/(2N)$ (recall that $\|\cdot\|_F$ denotes the Frobenius norm and we associate $g_1$ and $g_2$ with their matrix representations). In this case, it is possible to compute the weights $\{w_{ij}\}_{ij\in E}$ by calculating powers of the graph connection weight matrix \cite{VDM_singer}. Consequently, the complexity of CEMP is reduced to $O(ln^3)$. It seems that this example is rather special and we find it difficult to generalize its ideas.

\subsubsection{Post-processing:  Estimation of the Group Elements}\label{sec:postproc}
After running CEMP for $T$ iterations, one obtains the estimated corruption levels $s_{ij}(T)$, $ij \in E$. As a byproduct of CEMP, one also obtains $$p_{ij}:=f(s_{ij}(T);\beta_T), \ \text{ for } ij \in E,$$ 
which we interpreted in \eqref{eq:pr_f} as the estimated probability that $ij\in E_g$ given the value of $s_{ij}(T)$. Alternatively, one may normalize $p_{ij}$ as follows:
$$\tilde p_{ij}=\frac{f(s_{ij}(T);\beta_T)}{\sum_{j\in [n] :  ij \in E} f(s_{ij}(T);\beta_T)}, \ \text{ for } ij \in E,$$  
so that $\sum_{j\in [n] :  ij \in E} \tilde p_{ij}=1$.
Using either of these values
($s_{ij}(T)$, $p_{ij}(T)$
or $\tilde{p}_{ij}(T)$ for all $ij \in E$), we describe different possible strategies for estimating the underlying group elements $\{g_i^*\}_{i=1}^n$ in more general settings than that of our proposed theory. 
As we explain below we find our second proposed method (CEMP+GCW) as the most appropriate one in the context of the current paper. Nevertheless, there are settings where other methods will be preferable.

\noindent
{\bf Application of the minimum spanning tree (CEMP+MST):} 
One can assign the weight $s_{ij}(T)$ for each edge $ij\in E$ and find the minimum spanning tree (MST) of the weighted graph. The resulting spanning tree minimizes the average of the estimated corruption levels. Next, one can fix $g_1=e_\mathcal G$ and estimate the rest of the group elements by subsequently multiplying group ratios (using the formula $g_i=g_{ij}g_j$) along the spanning tree. We refer to this procedure as CEMP+MST. Alternatively, one can assign edge weights  $\{p_{ij}(T)\}_{ij\in E}$ and find the maximum spanning tree, which aims to maximize the expected number of good edges. These methods can work well when there is a connected inlier graph with little noise, but will generally not perform well in noisy situations. Indeed, when the good edges are noisy, estimation errors will rapidly accumulate with the subsequent applications of the formula $g_i=g_{ij}g_j$ and the final estimates are expected to be erroneous.

\noindent
{\bf A CEMP-weighted spectral method (CEMP+GCW):} 
Using $\{\tilde{p}_{ij}(T)\}_{ij\in E}$ obtained by CEMP one may try to approximately solve the following weighted least squares problem:
\begin{align}\label{eq:weighted}
    \{\hat g_i\}_{i\in [n]}=\argmin_{\{g_i\}_{i\in [n]}\subset \mathcal G} \sum_{ij\in E}\tilde p_{ij}d^2_{\mathcal G}\left( g_{ij}, g_i g_j^{-1}\right)
\end{align}
and use this solution as an estimate of $\{g_i^*\}_{i=1}^n$. Note that since $\mathcal G$ is typically not convex, the solution of this problem is often hard.
When $\mathcal G$ is a subgroup of the orthogonal group $O(N)$, an argument of \cite{se3_sync} for the same optimization problem with $\mathcal{G}=SE(3)$ suggests the following relaxed spectral solution to \eqref{eq:weighted}. First, build a matrix $\bY_p$ whose $[i,j]$-th block is $\tilde p_{ij}g_{ij}$ for $ij\in E$, and $\mathbf{0}$ otherwise. Next, compute the top $N$ eigenvectors of $\bY_p$ to form the block vector $\hat\bx$, and finally project the $i$-th block of $\hat\bx$ onto $\mathcal G$ to obtain the estimate of $g_i^*$ for $i\in [n]$. Note that $\bY_p$ is exactly the graph connection weight (GCW) matrix in vector diffusion maps \cite{VDM_singer}, given the edge weights $\tilde p_{ij}$. Thus, we refer to this method as CEMP+GCW.  We note that the performance of CEMP+GCW is mainly determined by the accuracy of estimating the corruption levels. Indeed, if  the corruption levels  $\{s_{ij}(T)\}_{ij \in E}$ are sufficiently accurate, then the weights $\{\tilde{p}_{ij}\}_{ij \in E}$ are sufficiently accurate and \eqref{eq:weighted} is close to a direct least squares solver for the inlier graph. Since the focus of this paper is accurate estimation of $\{s_{ij}^*\}_{ij \in E}$, we mainly test CEMP+GCW as a direct CEMP-based group synchronization solver (see  Section \ref{sec:experiment}).

\noindent
{\bf Iterative application of CEMP and weighted least squares (MPLS):} 
In highly corrupted and noisy datasets, iterative application of CEMP and the weighted least squares solver in \eqref{eq:weighted} may result in a satisfying solution. After the submission of this paper, the authors proposed a special procedure like this, which they called Message Passing Least Squares (MPLS) \cite{MPLS}.

\noindent
{\bf Combining CEMP with any another solver:} 
CEMP can be used as an effective cleaning procedure for removing some bad edges (with estimated corruption levels above a chosen threshold). One can then apply any group synchronization solver using the cleaned graph. Indeed, existing solvers often cannot deal with high and moderate levels of corruption and should benefit from initial application of CEMP. Such a strategy was tested with the AAB algorithm \cite{AAB}, which motivated the development of CEMP.

\subsection{Comparison of CEMP with BP, AMP and IRLS}\label{sec:compare}
CEMP is different from BP~\cite{BP} in the following ways. First of all, unlike BP that needs to explicitly define the joint density and the statistical model a-priori, CEMP does not use an explicit objective function, but only makes weak assumptions on the corruption model. Second, CEMP is guaranteed (under a certain level of corruption) to handle factor graphs that contain loops. Third, CEMP utilizes the auxiliary variable $s_{ij}(t)$ that connects the two binary distributions on the RHS of the diagram in Figure \ref{fig:diagram}. Thus,
unlike~\eqref{eq:bp} of BP that only distinguishes the two events: $ij\in E_g$ and $ij\in E_b$, CEMP also tries to approximate the exact value of corruption levels $s_{ij}^*$ for all $ij\in E$, which can help in inferring corrupted edges.

In practice, AMP \cite{AMP_compact} directly solves group elements, but with limited theoretical guarantees for group synchronization. CEMP has two main advantages over AMP when assuming the theoretical setting of this paper. First of all, AMP for group synchronization \cite{AMP_compact} assumes additive Gaussian noise without additional corruption and  thus it is not robust to outliers. In contrast, we guarantee the robustness of CEMP to both adversarial and uniform corruption. We further establish the stability of CEMP to sufficiently small bounded and sub-Gaussian noise. Second of all, the heuristic argument for deriving AMP for group synchronization (see Section 6 of \cite{AMP_compact}) provides asymptotic convergence theory, whereas CEMP has convergence guarantees under certain deterministic conditions for finite sample with attractive convergence rate.

Another related line of work is IRLS  that is commonly used to solve $\ell_1$ minimization problems. At each iteration, it utilizes the  residual of a weighted least squares solution to quantify the corruption level at each edge. New weights,  which are typically inversely proportional to this residual, are assigned for an updated weighted least squares problem, and the process continues till convergence. The IRLS reweighting strategy is rather aggressive, and in the case of  high corruption levels, it may wrongly assign extremely high weights to corrupted edges and consequently it can get stuck at local minima.  When the group is discrete, some residuals of corrupted edges can be 0 and the corresponding weights can be extremely large. Furthermore, in this case the residuals and the edge weights lie in a discrete space and therefore IRLS can easily get stuck at local minima. For general groups, the $\ell_1$ formulation that IRLS aims to solve is statistically optimal to a very special heavy-tailed distribution, and is not optimal, for example, to the corruption model proposed in \cite{wang2013exact}. Instead of assigning weights to edges, CEMP assigns weights to cycles and uses the weighted cycles to infer the corruption levels of edges. It starts with a conservative reweighting strategy with $\beta_t$ small and gradually makes it more aggressive by increasing $\beta_t$. This reweighting strategy is crucial for guaranteeing the convergence of CEMP.  CEMP is also advantageous when the groups are discrete because it  estimates conditional expectations whose values lie in a continuous space. This makes CEMP less likely to get stuck in a local minima.

\section{Theory for Adversarial Corruption}\label{sec:theory}
We show that when the ratio between the size of $G_{ij}$ (defined in Section \ref{sec:steps_notation}) and the size of $N_{ij}$ (defined in Section \ref{sec:CEG})
is uniformly above a certain threshold and  $\{\beta_t\}_{t=0}^T$ is increasing and chosen in a certain way, then for all $ij\in E$, the estimated corruption level $s_{ij}(t)$ linearly converges to $s_{ij}^*$, and the convergence is uniform over all $ij\in E$. The theory is similar for both update rules A and B.  
Note that the uniform lower bound on the above ratio is a geometric restriction on the set $E_b$. This is the only restriction we consider in this section; indeed, we follow the adversarial setting, where the group ratios $g_{ij}$ for $ij\in E_b$ can be arbitrarily chosen, either deterministically or randomly. 
We mentioned in Section \ref{sec:contribution}
that the only other guarantees for such adversarial corruption but for a different problem are in \cite{HandLV15,LUDrecovery} and that we found them weaker.

The rest of the section is organized as follows. Section \ref{sec:pre} presents preliminary notation and background. Section \ref{sec:determin} establishes the linear convergence of CEMP to the ground truth corruption level under adversarial corruption. Section \ref{sec:bounded} establishes the stability of CEMP to bounded noise, and 
 Section \ref{sec:subg} extends these results to sub-Gaussian noise.

\subsection{Preliminaries}\label{sec:pre}
For clarity of our presentation, we assume that $\mathcal C = \mathcal C_3$ and thus simplify some of the above notation and claims. Note that $L\in \mathcal C_3$ contains 3 edges and 3 vertices. 
Therefore, given $i,j \in [n]$ and $L\in N_{ij}$, we index $L$ by the vertex $k$, which is not $i$ or $j$. We thus replace the notation $d_L$ with  $d_{ij,k}$. We also note that the sets $N_{ij}$ and $G_{ij}$ can be expressed as follows: $N_{ij}=\{k\in [n]:ik,jk\in E\}$ and $G_{ij}=\{k\in N_{ij}: ik,jk\in E_g\}$. We observe that if $A$ is the adjacency matrix of $G([n], E)$ (with 1 if $ij \in E$ and 0 otherwise), then by the definitions of matrix multiplication and $N_{ij}$, $A^2(i,j) = |N_{ij}|$. Similarly, if $A_g$ is the adjacency matrix of $G([n], E_g)$, then 
$A_g^2(i,j) = |G_{ij}|$.
We define the corrupted cycles containing the edge $ij$ as $B_{ij} =N_{ij} \setminus G_{ij}$,
so that $|B_{ij}| = A^2(i,j)-A_g^2(i,j)$. 
We also define 
$\epsilon_{ij} (t)=|s_{ij}(t)-s_{ij}^*|$, $\epsilon (t)=\max_{ij\in E}|s_{ij}(t)-s_{ij}^*|$, 
\begin{equation}
\label{def_lambda}
\lambda_{ij}=|B_{ij}|/|N_{ij}| 
\ \text{ and } \
\lambda=\max_{ij\in E} \lambda_{ij}.
\end{equation}

An upper bound for the parameter $\lambda$ quantifies our adversarial corruption model. Let us clarify more carefully the ``adversarial corruption'' model and the parameter $\lambda$, while repeating some previous information. This model assumes a graph $G([n], E)$ whose nodes represent group elements and whose edges are assigned group ratios satisfying \eqref{eq:model}, where $E = E_b \cup E_g$ and $E_b \cap E_g = \emptyset$. When $g_{ij}^\epsilon=e_\mathcal G$ for all $ij \in E_g$ (where $g_{ij}^\epsilon$ appear in \eqref{eq:model}), we refer to this model as noiseless, and otherwise, we refer to it as noisy. For the noisy case, we will specify assumptions on the distribution of
$d_{\mathcal G}(g_{ij}^\epsilon,e_{\mathcal G})$ for all $ij \in E_g$, or equivalently (since $d_{\mathcal G}$ is bi-invariant) the distribution of $s_{ij}^*$ for all $ij \in E_g$. 

In view of the above observations, we note that the parameter $\lambda$, whose upper bound quantifies some properties of this model, can be directly expressed using the adjacency matrices $A$ and $A_g$ as follows
\begin{equation}
\label{def_lambda2}
\lambda=\max_{ij\in E} \left( 1 - \frac{A_g^2(i,j)}{A^2(i,j)}  \right).    
\end{equation} 
Thus an upper bound $m$ on $\lambda$ is the same as a lower bound $1-m$ on $\min_{ij\in E} {A_g^2(i,j)}$ $/{A^2(i,j)}$. 
This lower bound is equivalent to a lower bound on the ratio between the size of $G_{ij}$  and the size of $N_{ij}$. We note that this bound implies basic properties mentioned earlier. First of all, it implies that $G_{ij}$ is nonempty for all $ij \in E$ and it thus implies that the good-cycle condition holds. This in turn implies that $G([n], E_g)$ is connected (since if $ij \in E$ and $k \in G_{ij}$, then $ik$, $kj \in E_g$).

Our proofs frequently use Lemma \ref{lemma:bi}, which can be stated in our special case of $\mathcal C = \mathcal{C}_3$ as  
\begin{align}
\label{eq:bi}
|d_{ij,k}-s_{ij}^*|\leq s_{ik}^* + s_{jk}^* \ \text{ for all } ij \in E \text{ and } k \in N_{ij}.
\end{align}
We recall that $d_{\mathcal G}(\cdot\,,\cdot)\leq 1$ and thus 
\begin{equation}
\label{eq:prop_0_1}
\text{For all }  i,j \in E  \text{ and } k \in N_{ij}, \     0 \leq s_{ij}^* \leq 1 \ \text{ and } \ 0 \leq d_{ij,k} \leq 1.
\end{equation}

Since $\mathcal C=\mathcal C_3$, the update rule \eqref{eq:update} can be further simplified as follows. For CEMP-A,
\begin{align}\label{eq:updateA}
    s_{ij}(t+1)=\frac{\sum\limits_{k\in N_{ij}} \mathbf 1_{\left\{s_{ik}(t)\,,s_{jk}(t)\leq 1/\beta_t\right\}} d_{ij,k}}{\sum\limits_{k\in N_{ij}} \mathbf 1_{\left\{s_{ik}(t)\,,s_{jk}(t)\leq 1/\beta_t\right\}}},
\end{align}
and for CEMP-B,
\begin{align}\label{eq:updateB}
s_{ij}(t+1)=\frac{\sum\limits_{k\in N_{ij}}e^{-\beta_t\left(s_{ik}(t)+s_{jk}(t)\right)}d_{ij,k}}{\sum\limits_{k\in N_{ij}}e^{-\beta_t\left(s_{ik}(t)+s_{jk}(t)\right)}}.
\end{align}
The initial corruption estimate at $ij\in E$ in \eqref{eq:initialize_wij} for both versions of CEMP is 
\begin{equation}
\label{eq:initial_corruption}
s_{ij}(0)=\frac{1}{|N_{ij}|}\sum_{k\in N_{ij}}d_{ij,k}.    
\end{equation}

\subsection{Deterministic Exact Recovery}\label{sec:determin}
The following two theorems establish linear convergence of CEMP-A and CEMP-B,  assuming adversarial corruption and exponentially increasing $\beta_t$. The proofs are straightforward.
\begin{theorem}\label{thm:it1}
Assume data generated by the noiseless adversarial corruption model with parameter $\lambda<1/4$. Assume further  that the parameters $\{\beta_t\}_{t \geq 0}$ of CEMP-A satisfy: 
$1<\beta_0\leq 1/\lambda$ and for all $t \geq 1$ $\beta_{t+1}=r\beta_t$
for some $1<r<1/(4\lambda)$. Then 
the estimates $\{s_{ij}(t)\}_{ij \in E}^{t \geq 0}$ of $\{s_{ij}^{*}\}_{ij \in E}$
computed by CEMP-A  satisfy
\begin{align}
\label{eq:thmit1}
\max_{ij\in E}|s_{ij}(t)-s_{ij}^*| \leq  \frac{1}{\beta_0 r^t} \ \text{ for all } t\geq 0.
\end{align}
\end{theorem}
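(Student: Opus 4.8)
The plan is to prove \eqref{eq:thmit1} by induction on $t$, maintaining the invariant $\epsilon(t) := \max_{ij \in E}|s_{ij}(t) - s_{ij}^*| \le 1/\beta_t$; since $\beta_t = \beta_0 r^t$, this invariant is exactly the asserted bound. The convergence being claimed is geometric with rate $1/r$, so everything reduces to a one-step contraction estimate for the averaging rule \eqref{eq:updateA}.

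For the base case $t=0$ I would use the initialization \eqref{eq:initial_corruption}, in which $s_{ij}(0)$ is the plain average of the $d_{ij,k}$ over $k \in N_{ij}$. Splitting $N_{ij}$ into $G_{ij}$ and $B_{ij}$, each good cycle contributes $d_{ij,k}=s_{ij}^*$ exactly by Proposition~\ref{prop:good cycle}, hence zero error, while each of the $|B_{ij}|$ bad cycles contributes error at most $1$ by \eqref{eq:prop_0_1}. Averaging gives $|s_{ij}(0)-s_{ij}^*| \le |B_{ij}|/|N_{ij}| = \lambda_{ij} \le \lambda \le 1/\beta_0$, where the last step is the hypothesis $\beta_0 \le 1/\lambda$.

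For the inductive step, assume $\epsilon(t)\le 1/\beta_t$ and examine the weight $\mathbf 1_{\{s_{ik}(t),\,s_{jk}(t)\le 1/\beta_t\}}$ of cycle $k$ in \eqref{eq:updateA}. Two facts drive the argument. First, every good cycle retains weight $1$: for $k\in G_{ij}$ we have $s_{ik}^*=s_{jk}^*=0$, so $s_{ik}(t),s_{jk}(t)\le\epsilon(t)\le 1/\beta_t$, and such cycles carry zero error. Second, any bad cycle with weight $1$ satisfies $s_{ik}^*\le s_{ik}(t)+\epsilon(t)\le 2/\beta_t$ and likewise $s_{jk}^*\le 2/\beta_t$, so Lemma~\ref{lemma:bi} in the form \eqref{eq:bi} bounds its contribution by $|d_{ij,k}-s_{ij}^*|\le s_{ik}^*+s_{jk}^*\le 4/\beta_t$.

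Writing $b'$ for the number of bad cycles retaining weight $1$, the numerator error is at most $b'\cdot(4/\beta_t)$ and the denominator is $|G_{ij}|+b'$, so $\epsilon_{ij}(t+1)\le (4/\beta_t)\,b'/(|G_{ij}|+b')$. Extracting the correct constant here is the step I expect to be the main obstacle. The naive bound $b'/(|G_{ij}|+b')\le |B_{ij}|/|G_{ij}|=\lambda/(1-\lambda)$ is too lossy, yielding the contraction factor $4\lambda/(1-\lambda)$, which is below $1$ only for $\lambda<1/5$ and therefore fails near the claimed threshold $\lambda<1/4$. The fix is to observe that $b'\mapsto b'/(|G_{ij}|+b')$ is increasing, so its maximum over $b'\le|B_{ij}|$ is attained at $b'=|B_{ij}|$, where the denominator becomes the full $|N_{ij}|$ and the ratio equals exactly $\lambda_{ij}$. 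This gives the sharp bound $\epsilon(t+1)\le 4\lambda/\beta_t$. Finally, $r<1/(4\lambda)$ means $4\lambda<1/r$, so $\epsilon(t+1)\le 4\lambda/\beta_t < 1/(r\beta_t)=1/\beta_{t+1}$, which closes the induction and proves \eqref{eq:thmit1}. The remaining hypotheses $\beta_0>1$ and $r>1$ only serve to keep the thresholds $1/\beta_t$ below $1$ and shrinking geometrically, which is what produces the linear rate.
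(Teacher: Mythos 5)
Your proposal is correct and follows essentially the same route as the paper's proof: the same base case via the plain average and the bound $|B_{ij}|/|N_{ij}|\leq\lambda\leq 1/\beta_0$, the same inductive invariant $\epsilon(t)\leq 1/\beta_t$, the same two observations ($G_{ij}\subseteq A_{ij}(t)$ and $s_{ik}^*,s_{jk}^*\leq 2/\beta_t$ for surviving bad cycles combined with \eqref{eq:bi}), and the same key monotonicity step — the paper's inequality \eqref{eq:more_ratio_eps} is exactly your observation that $b'\mapsto b'/(|G_{ij}|+b')$ is maximized at $b'=|B_{ij}|$, yielding the factor $\lambda_{ij}$ rather than the lossy $\lambda/(1-\lambda)$. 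No gaps.
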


\begin{proof}
The proof uses the following estimate, which applies  first \eqref{eq:updateA} and then \eqref{eq:bi}:
\begin{align}
\epsilon_{ij}(t+1)=|s_{ij}(t+1)-s_{ij}^*|&\leq \frac{\sum\limits_{k\in N_{ij}}\mathbf{1}_{\{s_{ik}(t), s_{jk}(t)\leq \frac{1}{\beta_t}\}}|d_{ij,k}-s^*_{ij}|}{\sum\limits_{k\in N_{ij}}\mathbf{1}_{\{s_{ik}(t), s_{jk}(t)\leq \frac{1}{\beta_t}\}}}
\nonumber\\
\label{eq:estimate_eps}
&\leq\frac{\sum\limits_{k\in N_{ij}}\mathbf{1}_{\{s_{ik}(t), s_{jk}(t)\leq \frac{1}{\beta_t}\}}(s_{ik}^*+ s_{jk}^*)}{\sum\limits_{k\in N_{ij}}\mathbf{1}_{\{s_{ik}(t), s_{jk}(t)\leq \frac{1}{\beta_t}\}}}.
\end{align}
Using the notation $A_{ij}(t):=\{k\in N_{ij}:s_{ik}(t),s_{jk}(t)\leq 1/\beta_t\}$ and the fact that $s_{ik}^*+ s_{jk}^* = 0$ for $ij \in G_{ij}$, we can rewrite the estimate in \eqref{eq:estimate_eps} as follows
\begin{align}
\epsilon_{ij}(t+1)
\label{eq:estimate_eps2}
\leq\frac{\sum\limits_{k\in B_{ij}}\mathbf{1}_{\{s_{ik}(t), s_{jk}(t)\leq \frac{1}{\beta_t}\}}(s_{ik}^*+ s_{jk}^*)}{|A_{ij}(t)|}.
\end{align}

The rest of the proof uses simple induction. For $t=0$, \eqref{eq:thmit1} is verified as follows
\begin{multline}
\label{eq:proof_for_t_0}
    \epsilon_{ij}(0) =|s_{ij}(0)-s_{ij}^*|\leq \frac{\sum\limits_{k\in N_{ij}}|d_{ij,k}-s^*_{ij}|}{|N_{ij}|}\\
    = \frac{\sum\limits_{k\in B_{ij}}|d_{ij,k}-s^*_{ij}|}{|N_{ij}|}\leq \frac{|B_{ij}|}{|N_{ij}|}\leq \lambda \leq \frac{1}{\beta_0},
\end{multline}
where the first inequality uses  \eqref{eq:initial_corruption}, the
second equality follows from the fact that $d_{ij,k}=s_{ij}^*$ for $k\in G_{ij}$, the second inequality follows from \eqref{eq:prop_0_1} (which implies that $|d_{ij,k}-s_{ij}^*|\leq 1$) and the last two inequalities use the assumptions of the theorem.
Next, we assume that $1/\beta_t\geq \epsilon (t)$ for an arbitrary $t>0$ and show that $1/\beta_{t+1}\geq \epsilon (t+1)$. We note that the induction assumption implies that  \begin{equation}\label{eq:inclusion}
    \frac{1}{\beta_t}\geq \epsilon (t)\geq \max_{ij\in E_g}\epsilon_{ij} (t)=\max_{ij\in E_g}s_{ij}(t),
\end{equation} 
and consequently, for $ij\in E$
$G_{ij}\subseteq A_{ij}(t)$. Combining this observation with $G_{ij} \cap B_{ij} = \emptyset$ yields
\begin{equation}\label{eq:eqal_sets_card}
A_{ij}(t)\cup \left(B_{ij}\setminus A_{ij}(t)\right)=N_{ij}.
\end{equation}
We further note that 
\begin{equation}
\label{eq:if_then}
\text{if  } s_{ij}(t)\leq\frac{1}{\beta_t},
\text{ then } 
s_{ij}^* \leq s_{ij}(t) + \epsilon (t)\leq s_{ij}(t)+\frac{1}{\beta_t} \leq \frac{2}{\beta_t}.
\end{equation}
Combining \eqref{eq:estimate_eps2} and \eqref{eq:if_then} and then applying basic properties of the different sets, in particular, \eqref{eq:eqal_sets_card} and the fact that $B_{ij}\setminus A_{ij}(t)$ is disjoint with both $A_{ij}(t)$ and $B_{ij}\cap A_{ij}(t)$, yields
\begin{multline}\label{eq:more_ratio_eps}
\epsilon_{ij}(t+1)
\leq\frac{\sum\limits_{k\in B_{ij}}\mathbf{1}_{\{s_{ik}(t), s_{jk}(t)\leq \frac{1}{\beta_t}\}}\frac{4}{\beta_t}}{|A_{ij}(t)|}=\frac{4}{\beta_t}\frac{|B_{ij}\cap A_{ij}(t)|}{|A_{ij}(t)|}
\\
\leq \frac{4}{\beta_t}\frac{|B_{ij}\cap A_{ij}(t)\cup \left(B_{ij}\setminus A_{ij}(t)\right)|}{|A_{ij}(t)\cup \left(B_{ij}\setminus A_{ij}(t)\right)|}
=4 \, \frac{|B_{ij}|}{|N_{ij}|}\frac{1}{\beta_t}.
\end{multline}
By taking the maximum of the left hand side (LHS) and RHS of \eqref{eq:more_ratio_eps} over 
$ij\in E$ and using the assumptions $\lambda<1/4$ and $4\lambda \beta_{t+1}<\beta_{t}$, we obtain that
\begin{align*}
\epsilon (t+1)\leq 4\lambda\frac{1}{\beta_t}<\frac{1}{\beta_{t+1}}.
\end{align*}
\end{proof}

\begin{theorem}\label{thm:ir1}
Assume data generated by the noiseless adversarial corruption model with parameter $\lambda<1/5$. Assume further  that the parameters $\{\beta_t\}_{t \geq 0}$ of CEMP-B satisfy:
$\beta_0\leq 1/(4\lambda)$ and for all $t \geq 1$ $\beta_{t+1}=r\beta_t$ for some $1<r<(1-\lambda)/(4\lambda)$. Then the estimates $\{s_{ij}(t)\}_{ij \in E}^{t \geq 0}$ of $\{s_{ij}^{*}\}_{ij \in E}$ computed by CEMP-B  satisfy
\begin{align*}
\max_{ij\in E}|s_{ij}(t)-s_{ij}^*|\leq \frac{1}{4\beta_0 r^t} \ \text{ for all } \ t\geq 0.
\end{align*}
\end{theorem}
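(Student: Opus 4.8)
The plan is to prove by induction on $t$ the single invariant $\epsilon(t)\le \tfrac{1}{4\beta_t}$, which immediately gives the asserted bound since $\beta_t=\beta_0 r^t$. The base case $t=0$ runs exactly as in Theorem~\ref{thm:it1}: starting from the initialization \eqref{eq:initial_corruption} and using that $d_{ij,k}=s_{ij}^*$ for good cycles $k\in G_{ij}$ while $|d_{ij,k}-s_{ij}^*|\le 1$ otherwise, one obtains $\epsilon_{ij}(0)\le |B_{ij}|/|N_{ij}|=\lambda_{ij}\le\lambda\le \tfrac{1}{4\beta_0}$, where the last step is the hypothesis $\beta_0\le 1/(4\lambda)$.

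For the inductive step, assume $\epsilon(t)\le \tfrac{1}{4\beta_t}$. Starting from \eqref{eq:updateB} and the bi-invariance estimate \eqref{eq:bi}, I would first write
\[
\epsilon_{ij}(t+1)\le \frac{\sum_{k\in N_{ij}} e^{-\beta_t(s_{ik}(t)+s_{jk}(t))}(s_{ik}^*+s_{jk}^*)}{\sum_{k\in N_{ij}} e^{-\beta_t(s_{ik}(t)+s_{jk}(t))}}.
\]
Since $s_{ik}^*=s_{jk}^*=0$ for $k\in G_{ij}$, only corrupted cycles $k\in B_{ij}$ survive in the numerator. For the denominator I would retain only the good-cycle terms: by the inductive hypothesis $s_{ik}(t),s_{jk}(t)\le\epsilon(t)\le \tfrac{1}{4\beta_t}$ for $k\in G_{ij}$, so each such weight is at least $e^{-1/2}$, giving the lower bound $|G_{ij}|\,e^{-1/2}$.

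The crux, and the step that differs essentially from the hard-threshold analysis of Theorem~\ref{thm:it1}, is bounding the numerator, because a corrupted cycle may carry an arbitrarily large value $s_{ik}^*+s_{jk}^*$. Here the exponential weight does the work: writing $u=s_{ik}^*+s_{jk}^*$ and using $s_{ik}(t)+s_{jk}(t)\ge u-2\epsilon(t)$ together with $\epsilon(t)\le \tfrac{1}{4\beta_t}$ gives $e^{-\beta_t(s_{ik}(t)+s_{jk}(t))}\,u\le e^{1/2}\,u\,e^{-\beta_t u}$, and the elementary calculus fact $\max_{u\ge 0} u\,e^{-\beta_t u}=1/(e\beta_t)$ turns each summand into at most $e^{-1/2}/\beta_t$. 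Hence the numerator is at most $|B_{ij}|\,e^{-1/2}/\beta_t$, the factors $e^{-1/2}$ cancel, and
\[
\epsilon_{ij}(t+1)\le \frac{|B_{ij}|}{|G_{ij}|\,\beta_t}=\frac{\lambda_{ij}}{(1-\lambda_{ij})\,\beta_t}\le\frac{\lambda}{(1-\lambda)\,\beta_t},
\]
using $|G_{ij}|=(1-\lambda_{ij})|N_{ij}|$ and the monotonicity of $x\mapsto x/(1-x)$.

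Taking the maximum over $ij\in E$ and invoking $r<(1-\lambda)/(4\lambda)$ then yields $\epsilon(t+1)\le \lambda/((1-\lambda)\beta_t)< 1/(4r\beta_t)=1/(4\beta_{t+1})$, which closes the induction. Finally I would observe that the hypothesis $\lambda<1/5$ is precisely what makes the admissible interval $1<r<(1-\lambda)/(4\lambda)$ nonempty, so the parameter scheme is consistent. The main obstacle is the numerator bound; everything else parallels Theorem~\ref{thm:it1}, and the whole argument hinges on recognizing that the optimal balance between the magnitude of $u$ and the suppression $e^{-\beta_t u}$ is governed by $\max_{u} u\,e^{-\beta_t u}=1/(e\beta_t)$.
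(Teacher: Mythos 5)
Your proof is correct and follows essentially the same route as the paper's: the same base case $\epsilon(0)\le\lambda\le 1/(4\beta_0)$, the same lower bound on the denominator via the good cycles and the induction hypothesis, the same conversion of $s(t)$ to $s^*$ in the exponent followed by the bound $\max_{u\ge 0} u\,e^{-\beta_t u}=1/(e\beta_t)$ on each corrupted term, and the same closing step $\lambda/((1-\lambda)\beta_t)<1/(4r\beta_t)=1/(4\beta_{t+1})$. Your bookkeeping of the constants (the cancelling $e^{\pm 1/2}$ factors) is just a cleaner presentation of the paper's bound of the exponential factor by $e$ paired with $1/(e\beta_t)$, and your final inequality chain is in fact written more carefully than the paper's, which contains a typo ($<1/(4\beta_t)<1/(4\beta_{t+1})$ should read $<1/(4r\beta_t)=1/(4\beta_{t+1})$).
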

\begin{proof}
Combining  \eqref{eq:bi} and \eqref{eq:updateB}
yields
\begin{align}
\label{eq:bound_2_eps}
\epsilon_{ij}(t+1)=|s_{ij}(t+1)-s_{ij}^*| \leq\frac{\sum\limits_{k\in B_{ij}}e^{-\beta_t\left(s_{ik}(t)+s_{jk}(t)\right)}\left(s_{ik}^*+ s_{jk}^*\right)}{\sum\limits_{k\in N_{ij}}e^{-\beta_t\left(s_{ik}(t)+s_{jk}(t)\right)}}.
\end{align}
Applying \eqref{eq:bound_2_eps}, the definition of $\epsilon_{ij} (t)$ and the facts that $G_{ij} \subseteq N_{ij}$ and $s_{ik}^*+ s_{jk}^*=0$ for $k \in G_{ij}$, we obtain that
\begin{align}
\label{eq:bound_2_eps2}
\epsilon_{ij}(t+1)
&\leq\frac{\sum\limits_{k\in B_{ij}}e^{-\beta_t\left(s_{ik}^*+s_{jk}^*-\epsilon_{ik}(t)-\epsilon_{jk}(t)\right)}\left(s_{ik}^*+ s_{jk}^*\right)}{\sum\limits_{k\in G_{ij}}e^{-\beta_t\left(\epsilon_{ik}(t)+\epsilon_{jk}(t)\right)}}\nonumber\\
&\leq{\frac{1}{|G_{ij}|}
\sum\limits_{k\in B_{ij}}e^{2\beta_t\left(\epsilon_{ik}(t)+\epsilon_{jk}(t)\right)}} e^{-\beta_t\left(s_{ik}^*+s_{jk}^*\right)}\left(s_{ik}^*+ s_{jk}^*\right).
\end{align}

The proof follows by induction. For $t=0$, \eqref{eq:proof_for_t_0} implies that $\lambda\geq \epsilon (0)$ and thus $1/(4\beta_0)\geq \lambda\geq \epsilon (0)$. Next, we assume that $1/(4\beta_t) \geq \epsilon (t)$ and show that $1/(4\beta_{t+1})\geq \epsilon (t+1)$.
We do this by simplifying and weakening \eqref{eq:bound_2_eps2} as follows. We first  bound each term in the sum on the RHS of \eqref{eq:bound_2_eps2}
by applying the inequality $xe^{-ax}\leq 1/(ea)$ for $x\geq 0$ and $a>0$. We let $x=s_{ik}^{*}+s_{jk}^{*}$ and $a=\beta_t$ and thus each term is bounded by $1/(ea)$. We then use the induction assumption ($\epsilon (t) \leq 1/(4\beta_t)$) to bound the exponential term in the numerator on the RHS of \eqref{eq:bound_2_eps2} by e. We therefore conclude that
\begin{align}
\label{eq:bound_2_eps3}
\epsilon_{ij}(t+1)
\leq \frac{|B_{ij}|}{|G_{ij}|}\cdot\frac{1}{\beta_t}.
\end{align}
By applying the assumption $\lambda<1/5$ and maximizing over $ij\in E$ both the LHS and RHS of \eqref{eq:bound_2_eps3}, we conclude the desired induction as follows 
\begin{align}
\label{eq:bound_2_eps4}
\epsilon (t+1)
\leq \frac{\lambda}{1-\lambda}\cdot\frac{1}{\beta_t}<\frac{1}{4\beta_{t}}<\frac{1}{4\beta_{t+1}}.
\end{align}
\end{proof}

\subsection{Stability to Bounded Noise}\label{sec:bounded}
We assume the noisy adversarial corruption model in \eqref{eq:model} and an upper bound on $\lambda$. 
We further assume that there exists $\delta>0$, such that for all $ij \in E_g$, $s_{ij}^* \equiv d_\mathcal G(g_{ij}^\epsilon, e_\mathcal G) \leq \delta$. This is a general setting of perturbation without probabilistic assumptions. Under these assumptions, we show that CEMP can approximately recover the underlying corruption levels, up to an error of order $\delta$.
The proofs of the two theorems below are similar to the proofs of the theorems in Section \ref{sec:determin} and are thus included in Appendices~\ref{sec:it2} and \ref{sec:ir2}.

\begin{theorem}\label{thm:it2}
Assume data generated by adversarial corruption with bounded noise, where the model parameters satisfy $\lambda<1/4$ and $\delta>0$. Assume further that the parameters $\{\beta_t\}_{t \geq 0}$ of CEMP-A satisfy:  $1/\beta_0>\max\{(3-4\lambda)\delta/(1-4\lambda)\,, $$\, \lambda+3\delta\}$ and $4\lambda/\beta_t+(3-4\lambda)\delta\leq 1/\beta_{t+1}<1/\beta_t$. Then 
the estimates $\{s_{ij}(t)\}_{ij \in E}^{t \geq 0}$ of $\{s_{ij}^{*}\}_{ij \in E}$
computed by CEMP-A  satisfy
\begin{equation}
\max_{ij\in E}|s_{ij}(t)-s_{ij}^*| \leq  \frac{1}{\beta_t}-\delta
\ \text{ for all } \ t\geq 0.\label{eq:itall}
\end{equation}
Moreover,  $\varepsilon := \lim\limits_{t\to\infty}\beta_t(1-4\lambda)/((3-4\lambda)\delta)$  satisfies $0<\varepsilon\leq 1$ and the following asymptotic bound holds
\begin{equation}
\lim_{t\to\infty}\max_{ij\in E}|s_{ij}(t)-s_{ij}^*| \leq  \left(\frac{3-4\lambda}{\varepsilon(1-4\lambda)}-1\right)\delta.\label{eq:itinf}
\end{equation}
\end{theorem}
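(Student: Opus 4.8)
The plan is to reproduce the induction behind Theorem~\ref{thm:it1} for CEMP-A almost verbatim, but with the induction hypothesis lowered by the noise floor: I would prove by induction on $t$ that $\epsilon(t)\leq 1/\beta_t-\delta$, which is precisely the claim \eqref{eq:itall}. The only structural change from the noiseless argument is that good cycles are no longer exact. Concretely, for a good cycle $k\in G_{ij}$ the edges $ik,jk\in E_g$ now satisfy $s^*_{ik}+s^*_{jk}\leq 2\delta$ rather than $=0$, so this quantity must be carried through the estimates instead of being dropped. For the base case I would start from the initial estimate \eqref{eq:initial_corruption} and apply \eqref{eq:bi} termwise, bounding each good-cycle term by $2\delta$ and each bad-cycle term by $1$ via \eqref{eq:prop_0_1}; this gives $\epsilon(0)\leq 2\delta+\lambda$, so the hypothesis $1/\beta_0>\lambda+3\delta$ yields $\epsilon(0)\leq 1/\beta_0-\delta$.

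For the induction step I would assume $\epsilon(t)\leq 1/\beta_t-\delta$ and propagate it. First, exactly as in \eqref{eq:inclusion}, every good edge obeys $s_{ij}(t)\leq s^*_{ij}+\epsilon(t)\leq \delta+(1/\beta_t-\delta)=1/\beta_t$, so $G_{ij}\subseteq A_{ij}(t)$, where $A_{ij}(t)=\{k\in N_{ij}:s_{ik}(t),s_{jk}(t)\leq 1/\beta_t\}$. Second, the noisy analogue of \eqref{eq:if_then} gives, for every retained cycle $k\in A_{ij}(t)$, the bound $s^*_{ik}+s^*_{jk}\leq 4/\beta_t-2\delta$. Splitting the numerator of the CEMP-A update \eqref{eq:updateA} over the disjoint sets $G_{ij}$ and $A_{ij}(t)\cap B_{ij}$, with per-cycle bounds $2\delta$ and $4/\beta_t-2\delta$ respectively, I obtain
\begin{equation*}
\epsilon_{ij}(t+1)\leq \frac{2|G_{ij}|\,\delta+|A_{ij}(t)\cap B_{ij}|\,(4/\beta_t-2\delta)}{|G_{ij}|+|A_{ij}(t)\cap B_{ij}|}.
\end{equation*}

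Since $1/\beta_t>\delta$, each retained bad cycle contributes at least as much as each good cycle, so this weighted average is largest when $A_{ij}(t)\cap B_{ij}=B_{ij}$; evaluating there and maximizing over $\lambda_{ij}\leq\lambda$ collapses the bound to $\epsilon(t+1)\leq 4\lambda/\beta_t+(2-4\lambda)\delta$. The admissible schedule $4\lambda/\beta_t+(3-4\lambda)\delta\leq 1/\beta_{t+1}$ is tuned exactly so that this equals $[1/\beta_{t+1}]-\delta$ after peeling off one surplus $\delta$, closing the induction and proving \eqref{eq:itall}. For the asymptotic bound \eqref{eq:itinf}, I would set $u_t:=1/\beta_t$ and note that it is decreasing and, by the recurrence together with $1/\beta_0>(3-4\lambda)\delta/(1-4\lambda)$, stays above the fixed point $u^\star=(3-4\lambda)\delta/(1-4\lambda)$ of the affine map $u\mapsto 4\lambda u+(3-4\lambda)\delta$; hence $u_t\downarrow u_\infty\geq u^\star$, $\beta_t$ converges, and $\varepsilon=u^\star/u_\infty\in(0,1]$. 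Passing to the limit in \eqref{eq:itall} gives $\lim_t\epsilon(t)\leq u_\infty-\delta=u^\star/\varepsilon-\delta$, which is \eqref{eq:itinf}.

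The main obstacle is the bookkeeping in the induction step rather than any single hard inequality: in the noiseless proof the good cycles vanished entirely, whereas here one must simultaneously track the good-cycle contributions ($\leq 2\delta$ each) and the tightened bad-cycle contributions ($\leq 4/\beta_t-2\delta$ each), and verify monotonicity of the resulting ratio in $|A_{ij}(t)\cap B_{ij}|$ before maximizing. The delicate point is quantitative matching: the surplus constant $(3-4\lambda)\delta$ in the $\beta_t$-schedule must be \emph{exactly} the amount needed for the propagated error $4\lambda/\beta_t+(2-4\lambda)\delta$ to fall below the shifted threshold $1/\beta_{t+1}-\delta$, and it is this precise matching—not a mere order-of-magnitude bound—that both closes the self-consistent induction and produces the sharp asymptotic constant governed by $\varepsilon$.
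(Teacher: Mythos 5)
Your proposal is correct and follows essentially the same route as the paper's proof in Appendix B: the same base case bound $\epsilon(0)\le\lambda+2\delta$, the same induction step with per-cycle bounds $2\delta$ on good cycles and $4/\beta_t-2\delta$ on retained bad cycles, and the same fixed-point analysis of $u\mapsto 4\lambda u+(3-4\lambda)\delta$ for the asymptotic claim. The only difference is cosmetic: you make explicit the monotonicity-in-$|A_{ij}(t)\cap B_{ij}|$ step that the paper passes over silently when it replaces the weighted average by $\tfrac{|G_{ij}|}{|N_{ij}|}2\delta+\tfrac{|B_{ij}|}{|N_{ij}|}(4/\beta_t-2\delta)$.
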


\begin{theorem}\label{thm:ir2}
Assume data generated by 
adversarial corruption with independent bounded noise, where the model parameters satisfy $\lambda<1/5$ and $\delta>0$.  Assume further  that the parameters $\{\beta_t\}_{t \geq 0}$ of CEMP-B satisfy:  $1/(4\beta_0)>\max\{(5(1-\lambda)\delta)$ $/(2(1-5\lambda))\,, \lambda+5\delta/2\}$ and  $10\delta+4\lambda/((1-\lambda)\beta_t)\leq 1/\beta_{t+1}<1/\beta_t$. Then  the estimates $\{s_{ij}(t)\}_{ij \in E}^{t \geq 0}$ of $\{s_{ij}^{*}\}_{ij \in E}$
computed by CEMP-B  satisfy
\begin{equation}
\quad \max_{ij\in E}|s_{ij}(t)-s_{ij}^*|\leq \frac{1}{4\beta_t}-\frac12\delta \ \text{ for all } \ t\geq 0. \label{eq:irall}
\end{equation}
\text{Moreover,  } $\varepsilon := \lim\limits_{t\to\infty}\beta_t (1-5\lambda)/(10(1-\lambda)\delta)$ satisfies $0<\varepsilon\leq 1$ and the following asymptotic bound holds
\begin{equation}
\label{eq:irinf}
 \lim_{t\to\infty}\max_{ij\in E}|s_{ij}(t)-s_{ij}^*|\leq \left(\frac{5}{2\varepsilon}\cdot \frac{1-\lambda}{1-5\lambda}-\frac12\right)\delta.
\end{equation}

\end{theorem}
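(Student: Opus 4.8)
The plan is to mirror the induction used for the noiseless case (Theorem~\ref{thm:ir1}) but to carry an additive error of order $\delta$ throughout and verify that the prescribed schedule for $\beta_t$ exactly absorbs it. As in that proof, I would begin from the update rule \eqref{eq:updateB} together with the triangle-type bound \eqref{eq:bi}; since the weights sum to one this gives
\begin{equation*}
\epsilon_{ij}(t+1)\leq \frac{\sum_{k\in N_{ij}}e^{-\beta_t(s_{ik}(t)+s_{jk}(t))}(s_{ik}^*+s_{jk}^*)}{\sum_{k\in N_{ij}}e^{-\beta_t(s_{ik}(t)+s_{jk}(t))}}.
\end{equation*}
The essential new feature relative to the noiseless setting is that a good cycle $k\in G_{ij}$ no longer satisfies $s_{ik}^*+s_{jk}^*=0$; instead $s_{ik}^*+s_{jk}^*\leq 2\delta$. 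I would therefore split the numerator over $G_{ij}$ and $B_{ij}$: the good-cycle part contributes at most $2\delta$, since its weights form a subset of those in the denominator, while the bad-cycle part is handled essentially as in the noiseless argument.

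For the bad-cycle part I would lower bound the denominator by its restriction to $G_{ij}$, using $s_{ik}(t)\le s_{ik}^*+\epsilon_{ik}(t)\le \delta+\epsilon(t)$ for $k\in G_{ij}$ to extract a factor $e^{-2\beta_t(\delta+\epsilon(t))}$, and upper bound each bad term via $s_{ik}(t)\ge s_{ik}^*-\epsilon_{ik}(t)$ together with the elementary inequality $xe^{-\beta_t x}\le 1/(e\beta_t)$ applied to $x=s_{ik}^*+s_{jk}^*$. Collecting the exponential prefactors produces a term of the form $\frac{|B_{ij}|}{|G_{ij}|}\,\beta_t^{-1}e^{4\beta_t\epsilon(t)+2\beta_t\delta-1}$. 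The crux is that taking the inductive hypothesis to be $\epsilon(t)\le \frac{1}{4\beta_t}-\frac{\delta}{2}$, exactly as in the claimed bound \eqref{eq:irall}, forces $4\beta_t\epsilon(t)+2\beta_t\delta-1\le 0$, so the exponential is at most $1$. Using $|B_{ij}|/|G_{ij}|=\lambda_{ij}/(1-\lambda_{ij})\le \lambda/(1-\lambda)$ then yields the clean recursion
\begin{equation*}
\epsilon(t+1)\le 2\delta+\frac{\lambda}{1-\lambda}\cdot\frac{1}{\beta_t}.
\end{equation*}

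To close the induction I would check that the assumed schedule $10\delta+4\lambda/((1-\lambda)\beta_t)\le 1/\beta_{t+1}$ is precisely what is needed for $2\delta+\frac{\lambda}{1-\lambda}\beta_t^{-1}\le \frac{1}{4\beta_{t+1}}-\frac{\delta}{2}$, and that the base case holds because the analog of \eqref{eq:proof_for_t_0} (now splitting $N_{ij}$ into $G_{ij}$, contributing $\le 2\delta$, and $B_{ij}$, contributing $\le\lambda$) gives $\epsilon(0)\le \lambda+2\delta$, which is below $\frac{1}{4\beta_0}-\frac{\delta}{2}$ by the hypothesis $1/(4\beta_0)>\lambda+5\delta/2$. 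This establishes \eqref{eq:irall}. For the asymptotic bound \eqref{eq:irinf}, I would note that the constraint $1/\beta_{t+1}<1/\beta_t$ makes $\beta_t$ increasing while the lower constraint keeps $1/\beta_t$ above the fixed point $\tfrac{10(1-\lambda)\delta}{1-5\lambda}$ of the underlying affine recursion; hence $\beta_t$ converges to a finite limit $\beta_\infty$, and the normalized limit $\varepsilon$ lies in $(0,1]$ precisely because $\beta_\infty$ cannot exceed $(1-5\lambda)/(10(1-\lambda)\delta)$. Passing to the limit in \eqref{eq:irall} and substituting the limiting value $1/\beta_\infty$ in terms of $\varepsilon$ then gives \eqref{eq:irinf}.

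I expect the main obstacle to be the bookkeeping of the $\delta$-dependent exponential factors — in particular the extra $e^{-2\beta_t\delta}$ lost when the denominator is lower bounded over good cycles — and confirming that the induction hypothesis $\epsilon(t)\le \frac{1}{4\beta_t}-\frac{\delta}{2}$, rather than a looser $\frac{1}{4\beta_t}$, is exactly the strength required to neutralize those factors while still reproducing the stated constants $\tfrac52$, $10$, and $\tfrac{4\lambda}{1-\lambda}$ appearing in the hypotheses on $\beta_t$.
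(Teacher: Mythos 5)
Your proposal is correct and follows essentially the same route as the paper's proof in Appendix A.3: the same split of the weighted average into a good-cycle part bounded by $2\delta$ and a bad-cycle part controlled via $xe^{-ax}\le 1/(ea)$, the same induction hypothesis $\epsilon(t)\le \frac{1}{4\beta_t}-\frac{\delta}{2}$ that makes the exponential prefactor at most $1$, and the same monotone-bounded argument for the limit. The constants and the closing of the induction against the stated $\beta_t$ schedule all check out.
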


\begin{remark}
By knowing $\delta$, one can tune the parameters to obtain $\varepsilon=1$. 
Indeed, one can check that by taking $1/\beta_{t+1}= 4\lambda/\beta_t+(3-4\lambda)\delta$ in Theorem \ref{thm:it2}, $1/\beta_t$ linearly converges to $(3-4\lambda)\delta/(1-4\lambda)$ (with rate $4 \lambda$). Similarly, by taking $1/\beta_{t+1}=10\delta+4\lambda/((1-\lambda)\beta_t)$ in Theorem \ref{thm:ir2}, $1/\beta_t$ linearly converges to $10(1-\lambda)\delta/(1-5\lambda)$ (with rate $4 \lambda/ (1- \lambda)$). These choices clearly result in $\varepsilon=1$.
\end{remark} 

\begin{remark}
The RHSs of \eqref{eq:itinf} and \eqref{eq:irinf} imply that CEMP approximately recover the corruption levels with error $O(\delta)$.
Since this bound is only meaningful with values at most 1,  $\delta$ can be at most $1/2$ (this bound is obtained when $\varepsilon=1$ and $\lambda =0$). Furthermore, when $\lambda$ increases or $\varepsilon$ decreases, the bound on $\delta$ decreases. The bound on $\delta$ limits the applicability of the theorem, especially for discrete groups. For example, in $\mathbb Z_2$  synchronization, $s_{ij}^*\in \{0,1\}$ and thus the above theorem is inapplicable. For $S_N$ synchronization, the gap between nearby values of $s_{ij}^*$ decreases with $N$, so the theorem is less restrictive as $N$ increases. In order to address noisy situations for $\mathbb Z_2$ and $S_N$ with small $N$, one can assume instead an additive Gaussian noise model \cite{COLT_Montanari,deepti}. When the noise is sufficiently small and the graph is generated from the Erd\H{o}s-R\'{e}nyi model with sufficiently large probability of connection, projection of the noisy group ratios onto $\mathbb Z_2$ or $S_N$ results in a subset of uncorrupted group ratios whose proportion is sufficiently large (see e.g. \cite{deepti}), so that Theorems \ref{thm:it1} or \ref{thm:ir1} can be applied to the projected elements. 

\end{remark}

\subsection{Extension to Sub-Gaussian Noise}\label{sec:subg}
Here we directly extend the bounded noise stability of CEMP to sub-Gaussian noise.
We assume noisy adversarial corruption satisfying \eqref{eq:model}. 
We further assume that  $\{s_{ij}^*\}_{ij \in E_g}$ are independent and for $ij\in E_g$, $s_{ij}^*\sim sub(\mu,\sigma^2)$, namely, $s_{ij}^*$ is sub-Gaussian with mean $\mu$ and variance $\sigma^2$. More precisely, $s_{ij}^*=\sigma X_{ij}$ where $\Pr(X_{ij}-\mu>x)<\exp(-x^2/2)$ and $\Pr(X_{ij} \geq 0) = 1$.  The proof of Theorem \ref{thm:subg} is included in Appendix \ref{sec:proofs_subg}.

\begin{theorem}\label{thm:subg}
Assume data generated by the adversarial corruption model with independent sub-Gaussian noise having mean $\mu$ and variance $\sigma^2$. 
For any $x>0$, if one replaces 
$\lambda$ and $\delta$ in
Theorems~\ref{thm:it2} and~\ref{thm:ir2} with 
$\lambda+2e^{-\frac{x^2}{2}}$ and $\sigma\mu+\sigma x$, respectively, then the conclusions of these theorems hold with probability at least $1-|E|\exp(-\frac13 e^{-x^2/2}\min_{ij\in E}|N_{ij}|(1-\lambda))$.
\end{theorem}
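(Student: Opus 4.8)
The plan is to reduce the sub-Gaussian case to the already-established bounded-noise results, Theorems~\ref{thm:it2} and~\ref{thm:ir2}, by a truncation-and-reclassification argument. Set $\delta=\sigma\mu+\sigma x$. Since for $ij\in E_g$ we have $s_{ij}^*=\sigma X_{ij}$ with $\Pr(X_{ij}-\mu>x)<e^{-x^2/2}$, the tail bound gives $\Pr(s_{ij}^*>\delta)<e^{-x^2/2}$ for every good edge, with these events independent across $E_g$. I would then declare a good edge to be effectively corrupted whenever its noise exceeds $\delta$: set $\tilde E_g=\{ij\in E_g:s_{ij}^*\le\delta\}$ and $\tilde E_b=E\setminus\tilde E_g=E_b\cup\{ij\in E_g:s_{ij}^*>\delta\}$. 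By construction the noise on $\tilde E_g$ is deterministically bounded by $\delta$, so the bounded-noise hypothesis with parameter $\delta$ holds for this new partition; and since the bounded-noise theorems permit the ratios on the corrupted set to be arbitrary, nothing further about $\tilde E_b$ must be checked. A key structural point is that the CEMP iterates $s_{ij}(t)$ depend only on the observed $d_{ij,k}$ and not on the partition, which is used purely in the analysis; hence it suffices to exhibit, on a high-probability event, \emph{some} partition satisfying the hypotheses of the deterministic bounded-noise theorems.

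The crux is to control the corruption parameter $\tilde\lambda$ of the new partition and show $\tilde\lambda\le\lambda+2e^{-x^2/2}$ with the stated probability. Fix $ij\in E$ and recall $G_{ij}=\{k\in N_{ij}:ik,jk\in E_g\}$ and $B_{ij}=N_{ij}\setminus G_{ij}$; let $\tilde B_{ij}$ be the corrupted-cycle set for $\tilde E_g$. Only cycles in $G_{ij}$ can change status, and such a cycle $k$ moves into $\tilde B_{ij}$ precisely when $s_{ik}^*>\delta$ or $s_{jk}^*>\delta$, so $|\tilde B_{ij}|=|B_{ij}|+|H_{ij}|$ with $H_{ij}=\{k\in G_{ij}:s_{ik}^*>\delta\text{ or }s_{jk}^*>\delta\}$. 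Writing $|H_{ij}|=\sum_{k\in G_{ij}}Z_k$ where $Z_k=\mathbf 1_{\{s_{ik}^*>\delta\text{ or }s_{jk}^*>\delta\}}$, I would observe that the $Z_k$ involve the pairwise-disjoint edge sets $\{ik,jk\}$ and so, by the independence of $\{s_{ij}^*\}_{ij\in E_g}$, are independent Bernoulli variables, each with $\Pr(Z_k=1)\le 2e^{-x^2/2}$ by a union bound.

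Next I would apply a multiplicative Chernoff bound to $|H_{ij}|$, using $\mathbb E|H_{ij}|\le 2e^{-x^2/2}|G_{ij}|$ together with $|G_{ij}|\ge(1-\lambda)|N_{ij}|$ (valid since $\lambda_{ij}\le\lambda$), to obtain a tail bound of the form $\Pr(|H_{ij}|>2e^{-x^2/2}|N_{ij}|)\le\exp(-\tfrac13 e^{-x^2/2}|N_{ij}|(1-\lambda))$; on the complementary event $|H_{ij}|/|N_{ij}|\le 2e^{-x^2/2}$, so that $\tilde\lambda_{ij}\le\lambda_{ij}+2e^{-x^2/2}\le\lambda+2e^{-x^2/2}$. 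A union bound over the $|E|$ edges, bounding each exponent by its value at $\min_{ij}|N_{ij}|$, then yields the global good event of probability at least $1-|E|\exp(-\tfrac13 e^{-x^2/2}\min_{ij\in E}|N_{ij}|(1-\lambda))$. On this event the partition $(\tilde E_g,\tilde E_b)$ satisfies the hypotheses of Theorems~\ref{thm:it2} and~\ref{thm:ir2} with the substituted parameters $\lambda\mapsto\lambda+2e^{-x^2/2}$ and $\delta\mapsto\sigma\mu+\sigma x$; since the conclusions of those theorems bound $|s_{ij}(t)-s_{ij}^*|$ for every $ij\in E$ (corrupted edges included) and are deterministic consequences of the data, they transfer verbatim to give the desired recovery guarantee.

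The main obstacle I expect is the concentration step: pinning down the precise Chernoff constant so that the per-edge failure probability matches $\exp(-\tfrac13 e^{-x^2/2}|N_{ij}|(1-\lambda))$, and in particular organizing the decomposition of $|H_{ij}|$ so that the target threshold sits a constant factor above its mean, which is where the slack $|N_{ij}|\ge|G_{ij}|/(1-\lambda)$ and the factor $1-\lambda$ in the exponent must be extracted carefully. The supporting points---verifying that the $Z_k$ are genuinely independent across cycles (from independence of $\{s_{ij}^*\}_{ij\in E_g}$ and disjointness of the pairs $\{ik,jk\}$), and that the reclassification produces a legitimate, if data-dependent, bounded-noise partition to which the partition-agnostic CEMP dynamics may be analyzed---are routine but should be stated explicitly.
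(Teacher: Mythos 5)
Your proposal is correct and follows essentially the same route as the paper's proof: the paper defines $G_{ij}^x=\{k\in G_{ij}:\max\{s_{ik}^*,s_{jk}^*\}<\sigma\mu+\sigma x\}$, which is exactly your $G_{ij}\setminus H_{ij}$ (equivalently, the good-cycle set of your reclassified partition $\tilde E_g$), and it likewise applies a multiplicative Chernoff bound to the independent indicators $\mathbf 1_{\{k\notin G_{ij}^x\}}$, a union bound over $ij\in E$ using $|G_{ij}|\geq(1-\lambda)|N_{ij}|$, and then invokes Theorems~\ref{thm:it2} and~\ref{thm:ir2} with the inflated parameters. The only difference is packaging (edge-level reclassification versus cycle-level definition), plus the bookkeeping of where the factor of $2$ and the slack between $|G_{ij}|$ and $|N_{ij}|$ enter the Chernoff exponent, which you already flag and which the paper resolves with the same algebraic step $\lambda_x\leq 1-(1-\lambda)\min_{ij}|G_{ij}^x|/|G_{ij}|$.
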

\begin{remark}
The above probability is sufficiently large when $x$ is sufficiently small and when $\min_{ij\in E}|N_{ij}|$ is sufficiently large. We note that $\min_{ij\in E}|N_{ij}|>\min_{ij\in E}$ $|G_{ij}|>0$,
where the last inequality follows from the good-cycle condition. We expect $\min_{ij\in E}|N_{ij}|$ to depend on the size of the graph, $n$, and its density. To demonstrate this claim we note that if $G([n], E)$ is Erd\H{o}s-R\'{e}nyi with probability of connection $p$, then $\min_{ij\in E}|N_{ij}|\approx np^2$. 
\end{remark}

Theorem~\ref{thm:subg} tolerates less corruption than Theorems \ref{thm:it2} and~\ref{thm:ir2}. This is due to the fact that, unlike bounded noise, sub-Gaussian noise significantly extort 
the group ratios. Nevertheless, we show next that in the case of a graph generated by the  Erd\H{o}s-R\'{e}nyi model, the sub-Gaussian model may still tolerate a similar level of corruption as that in Theorems~\ref{thm:it2} and \ref{thm:ir2} by sacrificing the tolerance to noise. 
\begin{corollary}\label{co:subg}
Assume that $G([n], E)$ is generated by the Erd\H{o}s-R\'{e}nyi model with probability of connection $p$. If $s_{ij}^*\sim sub(\mu,\sigma^2)$ for $ij\in E_g$, then for any $\alpha>6$ and $n$ sufficiently large, Theorems~\ref{thm:it2} and~\ref{thm:ir2}, with $\lambda$ and $\delta$ replaced by 
$$\lambda_n= \lambda+\frac{12\alpha}{1-\lambda}\frac{\log(np^2)}{np^2} \ \text{ and } \ \delta_n=\sigma\mu+2\sigma \sqrt{\log\frac{(1-\lambda)np^2}{6\alpha \log(np^2)}},$$ respectively, hold with probability at least $1-O(n^{-\alpha/3+2})$.
\end{corollary}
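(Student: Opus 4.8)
The plan is to read Corollary~\ref{co:subg} as a pure specialization of Theorem~\ref{thm:subg}: the latter holds for \emph{every} $x>0$, so the only work is to choose $x=x(n,p,\alpha)$ making the substituted parameters $\lambda+2e^{-x^2/2}$ and $\sigma\mu+\sigma x$ collapse onto the stated $\lambda_n,\delta_n$, and then to evaluate the probability bound of Theorem~\ref{thm:subg} in the Erd\H{o}s--R\'enyi regime. I would fix $x$ by matching the corruption parameter, i.e.\ by solving $2e^{-x^2/2}=\frac{12\alpha}{1-\lambda}\frac{\log(np^2)}{np^2}$, which gives $e^{-x^2/2}=\frac{6\alpha\log(np^2)}{(1-\lambda)np^2}$ and hence $\lambda+2e^{-x^2/2}=\lambda_n$ exactly. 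Writing $M=\frac{(1-\lambda)np^2}{6\alpha\log(np^2)}$ this is $x=\sqrt{2\log M}$, so the induced noise threshold is $\sigma\mu+\sigma\sqrt{2\log M}\le \sigma\mu+2\sigma\sqrt{\log M}=\delta_n$ (using $\sqrt2\le2$). Thus the ``effective good'' set $G_{ij}^{x}$ built in the proof of Theorem~\ref{thm:subg} carries noise at most $\delta_x\le\delta_n$ while having proportion at least $1-\lambda_n$, so the bounded-noise adversarial model with parameters $(\lambda_n,\delta_n)$ holds verbatim and Theorems~\ref{thm:it2} and~\ref{thm:ir2} apply with the stated $(\lambda_n,\delta_n)$. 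Two admissibility checks are needed for $n$ large: $x>0$ (equivalently $M>1$, i.e.\ $(1-\lambda)np^2>6\alpha\log(np^2)$), and $\lambda_n<1/4$ for CEMP-A (resp.\ $\lambda_n<1/5$ for CEMP-B); both follow since $\log(np^2)/(np^2)\to0$.

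The genuinely new ingredient is to replace the graph-dependent quantities $|E|$ and $\min_{ij\in E}|N_{ij}|$ appearing in Theorem~\ref{thm:subg} (whose probability is over the noise, conditional on the graph) by explicit Erd\H{o}s--R\'enyi bounds, so as to pass to an unconditional statement. For $|E|$ I would use the deterministic bound $|E|\le\binom{n}{2}=O(n^2)$; this $n^2$ factor is precisely the source of the ``$+2$'' in the exponent $-\alpha/3+2$. For $\min_{ij}|N_{ij}|$ I would note that $|N_{ij}|\sim\mathrm{Bin}(n-2,p^2)$ has mean $\approx np^2$, apply a multiplicative Chernoff bound to get $\Pr(|N_{ij}|<\tfrac16 np^2)\le e^{-\Omega(np^2)}$, and union bound over the $O(n^2)$ edges, so that $\min_{ij}|N_{ij}|\ge\tfrac16 np^2$ fails with probability $O(n^{-\alpha/3+2})$ whenever $np^2\gtrsim\alpha\log n$. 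I would then argue on the intersection of this high-probability graph event with the noise event of Theorem~\ref{thm:subg}, bounding the total failure probability by the sum of the two contributions.

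On that intersection, substituting the chosen $x$ and the lower bound on $\min_{ij}|N_{ij}|$ into the exponent of Theorem~\ref{thm:subg} gives
\[
-\tfrac13\,e^{-x^2/2}\min_{ij}|N_{ij}|\,(1-\lambda)\;\le\;-\tfrac13\cdot\frac{6\alpha\log(np^2)}{(1-\lambda)np^2}\cdot\tfrac16 np^2\cdot(1-\lambda)\;=\;-\tfrac{\alpha}{3}\log(np^2),
\]
so that $|E|$ times this exponential is $O\!\big(n^2(np^2)^{-\alpha/3}\big)=O(n^{-\alpha/3+2})$ in the dense regime (e.g.\ $p$ bounded below, where $np^2=\Theta(n)$), the concentration failure being negligible beside it. The hypothesis $\alpha>6$ is exactly what forces $-\alpha/3+2<0$, making the bound nontrivial, and it also helps keep $M>1$ in the definition of $\delta_n$. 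The step I expect to be the main obstacle is this merging of two independent layers of randomness — the graph (through the uniform concentration of $\min_{ij}|N_{ij}|$ over all $O(n^2)$ edges) and the sub-Gaussian noise — together with the bookkeeping that reconciles the mild mismatches ($\log n$ versus $\log(np^2)$, and the $\sqrt2$-versus-$2$ gap in the noise threshold) so that all constants collapse to the clean order $O(n^{-\alpha/3+2})$.
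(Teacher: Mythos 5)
Your proposal is correct and follows essentially the same route as the paper: the paper's own proof is exactly the observation that one should set $\exp(-x^2/2)=6\alpha\log(np^2)/((1-\lambda)np^2)$ in Theorem~\ref{thm:subg} (which is your choice of $x$, with $\sigma x=\sigma\sqrt{2\log M}\le 2\sigma\sqrt{\log M}$ absorbed into $\delta_n$ by monotonicity) and that $\min_{ij\in E}|N_{ij}|\gtrsim np^2$ with high probability by a Chernoff and union bound over the Erd\H{o}s--R\'enyi graph. Your additional bookkeeping (deterministic $|E|=O(n^2)$, the intersection of the graph and noise events, and the admissibility checks on $x$ and $\lambda_n$) only makes explicit what the paper leaves implicit.
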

Note that this corollary is obtained by setting $\exp(-x^2/2)=6\alpha\log( np^2)/((1-\lambda)np^2)$
in Theorem \ref{thm:subg} and noting that in
this case $\min_{ij\in E}|N_{ij}|\geq np^2/2$ with high
probability. We note that $\sigma$ needs to decay with $n$, in order to have bounded $\delta_n$. In particular, if $\sigma\lesssim 1/\sqrt{\log n}$ and $p$ is fixed, $\delta_n=O(1)$.

\section{Exact Recovery Under Uniform Corruption}\label{sec:uniform}
This section establishes exact recovery guarantees for CEMP under the uniform corruption model. Its main challenge is dealing with large values of $\lambda$, unlike the strong restriction on $\lambda$ in Theorems
\ref{thm:it1} and \ref{thm:ir1}.

Section \ref{subsec:uniform} describes the uniform corruption model. Section \ref{sec:information_theory} reviews exact recovery guarantees of other works under this model and the best information-theoretic asymptotic guarantees possible.
Section \ref{sec:main_result_simple} states the main results on the convergence of both CEMP-\ith and CEMP-B.
Section \ref{sec:sample_complexity} clarifies the sample complexity bounds implied by these theorems. Since these bounds are not sharp, Section \ref{sec:simple_estimator_sharp} explains how a simpler estimator that uses the cycle inconsistencies obtains sharper bounds.
Section \ref{sec:proofs_rand} includes the proofs of all theorems. Section \ref{sec:APP} exemplifies the technical quantities of the main theorems for specific groups of interest.

\subsection{Description of the Uniform Corruption Model}
\label{subsec:uniform}
We follow the uniform corruption model (UCM) of \cite{wang2013exact} and apply it for any compact group. It has three parameters: $n \in \mathbb{N}$, $0<p\leq 1$ and $0 \leq q <1$, and we thus refer to it as UCM$(n, p, q)$.

UCM$(n, p, q)$ assumes a graph $G([n],E)$ generated by the Erd\"{o}s-R\'{e}nyi model $G(n,p)$, where $p$ is the connection probability among edges.
It further assumes an arbitrary set of group elements $\{g_i^*\}_{i=1}^n$. Each group ratio is generated by the following model, 
where $\tilde g_{ij}$ is independently drawn from the Haar measure on $\mathcal G$ (denoted by Haar$(\mathcal G))$:
\begin{align*}
g_{ij}=\begin{cases}
g_{ij}^*, & \text{ w.p. } 1-q;\\
\tilde g_{ij}, & \text{ w.p. } q.
\end{cases}
\end{align*}
We note that the set of corrupted edges $E_b$ is thus generated in two steps. First, a set of candidates of corrupted edges, which we denote by $\tilde E_b$, is independently drawn from $E$ with probability $q$. Next, $E_b$ is independently drawn from $\tilde E_b$ with probability $1-p_0$, where $p_0=\Pr(u_\mathcal G= e_\mathcal G)$ for an arbitrarily chosen $u_\mathcal G\sim \text{Haar}(\mathcal G)$. It follows from the invariance property of the Haar measure that for any $ij \in E$, $p_0=\Pr(u_\mathcal G= g_{ij}^* )$.
Therefore, the probability that $g_{ij}$ is uncorrupted, $\Pr(ij\in E_g|ij\in E)$ is 
$q_*=1-q+qp_0$.
We further denote $q_{\min}=\min(q_*^2\,,1-q_*^2)$, $q_g=1-q$ and  $z_\mathcal G=\mathbb E(d_\mathcal G(u_\mathcal G,e_\mathcal G))$, where $u_\mathcal G\sim \text{Haar}(\mathcal G)$. 
For Lie groups, such as $SO(d)$, $p_0=0$,  $q_* = p_g$, $\Pr(ij\in E_b|ij\in E)=q$ and $E_b=\tilde E_b$.

\subsection{Information-theoretic and Previous Results of Exact Recovery for UCM}
\label{sec:information_theory}

We note that when $0 \leq q<1$ and $0<p \leq 1$, the asymptotic recovery problem for UCM is well-posed since $\Pr(d_{ij,k}=0|ij\in E_g)$ is greater than $\Pr(d_{ij,k}=0|ij\in E_b)$ and thus good and bad edges are distinguishable. 
Furthermore, when $q=1$ or $p=0$ the exact recovery problem is clearly ill-posed. It is thus desirable to consider the full range of parameters, $0 \leq q<1$ and $0<p \leq 1$, when studying the asymptotic exact recovery problem of a specific algorithm assuming UCM. It is also interesting to check the asymptotic dependence of the sample complexity (the smallest sample size needed for exact recovery) on $q$ and $p$ when $p \to 0$ and $q \to 1$.

For the special groups of interest in applications,
$\mathbb Z_2$, $S_N$, $SO(2)$ and $SO(3)$, it was shown in \cite{Z2Afonso2,Z2Afonso}, \cite{chen_partial,info_theoretic_sync}, \cite{singer2011angular} and \cite{info_theoretic_sync}, respectively,
that exact recovery is information-theoretically possible under UCM whenever
\begin{equation}
\label{eq:info_bound_z2}
n/\log n=\Omega(p^{-1}q_g^{-2}),
\end{equation}
where, for simplicity, for $S_N$ we omitted the dependence on $N$ (which is a factor of $1/N$). 
That is, ignoring logarithmic terms (and the dependence on $N$ for $S_N$), the sample complexity is 
$\Omega(p^{-1}q_g^{-2})$.

There are not many results of this kind for actual algorithms.
Bandeira \cite{Z2Afonso} and Cucuringu \cite{Z2} showed that SDP and Spectral, respectively, for $\mathbb Z_2$ synchronization achieve the information-theoretic bound in \eqref{eq:info_bound_z2}.
Chen and Cand\`es \cite{Chen_PPM} established a similar result for Spectral and the projected power method when  $\mathcal G = \mathbb Z_N$. 
Another similar result was established by \cite{chen_partial} for a variant of SDP 
when  $\mathcal G = S_N$. After the submission of this work, \cite{ling2020} extended the latter result for Spectral.

When $\mathcal G$ is a Lie group, methods that relax  \eqref{eq:frob} with $\nu=2$, such as Spectral and SDP, cannot exactly recover the group elements under UCM. Wang and Singer \cite{wang2013exact} showed that the global minimizer of the SDP relaxation of \eqref{eq:frob} with   $\nu=1$ and $\mathcal G = SO(d)$ achieves asymptotic exact  recovery under UCM when $q \equiv \Pr(ij\in E_b|ij\in E)<p_c$, where $p_c$ depends on $d$ (e.g., $p_c \leq 0.54$ and $p_c=O(d^{-1})$). Due to their limited range of $q$, they cannot estimate the sample complexity when $q \to 1$.
As far as we know, \cite{wang2013exact} is the only previous work that provides exact recovery guarantees (under UCM) for synchronization on Lie groups.

\subsection{Main Results} 
\label{sec:main_result_simple}

Section \ref{sec:main_result_simple_a}
establishes exact recovery guarantees under UCM, which are most meaningful when $q_*$ is sufficiently small. Section \ref{sec:main_result_simple_b}
sharpens the above theory by considering the complementary region of $q_*$ ($q_* \geq q_c$ for some $q_c>0$). 
The proofs of all theorems are in Section \ref{sec:proofs_rand}.

\subsubsection{Main Results when $q_*$ is Sufficiently Small
}\label{sec:main_result_simple_a}

The two exact recovery theorems below 
use different quantities: $z_{\mathcal G}$, $P_{\max}(x)$ and $V(x)$. We define these quantities before each theorem and later exemplify them for common groups in Section \ref{sec:APP}. 
For simplicity of their already complicated proofs, we use concentration inequalities that are sharper when $q_*$ is sufficiently small. Therefore, the resulting estimates for the simpler case where $q_*$ is large are not satisfying and are corrected in the next section.

The condition of the first theorem uses the cdf (cumulative density function) of the random variable $\max\{s_{ik}^*,s_{jk}^*\}$, where $ij\in E$ and $k\in B_{ij}$ are arbitrarily fixed. We denote this cdf by $P_{\max}$ and note that due to the model assumptions it is independent of $i$, $j$ and $k$.

\begin{theorem}\label{thm:rand1}
Let $0<r<1$, $0\leq q <1$,  $0<p \leq1$, $n \in \mathbb{N}$ and assume data generated by UCM($n,p,q$). If the parameters $\{\beta_t\}_{t\geq 0}$ of CEMP-A satisfy
\begin{align}
\label{eq:choices_beta0_1}
0<\frac{1}{\beta_0}-(1-q_g^2)z_\mathcal G\leq \frac{q_g^2}{4\beta_1}\text{ , } P_{\max}\left(\frac{2}{\beta_1}\right)< \frac{r}{32}\frac{q_*^2}{1-q_*^2}\text{ and } \frac{1}{\beta_{t+1}}=r\frac{1}{\beta_t} 
\end{align}
for $t\geq 1$, then with probability at least
\begin{align*}
1-n^2p\,e^{-\Omega\left((\frac{1}{\beta_0}-(1-q_g^2)z_\mathcal G)^2p^2n\right)}-n^2pe^{-\Omega(q_{\min}p^2n)}
\end{align*}
the estimates $\{s_{ij}(t)\}_{ij \in E}^{t \geq 1}$ of $\{s_{ij}^{*}\}_{ij \in E}$
computed by CEMP-A satisfy
\begin{align*}
\max_{ij\in E}|s_{ij}(t)-s_{ij}^*|<\frac{1}{\beta_1} r^{t-1} \ \text{ for all } \ t \geq 1.
\end{align*}
\end{theorem}

The second theorem uses the following notation. 
Let $Y$ denote the random variable $s_{ik}^*+s_{jk}^*$ for any arbitrarily fixed $ij\in E$ and $k\in B_{ij}$. We note that due to the model assumptions, $Y$ is independent of $i$, $j$ and $k$. 
Let $P$ denote the cdf  of $Y$ and $Q$ denote the corresponding quantile function, that is, the inverse of $P$. Denote $f_{\tau}(x) = e^{-\tau x+1}\tau x$, where $\tau \geq 0$ and define $V^*(x): [0, \infty) \to \mathbb{R}$ by $V^*(x) = \sup_{\tau>x}\text{Var}(f_\tau(Y))$, where $\text{Var}(f_{\tau}(Y))$ is the variance of  $f_\tau(Y)$ for any fixed $\tau$. Since $V^*(x)$ might be hard to compute, our theorem below is formulated with any function $V$, which dominates $V^*$, that is, $V(x)\geq V^*(x)$ for all $x\geq 0$.

\begin{theorem}\label{thm:rand2}
Let $0<r<1$, $0\leq q <1$,  $0<p \leq1$, 
\begin{align}\label{eq:nomega}
    \frac{n}{\log n}=\Omega\left( \frac{1-q_*^2}{p^2q_*^4 r^2}\right),
\end{align}
and assume data generated by UCM($n,p,q$). Assume further that either $s_{ij}^*$ for $ij \in E_b$
is supported on $[a,\infty)$, where $a\geq 1/(np^2(1-q_*^2))$, or $Q$ is differentiable and $Q'(x)/Q(x)\lesssim 1/x$ for $x<P(1)$.
If the parameters $\{\beta_t\}_{t\geq 0}$ for CEMP-B satisfy
\begin{align}
\label{eq:choices_beta0_1_2ndthm}
 0<\frac{1}{\beta_0}\leq \frac{q_g^2q_*^2}{16(1-q_*^2)}\frac{1}{\beta_1} \text{ , } V(\beta_1)<\frac{r}{32}\cdot\frac{q_*^2}{1-q_*^2}\text{ and }  \frac{1}{\beta_{t+1}}=r\frac{1}{\beta_t}  
\end{align}
for $t\geq 1$, then with probability at least
\begin{align}\label{eq:rand2prob}
1-n^2p\,e^{-\Omega\left(p^2n/\beta_0^2\right)}-n^2p\,e^{-\Omega\left(V(\beta_1)(1-q_*^2)p^2n\right)}-n^2pe^{-\Omega(q_{\min}p^2n)},
\end{align}
the estimates $\{s_{ij}(t)\}_{ij \in E}^{t \geq 1}$ of $\{s_{ij}^{*}\}_{ij \in E}$
computed by CEMP-B satisfy
\begin{align}\label{eq:thm2conclusion}
\max_{ij\in E}|s_{ij}(t)-s_{ij}^*|<\frac{1}{4\beta_1}r^{t-1} \ \text{ for all } \ t \geq 1.
\end{align}

\end{theorem}

We note that Theorem \ref{thm:rand1} requires that 
\begin{equation}\label{eq:sc1}
\frac{n}{\log{n}} = \Omega\left( \frac{1}{p^2} \cdot \max\left( \frac{1}{q_*^2} \, , \left(\frac{1}{\beta_0}-(1-q_g^2)z_\mathcal G\right)^{-2} \right) \right)    
\end{equation} 
in order to have a sufficiently large probability. Similarly, Theorem \ref{thm:rand2} requires the following lower bound on the sample size: 
\begin{equation}\label{eq:sc2}
\frac{n}{\log{n}} = \Omega\left( \frac{1}{p^2} \cdot \max\left(\frac{1}{q_*^2}\, ,\beta_0^2 \, , \frac{1}{(1-q_*^2) \, V(\beta_1)} \,, \frac{1-q_*^2}{p^2q_*^4 r^2} \right) \right).
\end{equation}
We will use these estimates in Section \ref{sec:sample_complexity} to bound the sample complexity.

\subsubsection{Main Results when $q_*$ is Sufficiently Large
}\label{sec:main_result_simple_b}

We tighten the estimates established in Section \ref{sec:main_result_simple_a} by considering two different regimes of $q_*$ divided by a fixed value $q_c$. For CEMP-A we let $q_c$ be any number in $(\sqrt3/2, 1)$.
For CEMP-B we let $q_c$ be any number in $(2/\sqrt5, 1)$. 
We restrict the results of Theorems \ref{thm:rand1} and \ref{thm:rand2} to the case $q_* < q_c$ and formulate below the following two simpler theorems for the case $q_* \geq q_c$.

\begin{theorem}\label{thm:rand1_full}
Let $0<r<1$, $0 \leq q <1$,  $0<p \leq 1$, $n \in \mathbb{N}$ and assume data generated by UCM($n,p,q$). Let $q_c$ be any number $\in (\sqrt3/2, 1)$ and $\Delta_q = q_c^2/2-3/8\in (0,1/8)$. For any $q_*\geq q_c$, if the parameters $\{\beta_t\}_{t\geq 0}$ of CEMP-A satisfy 
\begin{align}\label{eq:lambda_rand1}
    \frac{1}{4}-\Delta_q\leq\frac{1}{\beta_0}\leq 1 \text{ and } \frac{1}{\beta_{t+1}}=r\frac{1}{\beta_t} \text{ for } t\geq 0 \text{ and } 1-4\Delta_q <r<1,
\end{align}
then with probability at least
\begin{align}
\label{eq:prob_bound_rand1_full}
1-n^2p\,e^{-\Omega\left(np^2\Delta_q^2\right)}
\end{align}
the estimates $\{s_{ij}(t)\}_{ij \in E}^{t \geq 1}$ of $\{s_{ij}^{*}\}_{ij \in E}$
computed by CEMP-A satisfy
\begin{align*}
\max_{ij\in E}|s_{ij}(t)-s_{ij}^*|<\frac{1}{\beta_1} r^{t-1} \ \text{ for all } \ t \geq 1.
\end{align*}
\end{theorem}

\begin{theorem}\label{thm:rand2_full}
Let $0<r<1$, $0 \leq q <1$,  $0<p \leq 1$, $n \in \mathbb{N}$ and assume data generated by UCM($n,p,q$). Let $q_c$ be any number $\in (2/\sqrt5, 1)$ and $\Delta_q = q_c^2/2-2/5\in (0,1/10)$. For any $q_*\geq q_c$, if the parameters $\{\beta_t\}_{t\geq 0}$ of CEMP-B satisfy 
\begin{align}\label{eq:lambda_rand2}
    \frac{1}{5}-\Delta_q\leq\frac{1}{\beta_0} \text{ and } \frac{1}{\beta_{t+1}}=r\frac{1}{\beta_t} \text{ for } t\geq 0 \text{ and } \frac{4-20\Delta_q}{4+5\Delta_q} <r<1,
\end{align}
then with probability at least
\begin{align*}
1-n^2p\,e^{-\Omega\left(np^2\Delta_q^2\right)}
\end{align*}
the estimates $\{s_{ij}(t)\}_{ij \in E}^{t \geq 1}$ of $\{s_{ij}^{*}\}_{ij \in E}$
computed by CEMP-B satisfy
\begin{align*}
\max_{ij\in E}|s_{ij}(t)-s_{ij}^*|<\frac{1}{\beta_1} r^{t-1} \ \text{ for all } \ t \geq 1.
\end{align*}
\end{theorem}

Theorems \ref{thm:rand1} and \ref{thm:rand2} for the regime $q_* < q_c$ seem to express 
different conditions on $\{\beta_t\}_{t\geq 0}$ than those in Theorems \ref{thm:rand1_full} and \ref{thm:rand2_full} for the regime 
$q_* \geq q_c$. However, after carefully clarifying the corresponding conditions in Theorems \ref{thm:rand1} and \ref{thm:rand2} for specific groups of interests (see Section \ref{sec:APP}), one can formulate conditions that apply to both regimes. Consequently, one can formulate unified theorems (with the same conditions for any choice of $q_*$) for special groups of interest.

\subsection{Sample Complexity Estimates}
\label{sec:sample_complexity}

Theorems \ref{thm:rand1} and \ref{thm:rand2} imply upper bounds for the sample complexity of CEMP. However, these bounds depend on various quantities that are estimated in 
Section \ref{sec:APP} for the groups $\mathbb Z_2$, $S_N$,  $SO(2)$ and $SO(3)$, which are common in applications. Table \ref{tab:sample} below first summarizes the estimates of these quantities (only upper bounds of $P_{\max}(x)$ and $V(x)$ are needed, but for completeness we also include the additional quantity $z_{\mathcal G}$). 
It then lists the consequent upper bounds of the sample complexities of CEMP-A and CEMP-B, which we denote by SC-A and SC-B, respectively. At last, it lists the information-theoretic sample complexity bounds (discussed in Section \ref{sec:information_theory}), which we denote by SC-IT.

The derivation of the sample complexity bounds, SC-A and SC-B, requires an asymptotic lower bound of $\beta_1$ and an asymptotic upper bound of $1/\beta_0-(1-q_g^2)z_{\mathcal G}$ (or equivalently, a lower bound of $\beta_0$). Then, one needs to use these asymptotic bounds together with \eqref{eq:sc1} or \eqref{eq:sc2} to estimate SC-A or SC-B, respectively.
We demonstrate the estimation of SC-A for $\mathcal G = SO(2)$. Here we assume two bounds: $\beta_1 = \Omega(q_*^{-2})$ and $1/\beta_0-(1-q_g^2)z_{\mathcal G}\leq O(q_*^4)$.
We first note from Table \ref{tab:sample} that $P_{\max}(x) = O(x)$ and consequently the first bound  implies the required middle equation of \eqref{eq:choices_beta0_1}. The combination of both bounds with the fact that in this case $q_g = q_*$ and the obvious assumption $\beta_0>0$ yields the first equation of \eqref{eq:choices_beta0_1}. Incorporating both bounds into \eqref{eq:sc1} we obtain that a sufficient sample size $n$ for exact recovery w.h.p.~by CEMP-A satisfies 
$n/\log(n) = \Omega(p^{-2}q_*^{-8})$;
thus, the minimal sample for exact recovery w.h.p.~by CEMP-A is of order $O(p^{-2}q_*^{-8})$.

\begin{table}[H]
    \centering
    \begin{tabular}{|c|c|c|c|c|}
    \hline
      $\mathcal G$ & $\mathbb Z_2$ & $S_N$ & $SO(2)$ & $SO(3)$ \\\hline
       $z_{\mathcal G}$ & $\frac12$ & $\sum_{m=1}^N 
 \frac{m}{N!} {N\choose m}
 \left[\frac{m!}{e}\right]$&  $\frac12$ & $\frac12+\frac{2}{\pi^2}$ \\\hline
       $P_{\max}(x)$ & $\mathbf{1}_{\{x=1\}}$ & $\frac{1}{N!}+
\mathbf{1}_{\{m\geq 1\}}\sum_{l=1}^m \frac{1}{N!} {N \choose l}
\left[\frac{l!}{e}\right]$ & $O(x)$ & $O(x)$ \\\hline
       $V(x)$ & $O(e^{-x})$ & $O(e^{-2 x/N}  x^2/N^2)$ & $O(x^{-1})$ & $O(x^{-2})$      \\\hline
       SC-A &  $O(p^{-2}q_g^{-4})$   & $O(p^{-2}q_g^{-4})$ & $O(p^{-2} q_g^{-8})$ &  $O(p^{-2} q_g^{-8})$        \\\hline
       SC-B &    $O(p^{-2}q_g^{-4-\alpha})$&   $O(p^{-2}q_g^{-4-\alpha})$  &   $O(p^{-2}q_g^{-12})$ & $O(p^{-2}q_g^{-10})$  \\\hline
       SC-IT &     $\Omega(p^{-1}q_g^{-2})$ & $\Omega(p^{-1}q_g^{-2})$ & $\Omega(p^{-1}q_g^{-2})$ & $\Omega(p^{-1}q_g^{-2})$\\\hline           
    \end{tabular}
    \caption{Summary of estimates of the main quantities of Theorems \ref{thm:rand1} and \ref{thm:rand2} for the common groups in applications, and of the derived sample complexity bounds. SC-A and SC-B denote the sample complexity of CEMP-A and CEMP-B (ignoring log factors) and SC-IT denotes the information-theoretic sample complexity (ignoring log factors).}
    \label{tab:sample}
\end{table}

We remark that these asymptotic bounds were based on estimates for the regime $q_* < q_c$, but we can extend them for any $q_*$ and $p \to 0$. 
Indeed, when $q_*\geq q_c$, \eqref{eq:prob_bound_rand1_full} of Theorem \ref{thm:rand1_full} and the equivalent equation of Theorem \ref{thm:rand2_full} imply that the minimum sample required for CEMP is of order $\Omega(1/p^2)$. Clearly, this estimate coincides with all estimates in Table \ref{tab:sample}
when $q_* \geq q_c$.

Our upper bounds for the sample complexity are far from the information-theoretic ones. Numerical experiments in Section \ref{sec:experiment} may indicate a lower sample complexity of CEMP than these bounds, but still possibly higher than the information theoretic ones. We expect that one may eventually obtain the optimal dependence in $q_g$ for a CEMP-like algorithm, however, CEMP with three cycles is unable to improve the dependence on $p$ from $\Omega(1/p^2)$ to $\Omega(1/p)$. The issue is that when $\mathcal{C}=\mathcal{C}_3$, the expected number of good cycles per edge is  $np^2 q_g^2$, so that $n=\Omega(1/(p^2q_g^2))$. Indeed, the expected number of 3-cycles per edge is $np^2$ and  the expected fraction of good cycles is $q_g^2$. The use of higher-order cycles should improve the dependence on $p$, but may harm the dependence on $q_g$.

Despite the sample complexity gap, we are unaware of other estimates that hold for $q_*\to 0$ (recall that $q_*\to 0$ only for continuous groups). The current best result for $SO(d)$ synchronization appears in \cite{wang2013exact}. It only guarantees exact recovery for the global optimizer (not for an algorithm) for sufficiently large $q_*$ (e.g., $q_* > 0.5$ for $d=3$ and $q_* > 1-O(d^{-1})$ for large $d$).

\subsection{A Simple Estimator with the Optimal Order of $q_g$ for Continuous Groups}
\label{sec:simple_estimator_sharp}
We present a very simple and naive estimator for the corruption levels that uses cycle inconsistencies and achieves the optimal order of $q_g$ for continuous groups. 
We denote by ${\mode}_{D_{ij}}$ the mode of  $D_{ij}=\{d_{ij,k}\}_{k \in N_{ij}}$. The proposed simple estimates $\{\hat s_{ij}\}_{ij \in E}$ are
\begin{equation}\label{eq:mode}
    \hat s_{ij} = \mode_{D_{ij}} \text{ for } ij\in E.
\end{equation}
Their following theoretical guarantees are proved in Appendix \ref{sec:prop_mode}. 
\begin{proposition}
\label{prop:mode}
Let $0 \leq q <1$,  $0<p \leq 1$, $n \in \mathbb{N}$ such that $n/\log n\geq c/(p^2q_g^2)$ for some absolute constant $c\geq 10$. If $\mathcal G$ is a continuous group and the underlying dataset is generated by UCM($n,p,q$), then \eqref{eq:mode} yields exact estimates of $\{s_{ij}^*\}_{ij\in E}$ with probability at least $1- n^{-{2}/{15}}$.
\end{proposition}

We remark that although the naive estimator of \eqref{eq:mode} achieves tighter sample complexity bounds than CEMP in the very special setting of UCM, it suffers from the following limitations that makes it impractical to more general scenarios.
First of all, in real applications, all edges are somewhat noisy, so that all the elements in each fixed $D_{ij}$ are different and finding a unique mode is impossible. Second, the mode statistic is very sensitive to adversarial outliers. In particular, one can maliciously choose the outliers to form peaks in the histogram of each $D_{ij}$ that are different than $s_{ij}^*$. 

We currently cannot prove a similar guarantee for CEMP, but the phase transition plots of Section \ref{sec:num_uniform} seem to support a similar behavior. Nevertheless, the goal of presenting this estimator was to show that it is possible to obtain sharp estimates in $q_g$ by using cycle inconsistencies.

\subsection{Proofs of Theorems~\ref{thm:rand1}-\ref{thm:rand2_full}}
\label{sec:proofs_rand}

Section \ref{sec:prelim_ucm} formulates some preliminary results that are used in the main proofs.
Section \ref{sec:rand1} proves Theorem \ref{thm:rand1}, Section \ref{sec:rand2} proves Theorem \ref{thm:rand2} and Section \ref{sec:rand3} proves Theorems \ref{thm:rand1_full} and \ref{thm:rand2_full}.

\subsubsection{Preliminary Results}
\label{sec:prelim_ucm}
We present some results on the concentration of $\lambda$ and good initialization. The proofs of all results are in Appendix \ref{sec:proofs_prelim_ucm}.

We formulate a concentration property of the ratio of corrupted cycles, $\lambda_{ij}$, where $ij \in E$ (see  \eqref{def_lambda}), and the maximal ratio $\lambda$. 
\begin{proposition}\label{prop:lambda}
Let $0\leq q <1$,  $0<p \leq1$, $n \in \mathbb{N}$ and assume data generated by UCM($n,p,q$). For any $0<\eta<1$,
\begin{align}\label{eq:lambda1}
\Pr(|\lambda_{ij}-(1-q_*^2)|>\eta q_{\min})<2\exp\left(-\frac{\eta^2}{3}q_{\min}|N_{ij}|\right) \ \text{ for any fixed } \ ij \in E
\end{align}
and
\begin{align}
\label{eq:lambda2}
\Pr(|\lambda-(1-q_*^2)|>\eta q_{\min})<2|E|\exp\left(-\frac{\eta^2}{3}q_{\min}\min_{ij\in E}|N_{ij}|\right).
\end{align}
\end{proposition}

Proposition \ref{prop:lambda}
is not useful when $q_*\approx 1$, since then $|N_{ij}|$ needs to be rather large, and this is counter-intuitive when there is hardly any corruption. 
On the other hand, this proposition is useful when $q_*$ is sufficiently small. In this case, if $|N_{ij}|$ is sufficiently large, then  $\lambda_{ij}$ concentrates around $1-q_*^2$. In particular, with high probability $\lambda$ can be sufficiently high. The regime of sufficiently high $\lambda$ is interesting and challenging, especially as Theorems \ref{thm:it1} and \ref{thm:ir1} do not apply then.

The next concentration result is useful when $q_*$ is sufficiently large. 
 
\begin{proposition}\label{prop:lambda2}
Let $0\leq q <1$,  $0<p \leq1$, $n \in \mathbb{N}$ and assume data generated by UCM($n,p,q$). For any $x\in (0,1]$, $q_*^2>1-x$  and $ij\in E$, 
\begin{align*}
\Pr(\lambda_{ij}>x)<\exp\left(-\frac{1}{3}\left(1-\frac{1-x}{q_*^2} \right)^2q_*^2|N_{ij}|\right).
\end{align*}
\end{proposition}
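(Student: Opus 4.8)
The plan is to recognize $\lambda_{ij}$ as one minus an empirical mean of i.i.d.\ Bernoulli indicators and then apply a one-sided Chernoff lower-tail bound, closely paralleling the proof of Proposition~\ref{lemma:lambda}. First I would fix $ij\in E$ and condition on the graph, so that $N_{ij}$ (and $m\defeq|N_{ij}|$) is determined, and define $Y_k=\mathbf 1_{\{k\in G_{ij}\}}$ for $k\in N_{ij}$. Since $k\in G_{ij}$ holds exactly when both edges $ik$ and $jk$ lie in $E_g$, and since under UCM distinct edges are corrupted independently with each edge being good with probability $q_*$, the variables $\{Y_k\}_{k\in N_{ij}}$ are i.i.d.\ Bernoulli with mean $q_*^2$. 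Because $\lambda_{ij}=1-\frac{1}{m}\sum_{k\in N_{ij}}Y_k$, the event $\{\lambda_{ij}>x\}$ is precisely the lower-tail event $\bigl\{\frac{1}{m}\sum_{k}Y_k<1-x\bigr\}$.

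Next I would calibrate the deviation. Using the hypothesis $q_*^2>1-x$, I set $\eta=1-\frac{1-x}{q_*^2}$, so that $1-x=(1-\eta)q_*^2$; the conditions $x\in(0,1]$ and $q_*^2>1-x$ guarantee $\eta\in(0,1]$. Then $\{\lambda_{ij}>x\}=\bigl\{\frac{1}{m}\sum_{k}Y_k<(1-\eta)q_*^2\bigr\}$, and applying the one-sided Chernoff lower-tail inequality
\[
\Pr\left(\frac{1}{m}\sum_{k\in N_{ij}}Y_k<(1-\eta)q_*^2\right)<\exp\left(-\frac{\eta^2}{3}q_*^2\,m\right),
\]
which is valid since the sharper lower-tail constant $\tfrac12$ dominates $\tfrac13$, and then substituting $\eta=1-(1-x)/q_*^2$ and $m=|N_{ij}|$ yields exactly the claimed bound.

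The argument is essentially routine, so I do not anticipate a serious obstacle; the only points needing care are verifying that the indicators have mean exactly $q_*^2$ (which relies on the independence of the corruption of the two edges $ik,jk$ and on $\Pr(ij\in E_g)=q_*$), orienting the inequality correctly as a \emph{lower} tail, and invoking a one-sided Chernoff bound rather than the two-sided bound~\eqref{eq:chernoff2} so as to avoid the spurious factor of $2$ that appears in~\eqref{eq:lambda1}.
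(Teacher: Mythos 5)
Your proposal is correct and follows essentially the same route as the paper: write $1-\lambda_{ij}$ as the empirical mean of the i.i.d.\ Bernoulli indicators $\mathbf 1_{\{k\in G_{ij}\}}$ with mean $q_*^2$, and apply a one-sided Chernoff lower-tail bound with $\eta=1-(1-x)/q_*^2$. The only difference is that you spell out the calibration of $\eta$ and the validity of the constant $\tfrac13$ a bit more explicitly than the paper does.
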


Next, we show that the initialization suggested in \eqref{eq:initial_corruption} is good under the uniform corruption model. We first claim that it is good on average, while using the notation $z_\mathcal G$ of Section \ref{subsec:uniform}. 
\begin{proposition}
\label{prop:initial1}
Let $0\leq q <1$,  $0<p \leq1$, $n \in \mathbb{N}$ and assume data generated by UCM($n,p,q$).  For any $ij \in E$, $s_{ij}(0)$ is a scaled and shifted version of $s_{ij}^*$ as follows
\begin{align}
\label{eq:initial1}
\mathbb E(s_{ij}(0))=q_g^2 s_{ij}^*+(1-q_g^2) z_\mathcal G.
\end{align}

\end{proposition}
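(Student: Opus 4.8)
The plan is to compute $\mathbb{E}(s_{ij}(0))$ by conditioning on the graph and on the measurement $g_{ij}$ (so that $s_{ij}^*$ is treated as a fixed quantity), and then averaging over the corruption of the remaining edges. Since $s_{ij}(0)=\frac{1}{|N_{ij}|}\sum_{k\in N_{ij}}d_{ij,k}$ by \eqref{eq:initial_corruption}, linearity of expectation reduces the claim to showing that for each fixed $k\in N_{ij}$,
\begin{equation*}
\mathbb{E}(d_{ij,k})=q_g^2\, s_{ij}^*+(1-q_g^2)\,z_\mathcal{G}.
\end{equation*}
Here $d_{ij,k}=d_\mathcal{G}(g_{ij}g_{jk}g_{ki},e_\mathcal{G})$, and I would work with the candidate corruption set $\tilde E_b$: each of the edges $jk$ and $ki$ is, independently, an uncorrupted true ratio with probability $q_g=1-q$ and a Haar-distributed element with probability $q$. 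I would split the computation according to whether both edges $jk,ki$ are uncorrupted (probability $q_g^2$) or at least one is a Haar draw (probability $1-q_g^2$).

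In the first case, $g_{jk}=g_{jk}^*$ and $g_{ki}=g_{ki}^*$, so the cycle-consistency constraint \eqref{eq:cyclecons} gives $g_{jk}^*g_{ki}^*=(g_{ij}^*)^{-1}$, whence $g_{ij}g_{jk}g_{ki}=g_{ij}(g_{ij}^*)^{-1}$. Right-multiplying both arguments by $g_{ij}^*$ and using bi-invariance of $d_\mathcal{G}$ yields $d_{ij,k}=d_\mathcal{G}(g_{ij},g_{ij}^*)=s_{ij}^*$ deterministically. This contributes $q_g^2\,s_{ij}^*$.

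In the second case I would argue that $g_L=g_{ij}g_{jk}g_{ki}$ is Haar-distributed. This is the main point to get right, and it rests on the invariance of the Haar measure: if one factor is an independent Haar element, then left- and right-multiplication by independent (fixed or random) group elements keeps the product Haar, and a product of an independent Haar element with anything independent is again Haar. Checking the three sub-cases (exactly $jk$ corrupted, exactly $ki$ corrupted, both corrupted) shows that in each $g_L\sim\mathrm{Haar}(\mathcal{G})$, so $d_{ij,k}=d_\mathcal{G}(g_L,e_\mathcal{G})$ has the same distribution as $d_\mathcal{G}(u_\mathcal{G},e_\mathcal{G})$ and hence conditional expectation $z_\mathcal{G}$. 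This contributes $(1-q_g^2)\,z_\mathcal{G}$.

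Adding the two contributions gives the displayed formula for $\mathbb{E}(d_{ij,k})$, and averaging over $k\in N_{ij}$ gives \eqref{eq:initial1}. The only delicate points are justifying the Haar-invariance claim in the second case and noting that the $p_0$ coincidence (a Haar draw that happens to equal the truth) is correctly absorbed by working with $q_g=1-q$ and the full Haar law, so that the probabilities $q_g^2$ and $1-q_g^2$ match the claim exactly.
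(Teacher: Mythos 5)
Your proof is correct and follows essentially the same route as the paper's: condition on $g_{ij}$, split each $d_{ij,k}$ according to whether the edges $jk,ki$ were selected for corruption (probability $q_g=1-q$ each, working with $\tilde E_b$ rather than $E_b$ so the $p_0$ coincidence is absorbed), use cycle consistency plus bi-invariance to get $d_{ij,k}=s_{ij}^*$ in the all-good case, and use Haar invariance of the product to get conditional expectation $z_\mathcal G$ otherwise. The only cosmetic difference is that you merge the paper's two corrupted sub-cases into a single ``at least one Haar factor'' case before checking them, which changes nothing of substance.
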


At last, we formulate the concentration of $s_{ij}(0)$
around its expectation. It follows from direct application of Hoeffding's inequality, while using the fact that $0\leq d_{ij,k}\leq 1$ are i.i.d.
\begin{proposition}\label{lemma:init}
Let $0\leq q <1$,  $0<p \leq1$, $n \in \mathbb{N}$ and assume data generated by UCM($n,p,q$). Then,
\begin{align*}
\Pr\left(\left|s_{ij}(0)-\mathbb E(s_{ij}(0))\right|>\gamma\right)&<
 2e^{-2\gamma^2|N_{ij}|}.
\end{align*}
\end{proposition}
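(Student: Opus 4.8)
The plan is to realize $s_{ij}(0)$ as a normalized sum of bounded, conditionally independent and identically distributed summands and then to invoke Hoeffding's inequality. Recall from \eqref{eq:initial_corruption} that
\[
s_{ij}(0)=\frac{1}{|N_{ij}|}\sum_{k\in N_{ij}}d_{ij,k},
\]
and recall from \eqref{eq:prop_0_1} that $0\le d_{ij,k}\le 1$ for every $k\in N_{ij}$, since the diameter of $\mathcal G$ is $1$. Thus each summand already lies in an interval of length one, which is exactly the boundedness that Hoeffding requires.

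First I would make the independence structure precise by conditioning on the edge $ij$ (equivalently, on $g_{ij}$, and hence on $s_{ij}^*$). For $\mathcal C=\mathcal C_3$ each $d_{ij,k}=d_\mathcal G(g_{ij}g_{jk}g_{ki},e_\mathcal G)$ depends, apart from the common factor $g_{ij}$, only on the two edges $jk$ and $ki$. For distinct $k_1,k_2\in N_{ij}$ the four edges $jk_1,k_1i,jk_2,k_2i$ are pairwise distinct (they cannot coincide since $k_1\neq k_2$ and $k_1,k_2\notin\{i,j\}$), and under UCM the corruption statuses and the corrupted values on distinct edges are drawn independently. Hence, conditioned on $g_{ij}$, the variables $\{d_{ij,k}\}_{k\in N_{ij}}$ are mutually independent, and they are identically distributed because the law of $(g_{jk},g_{ki})$ is the same for every $k$.

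With this conditional i.i.d.\ structure in hand, I would apply Hoeffding's inequality to the average of $m=|N_{ij}|$ independent variables taking values in $[0,1]$; using $\sum_k (b_k-a_k)^2 = m$ with $a_k=0$, $b_k=1$,
\[
\Pr\!\left(\left|s_{ij}(0)-\mathbb E\big(s_{ij}(0)\,\big|\,g_{ij}\big)\right|>\gamma \,\Big|\, g_{ij}\right)< 2\exp\!\left(-\frac{2m^2\gamma^2}{m}\right)=2e^{-2\gamma^2|N_{ij}|}.
\]
Here the conditional mean $\mathbb E(s_{ij}(0)\mid g_{ij})$ is precisely the quantity computed in Proposition~\ref{prop:initial1}, since $s_{ij}^*=d_\mathcal G(g_{ij},g_{ij}^*)$ is determined by $g_{ij}$; this is the mean appearing in the statement.

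Finally, because the right-hand side above does not depend on $g_{ij}$, taking expectation over $g_{ij}$ (the law of total probability) removes the conditioning and yields the claimed unconditional bound $\Pr(|s_{ij}(0)-\mathbb E(s_{ij}(0))|>\gamma)<2e^{-2\gamma^2|N_{ij}|}$. The only genuinely delicate point is the conditioning step: the $d_{ij,k}$ are \emph{not} unconditionally independent when $ij\in E_b$, because each of them contains the shared random factor $g_{ij}$, so independence must be obtained conditionally on $g_{ij}$ (for $ij\in E_g$ the factor $g_{ij}=g_{ij}^*$ is deterministic and independence is automatic). Once this is observed the rest is a direct application of Hoeffding, as the authors indicate.
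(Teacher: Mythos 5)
Your proof is correct and follows essentially the same route as the paper, which simply invokes Hoeffding's inequality for the average of the bounded summands $d_{ij,k}\in[0,1]$; your extra care in conditioning on $g_{ij}$ to obtain independence of the summands (and then removing the conditioning, since the resulting bound is uniform in $g_{ij}$) is a welcome refinement of the paper's terse ``i.i.d.'' claim. One small imprecision: the law of $(g_{jk},g_{ki})$ is \emph{not} the same for all $k$ (the uncorrupted ratios $g_{jk}^*$ vary with $k$), although the conditional law of $d_{ij,k}$ given $g_{ij}$ is, by the Haar-invariance argument of Proposition~\ref{prop:initial1}; in any case Hoeffding needs only independence and boundedness, so this does not affect the conclusion.
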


\subsubsection{Proof of Theorem~\ref{thm:rand1}}\label{sec:rand1}

This proof is more involved than previous ones. Figure \ref{fig:roadmap_A} thus provides a simple roadmap for following it.
\begin{figure}[H]
\begin{center}
  \includegraphics[width=1\linewidth]{./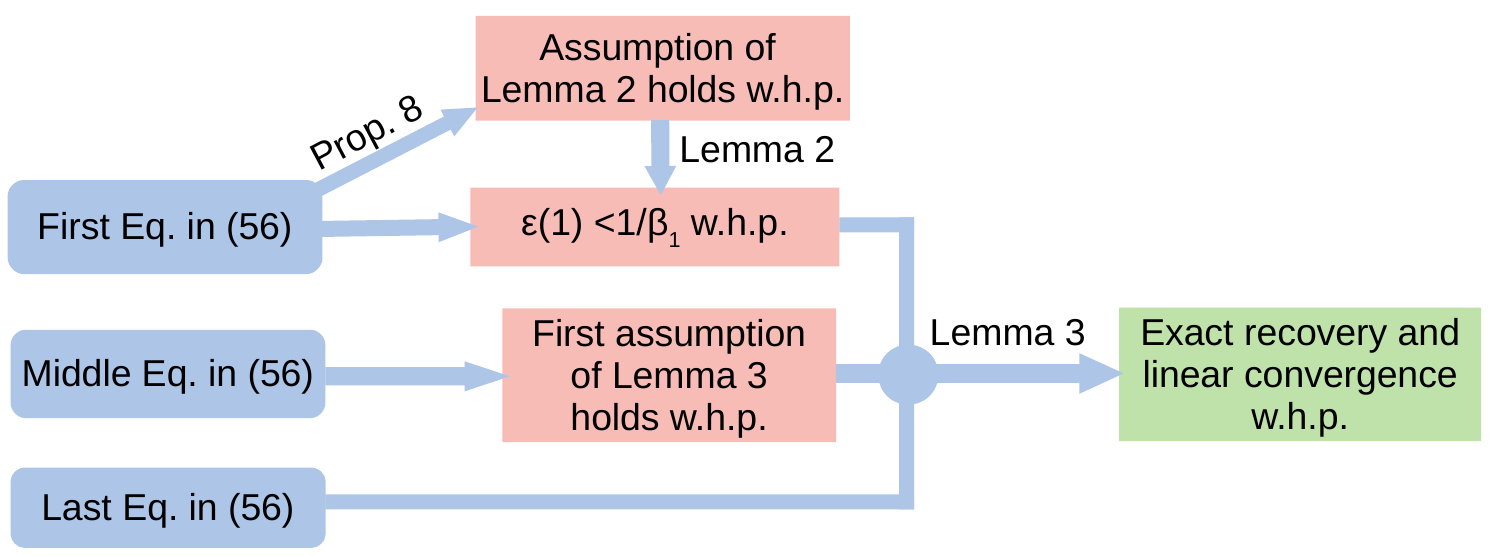}
\end{center}
   \caption{Roadmap for the proof of Theorem \ref{thm:rand1}.  }\label{fig:roadmap_A}
\end{figure}

The proof frequently uses the notation 
$$A_{ij}(x;t)=\{k\in N_{ij}: s_{ik}(t), s_{jk}(t)\leq x\}$$ and 
$$A_{ij}^*(x)=\{k\in N_{ij}: s_{ik}^*, s_{jk}^*\leq x\}.$$ It relies on the following two lemmas.
\begin{lemma}\label{lemma:eps1_iter}
If $1/\beta_0\geq (1-q_g^2)z_\mathcal G+\max_{ij\in E} \left| s_{ij}(0)-\mathbb E s_{ij}(0)\right|$,
then 
\begin{align}\label{eq:eps1_iter}
\epsilon (1)\leq 4\frac{\frac{1}{\beta_0}-(1-q_g^2)z_\mathcal G}{q_g^2}.
\end{align}
\end{lemma}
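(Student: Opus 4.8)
The plan is to reproduce the single inductive step used in the proof of Theorem~\ref{thm:it1}, but to replace the deterministic bound $\epsilon(0)\le 1/\beta_0$ (which is unavailable here) by the affine relationship between $\mathbb E\, s_{ij}(0)$ and $s_{ij}^*$ supplied by Proposition~\ref{prop:initial1}. Throughout I would abbreviate $\gamma=\max_{ij\in E}|s_{ij}(0)-\mathbb E\, s_{ij}(0)|$ and $\tau=1/\beta_0-(1-q_g^2)z_\mathcal G$, so that the hypothesis reads $0\le\gamma\le\tau$ and the target bound is $\epsilon(1)\le 4\tau/q_g^2$. I write $A_{ij}:=A_{ij}(1/\beta_0;0)=\{k\in N_{ij}:s_{ik}(0),s_{jk}(0)\le 1/\beta_0\}$ for the set of cycles accepted at the first iteration.

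First I would show $G_{ij}\subseteq A_{ij}$. For $k\in G_{ij}$ we have $ik,jk\in E_g$, hence $s_{ik}^*=s_{jk}^*=0$, and so Proposition~\ref{prop:initial1} gives $\mathbb E\, s_{ik}(0)=(1-q_g^2)z_\mathcal G$. Therefore $s_{ik}(0)\le \mathbb E\, s_{ik}(0)+\gamma=(1-q_g^2)z_\mathcal G+\gamma\le 1/\beta_0$, and likewise for $s_{jk}(0)$, so $k\in A_{ij}$. In particular $A_{ij}$ is nonempty (under the good-cycle condition, which the main proof of Theorem~\ref{thm:rand1} guarantees w.h.p.), so the CEMP-A update is well-defined.

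Next I would apply the update rule \eqref{eq:updateA}, bound $|d_{ij,k}-s_{ij}^*|$ by the bi-invariance inequality \eqref{eq:bi}, and discard the good cycles, for which $s_{ik}^*+s_{jk}^*=0$:
\[
\epsilon_{ij}(1)\le\frac{\sum_{k\in A_{ij}}(s_{ik}^*+s_{jk}^*)}{|A_{ij}|}=\frac{\sum_{k\in B_{ij}\cap A_{ij}}(s_{ik}^*+s_{jk}^*)}{|A_{ij}|}.
\]
The crux is to bound each surviving term uniformly, and this is the one genuinely new idea relative to Theorem~\ref{thm:it1}: inverting the mean formula \eqref{eq:initial1} yields $s_{ik}^*=(\mathbb E\, s_{ik}(0)-(1-q_g^2)z_\mathcal G)/q_g^2$, and bounding $\mathbb E\, s_{ik}(0)\le s_{ik}(0)+\gamma$ together with $s_{ik}(0)\le 1/\beta_0$ (valid since $k\in A_{ij}$) gives $s_{ik}^*\le(\tau+\gamma)/q_g^2\le 2\tau/q_g^2$, and symmetrically for $s_{jk}^*$. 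Hence $s_{ik}^*+s_{jk}^*\le 4\tau/q_g^2$ for every $k\in A_{ij}$.

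Since this per-term bound already matches the target, the averaging is trivial: the crude estimate $|B_{ij}\cap A_{ij}|/|A_{ij}|\le 1$ suffices to give $\epsilon_{ij}(1)\le 4\tau/q_g^2$, and taking the maximum over $ij\in E$ concludes the proof. I expect no real obstacle here — the lemma is purely deterministic, and notably requires no concentration of $\lambda$ at this first step (unlike the later iterations in Theorem~\ref{thm:rand1}). All probabilistic work, namely controlling $\gamma$ via Proposition~\ref{lemma:init} and ensuring the $G_{ij}$ are nonempty, is deferred to the main proof.
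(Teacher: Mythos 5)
Your proof is correct and follows essentially the same route as the paper's: plug $t=0$ into the CEMP-A error bound, use Proposition~\ref{prop:initial1} to invert $\mathbb E\, s_{ik}(0)=q_g^2 s_{ik}^*+(1-q_g^2)z_\mathcal G$ and deduce the uniform per-term bound $s_{ik}^*\leq(\tau+\gamma)/q_g^2\leq 2\tau/q_g^2$ for accepted cycles, then average trivially. Your explicit check that $G_{ij}\subseteq A_{ij}$ (so the denominator is nonzero) is a small detail the paper leaves implicit, but otherwise the two arguments coincide.
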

\begin{proof}
We use the following upper bound on $\epsilon_{ij}(1)$, which is obtained by plugging $t=0$ into \eqref{eq:estimate_eps}
\begin{align}\label{eq:ep1}
\epsilon_{ij}(1)&\leq\frac{\sum\limits_{k\in N_{ij}}\mathbf{1}_{\{s_{ik}(0), s_{jk}(0)\leq \frac{1}{\beta_0}\}}(s_{ik}^*+ s_{jk}^*)}{\sum\limits_{k\in N_{ij}}\mathbf{1}_{\{s_{ik}(0), s_{jk}(0)\leq \frac{1}{\beta_0}\}}}.
\end{align}
Denote $\gamma_{ij} = |s_{ij}(0) - \mathbb E(s_{ij}(0))|$ 
for $ij \in E$ and $\gamma=\max_{ij \in E} \gamma_{ij}$, so that the condition of the lemma can be written more simply as 
${1}/{\beta_0}\geq (1-q_g^2)z_\mathcal G+\gamma$.
We use \eqref{eq:initial1} to write $s_{ij}(0)=q_g^2s_{ij}^*+(1-q_g^2)z_\mathcal G+\gamma_{ij}$ and thus conclude that $s_{ij}(0) \geq q_g^2s_{ij}^*+(1-q_g^2)z_\mathcal G-\gamma$.
Consequently, if $s_{ij}(0)<\frac{1}{\beta_0}$ for $ij \in E$, then $s_{ij}^*<(1/\beta_0-(1-q_g^2)z_\mathcal G)+\gamma)/q_g^2$. The combination of the latter observation with \eqref{eq:ep1} results in
\begin{align*}
\epsilon_{ij}(1)<2 \, \frac{\frac{1}{\beta_0}+\gamma-(1-q_g^2)z_\mathcal G}{q_g^2}.
\end{align*}
Applying the assumption $1/\beta_0\geq (1-q_g^2)z_\mathcal  G+\gamma$ into the above equation, while also maximizing the LHS of this equation over $ij \in E$, results in \eqref{eq:eps1_iter}.
\qed
\end{proof}

\begin{lemma}\label{lemma:deterministic1}
Assume that $|A_{ij}^*(2/\beta_1)\setminus G_{ij}|/|B_{ij}|\leq(1-\lambda)r/(4\lambda)$ for all $ij\in E$,  $\frac{1}{\beta_1}>\epsilon (1)$ and $\beta_{t}=r\beta_{t+1}$ for all $t\geq 1$. Then, 
the estimates $\{s_{ij}(t)\}_{ij \in E}^{t \geq 1}$ 
computed by CEMP-A satisfy
\begin{align}
\label{eq:lemma:deterministic1}
\max_{ij\in E}|s_{ij}(t)-s_{ij}^*|<\frac{1}{\beta_1} r^{t-1}
\ \text{ for all } \ t \geq 1.
\end{align}
\end{lemma}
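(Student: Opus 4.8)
The plan is to prove \eqref{eq:lemma:deterministic1} by induction on $t$, reusing verbatim the one-step estimate \eqref{eq:estimate_eps2} from the proof of Theorem~\ref{thm:it1}, but replacing its crude bound $|B_{ij}\cap A_{ij}(1/\beta_t;t)|\le|B_{ij}|$ by the sharper geometric hypothesis on $|A_{ij}^*(2/\beta_1)\setminus G_{ij}|$. First I would record that the recursion $1/\beta_{t+1}=r\,(1/\beta_t)$ with $0<r<1$ gives $1/\beta_t=r^{t-1}/\beta_1$, so the target bound $r^{t-1}/\beta_1$ is exactly $1/\beta_t$, and the thresholds are decreasing: $1/\beta_t\le 1/\beta_1$ for every $t\ge1$. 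The base case $t=1$ is just the hypothesis $1/\beta_1>\epsilon(1)$.

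For the inductive step I would assume $\epsilon(t)<1/\beta_t$ and prove $\epsilon(t+1)<1/\beta_{t+1}=r/\beta_t$. Fix $ij\in E$. In \eqref{eq:estimate_eps2} the denominator is $|A_{ij}(1/\beta_t;t)|\ge|G_{ij}|$, since every $k\in G_{ij}$ has $s_{ik}^*=s_{jk}^*=0$ and hence $s_{ik}(t),s_{jk}(t)\le\epsilon(t)<1/\beta_t$, that is, $G_{ij}\subseteq A_{ij}(1/\beta_t;t)$. For the numerator, the induction hypothesis controls the true corruption at any contributing index: if $k\in B_{ij}$ satisfies $s_{ik}(t),s_{jk}(t)\le1/\beta_t$, then $s_{ik}^*\le s_{ik}(t)+\epsilon(t)<2/\beta_t\le2/\beta_1$ and likewise $s_{jk}^*<2/\beta_1$, so $k\in A_{ij}^*(2/\beta_1)\setminus G_{ij}$, and moreover $s_{ik}^*+s_{jk}^*<4/\beta_t$.

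Putting these together yields $\epsilon_{ij}(t+1)<(4/\beta_t)\,|A_{ij}^*(2/\beta_1)\setminus G_{ij}|/|G_{ij}|$ (with the strictness free when $B_{ij}\cap A_{ij}(1/\beta_t;t)=\emptyset$, since then $\epsilon_{ij}(t+1)=0$). Now I would invoke the lemma's hypothesis $|A_{ij}^*(2/\beta_1)\setminus G_{ij}|\le|B_{ij}|(1-\lambda)r/(4\lambda)$ and the identity $|B_{ij}|/|G_{ij}|=\lambda_{ij}/(1-\lambda_{ij})$, and use that $x\mapsto x/(1-x)$ is increasing together with $\lambda_{ij}\le\lambda$ to bound $|B_{ij}|/|G_{ij}|\le\lambda/(1-\lambda)$. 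The product then collapses:
\begin{equation*}
\epsilon_{ij}(t+1)<\frac{4}{\beta_t}\cdot\frac{(1-\lambda)r}{4\lambda}\cdot\frac{\lambda}{1-\lambda}=\frac{r}{\beta_t}=\frac{1}{\beta_{t+1}}.
\end{equation*}
Taking the maximum over the finitely many edges gives $\epsilon(t+1)<1/\beta_{t+1}$, closing the induction.

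I expect the only delicate point to be the uniformity in $t$ of the geometric hypothesis: it is stated at the single threshold $2/\beta_1$, whereas the natural threshold at step $t$ is $2/\beta_t$. This is reconciled precisely by $\beta_t\ge\beta_1$ (equivalently $1/\beta_t\le1/\beta_1$), so that $A_{ij}^*(2/\beta_t)\subseteq A_{ij}^*(2/\beta_1)$ and the single assumption at $2/\beta_1$ dominates every iteration; this is exactly why the threshold $2/\beta_1$, and not $2/\beta_t$, appears in the statement. Everything else is the bookkeeping that trades the obstructive constant $4\lambda$ of Theorem~\ref{thm:it1} (which forced $\lambda<1/4$) for the small quantity controlled by the hypothesis, which is what will ultimately permit $\lambda$ arbitrarily close to $1$ under uniform corruption.
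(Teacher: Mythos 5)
Your proposal is correct and follows essentially the same route as the paper's proof: induction on $t$ with base case given by the hypothesis $\epsilon(1)<1/\beta_1$, the chain of inclusions $G_{ij}\subseteq A_{ij}(1/\beta_t;t)\subseteq A_{ij}^*(2/\beta_t)\subseteq A_{ij}^*(2/\beta_1)$, the bound $s_{ik}^*+s_{jk}^*<4/\beta_t$ for contributing indices, and the conversion $1/|G_{ij}|\leq(\lambda/(1-\lambda))/|B_{ij}|$ to invoke the geometric hypothesis. Your explicit remarks on the monotonicity $A_{ij}^*(2/\beta_t)\subseteq A_{ij}^*(2/\beta_1)$ and on the degenerate empty-numerator case are points the paper passes over silently, but they do not change the argument.
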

\begin{proof}
We prove \eqref{eq:lemma:deterministic1}, equivalently, $\epsilon (t)<{1}/{\beta_t}$ for all $t\geq 1$, by induction. We note that $\epsilon (1)<{1}/{\beta_1}$ is an assumption of the lemma.  We next show that $\epsilon (t+1)<{1}/{\beta_{t+1}}$ if  $\epsilon (t)<{1}/{\beta_t}$. We note that applying  \eqref{eq:inclusion} and then \eqref{eq:if_then} result in the following two inclusions
\begin{align}\label{eq:inclusion1}
G_{ij}\subseteq A_{ij}\left(\frac{1}{\beta_t};t\right) \subseteq A_{ij}^*\left(\frac{2}{\beta_t}\right) \ \text{ for } ij\in E  \text{ and } t\geq 1.
\end{align}
Applying first \eqref{eq:estimate_eps}, then \eqref{eq:inclusion1} and at last the definition of $\lambda$,
we obtain that for any given $ij\in E$
\begin{multline*}
\epsilon_{ij}(t+1)
\leq\frac{\sum\limits_{k\in B_{ij}}\mathbf{1}_{\{s_{ik}(t), s_{jk}(t)\leq \frac{1}{\beta_t}\}}(s_{ik}^*+ s_{jk}^*)}{\sum\limits_{k\in N_{ij}}\mathbf{1}_{\{s_{ik}(t), s_{jk}(t)\leq \frac{1}{\beta_t}\}}}
\\
\leq  4\frac{1}{\beta_t}\frac{|A_{ij}^*(\frac{2}{\beta_t})\setminus G_{ij}|}{|G_{ij}|}
\leq 4\frac{1}{\beta_t}\frac{|A_{ij}^*(\frac{2}{\beta_t})\setminus G_{ij}|}{|B_{ij}|}\frac{\lambda}{1-\lambda}.
\end{multline*}
Combining the above equation with the assumption $|A_{ij}^*(2/\beta_1)\setminus G_{ij}|/|B_{ij}|\leq(1-\lambda)r/(4\lambda)$ yields
\begin{align*}
\epsilon_{ij}(t+1)
\leq 4\frac{1}{\beta_t}\frac{|A_{ij}^*(\frac{2}{\beta_t})\setminus G_{ij}|}{|B_{ij}|}\frac{\lambda}{1-\lambda}\leq r\frac{1}{\beta_t}=\frac{1}{\beta_{t+1}}.
\end{align*}
Maximizing over $ij \in E$ the LHS of the above equation concludes the induction and the lemma. 
\qed
\end{proof}
To conclude the theorem, it is sufficient to show that under its setting, the first two assumptions of Lemma \ref{lemma:deterministic1} hold w.h.p.

We first verify w.h.p.~the condition $\max_{ij\in E}|A_{ij}^*(2/\beta_1)\setminus G_{ij}|/|B_{ij}|\leq(1-\lambda)r/(4\lambda)$. We note that  for each fixed $ij\in E$, $\{\mathbf 1_{\{k\in A_{ij}^*(2/\beta_1)\setminus G_{ij}\}}\}_{k\in B_{ij}}$ is a set of i.i.d.~Bernoulli random variables with mean $P_{\max}(2/\beta_1)$. 
We recall the following one-sided Chernoff bound for independent Bernoulli random variables $\{X_l\}_{l=1}^n$ with means $\{p_l\}_{l=1}^n$, $\bar p=\sum_{l=1}^n p_l/n$, and any $\eta>1$:
\begin{align}\label{eq:chernoff1}
    \Pr\left(\frac{1}{n} \sum_{l=1}^n X_l>(1+\eta)\bar p\right)<e^{-\frac{\eta}{3}\bar p n}.
\end{align}
Applying \eqref{eq:chernoff1}  with the random variables 
$\{\mathbf 1_{\{k\in A_{ij}^*(2/\beta_1)\setminus G_{ij}\}}\}_{k\in B_{ij}}$ whose means are $P_{\max}(2/\beta_1)$ and with $\eta=(1-\lambda)r/(4\lambda P_{\max}(2/\beta_1))-1$, and then assuming that $P_{\max}(2/\beta_1)<(1-\lambda)r/(8\lambda)$, results in
\begin{align}
\label{eq:prob_cond_1_before}
\Pr\left(\frac{|A_{ij}^*(\frac{2}{\beta_1})\setminus G_{ij}|}{|B_{ij}|}>\frac{1-\lambda}{4\lambda}r\right)<e^{-\frac{1}{3}\left(\frac{1-\lambda}{4\lambda}r-P_{\max}(\frac{2}{\beta_1})\right)|B_{ij}|}<e^{-\frac{1-\lambda}{24\lambda}r|B_{ij}|}.
\end{align}
We next show that the above assumption, $P_{\max}(2/\beta_1)<(1-\lambda)r/(8\lambda)$, holds w.h.p.~and thus verify w.h.p.~the desired condition.
We recall that 
$P_{\max}({2}/{\beta_1})< {r q_*^2}/{(32 (1-q_*^2))}$ (see
\eqref{eq:choices_beta0_1}).
Furthermore, by Proposition \ref{prop:lambda},  
\begin{equation}
\label{eq:snr_via_ratiosq}
\Pr\left(\frac14\frac{q_*^2}{1-q_*^2}<\frac{1-\lambda}{\lambda}<4\frac{q_*^2}{1-q_*^2}\right) \geq  1-2|E|\exp(-\Omega(q_{\min}|N_{ij}|)).     
\end{equation}
The latter two observations result in the needed bound on $P_{\max}({2}/{\beta_1})$ w.h.p. More generally, these observations and \eqref{eq:prob_cond_1_before} with a union bound over $ij\in E$ imply w.h.p.~the desired condition as follows
\begin{multline}
    \label{eq:beta1prob}
    \Pr \left( \max_{ij\in E}\frac{|A_{ij}^*(\frac{2}{\beta_1})\setminus G_{ij}|}{|B_{ij}|}\leq\frac{1-\lambda}{4\lambda}r \right)\\
    \geq
    1-|E| e^{-\Omega\left(\frac{q_*^2}{(1-q_*^2)}r\min\limits_{ij\in E}|B_{ij}|\right)}-|E|e^{-\Omega(q_{\min}\min\limits_{ij\in E}|N_{ij}|)}.
\end{multline}

To guarantee w.h.p.~the other condition, $1/\beta_1>\epsilon (1)$, we note that if the condition of Lemma~\ref{lemma:eps1_iter} holds, then an application of the conclusion of this Lemma and another application of the first equation in \eqref{eq:choices_beta0_1} imply the desired condition, that is,
\begin{align}
\epsilon (1)\leq 4\frac{\frac{1}{\beta_0}-(1-q_g^2)z_\mathcal G}{q_g^2}\leq \frac{1}{\beta_1} \text{\, if \,}  \frac{1}{\beta_0}\geq (1-q_g^2)z_\mathcal G+\max_{ij\in E} \left| s_{ij}(0)-\mathbb E s_{ij}(0)\right|.
\end{align}
In order to verify w.h.p.~the condition of Lemma~\ref{lemma:eps1_iter}, we apply Proposition~\ref{lemma:init} with $\gamma= 1/\beta_0-(1-q_g^2)z_\mathcal G$ (note that $\gamma>0$ by the first inequality of \eqref{eq:choices_beta0_1})  and a union bound over $ij\in E$ to obtain that 
\begin{multline}
\label{eq:beta0prob}
    \Pr\left(\max_{ij\in E} \left| s_{ij}(0)-\mathbb E s_{ij}(0)\right|<\frac{1}{\beta_0}-(1-q_g^2)z_\mathcal G \right)
    \\
    \geq  1-|E|e^{-\Omega\left(q_{\min}\min\limits_{ij\in E}|N_{ij}|\right)}-|E|\,e^{-\Omega\left(\left(\frac{1}{\beta_0}-(1-q_g^2)z_\mathcal G\right)^2\min\limits_{ij\in E}|N_{ij}|\right)}.
\end{multline}

We recall the following Chernoff bound for i.i.d.~Bernoulli random variables $\{X_l\}_{l=1}^m$ with means $\mu$ and any $0<\eta<1$:
\begin{align}\label{eq:chernoff2}
    \Pr\left(\left|\frac{1}{m} \sum_{l=1}^m X_l-\mu\right|>\eta\mu\right)<2e^{-\frac{\eta^2}{3}\mu m}.
\end{align}
We note that by applying \eqref{eq:chernoff2} three different times with the following random variables: $\{\mathbf{1}_{\{ij\in E\}}\}_{i, j\in [n]}$, where $\mu = p$, $m=n^2$; $\{\mathbf{1}_{\{k\in N_{ij}\}}\}_{k\in [n]}$, where $\mu = p^2$, $m=n$  (for each fixed $ij\in E$); and  $\{\mathbf{1}_{\{k\in B_{ij}\}}\}_{k\in [n]}$, where $\mu = p^2(1-q_*^2)$, $m=n$ (for each fixed $ij\in E$), and then a union bound, we obtain that with probability at least $1-\exp(-\Omega(n^2p))-n^2p$ $\exp(-\Omega(np^2))-n^2p\exp(-\Omega(np^2q_{\min}))$, or equivalently, $1-n^2p\exp(-\Omega(np^2q_{\min}))$, the following events hold: $|E|\lesssim n^2p$, 
$\min_{ij\in E}|N_{ij}|\gtrsim np^2$ and $\min_{ij\in E}|B_{ij}|\gtrsim np^2(1-q_*^2)$ . We conclude the proof by combining this observation,  \eqref{eq:beta1prob}-\eqref{eq:beta0prob}  and  Lemma \ref{lemma:deterministic1}.

\subsubsection{Proof of Theorem~\ref{thm:rand2}}\label{sec:rand2}

This proof is similar to that of Theorem \ref{thm:rand2}, but it is more difficult  
since it requires additional tools from empirical risk minimization (see Lemma \ref{lemma:risk}).
Figure \ref{fig:roadmap_B} provides a roadmap for following the proof.

\begin{figure}[H]
\begin{center}
  \includegraphics[width=1\linewidth]{./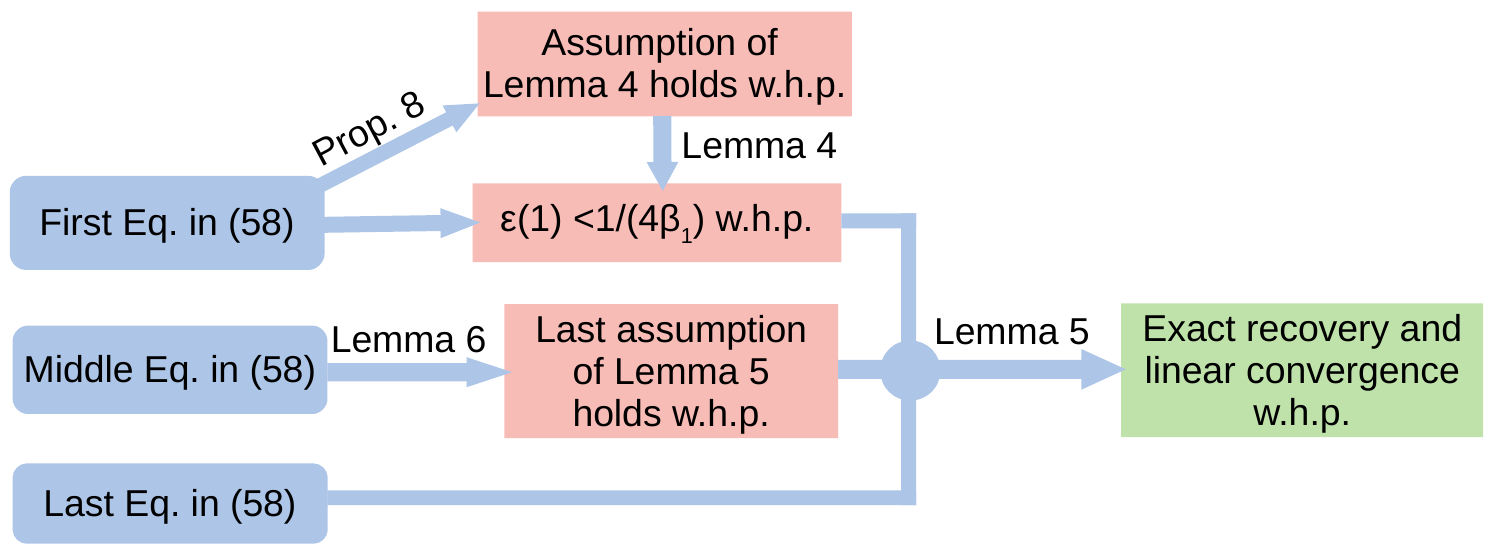}
\end{center}
   \caption{Roadmap for the proof of Theorem \ref{thm:rand2}. }\label{fig:roadmap_B}
\end{figure}

The proof of the theorem relies on the following three lemmas.
\begin{lemma}\label{lemma:tau1}
If $1/(4\beta_0)\geq \max_{ij\in E} \left| s_{ij}(0)-\mathbb E s_{ij}(0)\right|$,
 then
\begin{align*}
\epsilon (1)\leq \frac{\lambda}{1-\lambda} \frac{1}{q_g^2\beta_0}.
\end{align*}
\end{lemma}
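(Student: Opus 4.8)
The plan is to mimic the one-step estimate \eqref{eq:bound_2_eps} from the proof of Theorem \ref{thm:ir1}, but applied at the initialization stage $t=0$ and with the initialization formula of Proposition \ref{prop:initial1} substituted in. First I would write \eqref{eq:bound_2_eps} with $t=0$, namely
$$\epsilon_{ij}(1) \leq \frac{\sum_{k\in B_{ij}} e^{-\beta_0(s_{ik}(0)+s_{jk}(0))}(s_{ik}^*+s_{jk}^*)}{\sum_{k\in N_{ij}} e^{-\beta_0(s_{ik}(0)+s_{jk}(0))}},$$
and abbreviate $\gamma_{ij}=s_{ij}(0)-\mathbb E s_{ij}(0)$ and $\gamma=\max_{ij\in E}|\gamma_{ij}|$, so that the hypothesis reads $\beta_0\gamma\le 1/4$. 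By Proposition \ref{prop:initial1}, one has the exact identity $s_{ik}(0)+s_{jk}(0)=q_g^2(s_{ik}^*+s_{jk}^*)+2(1-q_g^2)z_\mathcal G+\gamma_{ik}+\gamma_{jk}$.

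The key step is to exploit the cancellation of the common baseline $2(1-q_g^2)z_\mathcal G$ between numerator and denominator. In the numerator I would use $\gamma_{ik}+\gamma_{jk}\ge -2\gamma$ to obtain $e^{-\beta_0(s_{ik}(0)+s_{jk}(0))} \le e^{-\beta_0 q_g^2(s_{ik}^*+s_{jk}^*)}\,e^{-2\beta_0(1-q_g^2)z_\mathcal G}\,e^{2\beta_0\gamma}$, and then apply the elementary inequality $xe^{-ax}\le 1/(ea)$ with $x=s_{ik}^*+s_{jk}^*$ and $a=\beta_0 q_g^2$ to bound each summand's factor $e^{-\beta_0 q_g^2(s_{ik}^*+s_{jk}^*)}(s_{ik}^*+s_{jk}^*)$ by $1/(e\beta_0 q_g^2)$. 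In the denominator I would restrict the sum to $G_{ij}$, where $s_{ik}^*=s_{jk}^*=0$, and use $\gamma_{ik}+\gamma_{jk}\le 2\gamma$ to get the lower bound $|G_{ij}|\,e^{-2\beta_0(1-q_g^2)z_\mathcal G}\,e^{-2\beta_0\gamma}$. The two $e^{-2\beta_0(1-q_g^2)z_\mathcal G}$ factors cancel, leaving
$$\epsilon_{ij}(1)\le \frac{|B_{ij}|}{|G_{ij}|}\,\frac{1}{e\beta_0 q_g^2}\,e^{4\beta_0\gamma}.$$

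To finish, I would invoke the hypothesis $4\beta_0\gamma\le 1$ to replace $e^{4\beta_0\gamma}\le e$, cancelling the $1/e$, and rewrite $|B_{ij}|/|G_{ij}|=\lambda_{ij}/(1-\lambda_{ij})$, which is at most $\lambda/(1-\lambda)$ since $t\mapsto t/(1-t)$ is increasing on $[0,1)$ and $\lambda_{ij}\le\lambda$. Taking the maximum over $ij\in E$ then yields $\epsilon(1)\le \tfrac{\lambda}{1-\lambda}\tfrac{1}{q_g^2\beta_0}$, as claimed. I expect the main (though mild) obstacle to be recognizing that the Haar baseline $(1-q_g^2)z_\mathcal G$ — which is generally not small and keeps $s_{ij}(0)$ bounded away from $s_{ij}^*$ even on good edges — does not obstruct the estimate, precisely because it enters the numerator and denominator exponents identically and therefore cancels. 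Everything else mirrors the $xe^{-ax}$ balancing argument of Theorem \ref{thm:ir1}, with its induction hypothesis $\epsilon(t)\le 1/(4\beta_t)$ here played by the deviation control $\gamma\le 1/(4\beta_0)$.
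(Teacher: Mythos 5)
Your proof is correct and follows essentially the same route as the paper's own argument in Appendix~\ref{sec:lemma:tau1}: substitute the identity $s_{ij}(0)=q_g^2 s_{ij}^*+(1-q_g^2)z_{\mathcal G}+\gamma_{ij}$ into the one-step CEMP-B bound at $t=0$, let the common Haar baseline cancel between numerator and denominator, apply $xe^{-ax}\leq 1/(ea)$ with $a=\beta_0 q_g^2$, and absorb the $e^{4\beta_0\gamma}$ factor using the hypothesis $\gamma\leq 1/(4\beta_0)$. The only (cosmetic) difference is that you track the $\pm 2\beta_0(1-q_g^2)z_{\mathcal G}$ factors explicitly before cancelling, whereas the paper drops them from the exponents immediately.
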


\begin{lemma}\label{lemma:M}
 Assume that  $1/(4\beta_1)>\epsilon (1)$,  $\beta_{t}=r\beta_{t+1}$ for $t\geq 1$,
 and
\begin{equation}\label{eq:Mbeta}
\max_{ij\in E}\frac{1}{|B_{ij}|}\sum\limits_{k\in B_{ij}}e^{-\beta_t\left(s_{ik}^*+s_{jk}^*\right)}\left(s_{ik}^*+ s_{jk}^*\right)<\frac{1}{M\beta_t} \ \text{ for all } \ t \geq 1,
\end{equation}
where $M=4e\lambda/((1-\lambda)r)$. Then, 
the estimates $\{s_{ij}(t)\}_{ij \in E}^{t \geq 1}$ 
computed by CEMP-B satisfy
\begin{align}
\label{eq:lemma:M}
\max_{ij\in E}|s_{ij}(t)-s_{ij}^*|<\frac{1}{\beta_1} r^{t-1}
\ \text{ for all } \ t \geq 1.
\end{align}
\end{lemma}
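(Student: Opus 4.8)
The plan is to prove this lemma by induction on $t \ge 1$, in exact parallel with the proof of the CEMP-A counterpart (Lemma~\ref{lemma:deterministic1}), but now driving the iteration with the exponential recursion \eqref{eq:bound_2_eps2} established in the proof of Theorem~\ref{thm:ir1} instead of with the indicator-set inclusions. Because the multiplicative error constant produced by a single step balances out exactly at the value $1/4$, I would actually prove the sharper statement $\epsilon(t) \le 1/(4\beta_t)$ for all $t \ge 1$ (equivalently $\epsilon(t) \le \tfrac{1}{4\beta_1}r^{t-1}$, using $1/\beta_t = \tfrac{1}{\beta_1}r^{t-1}$), which immediately implies the stated bound \eqref{eq:lemma:M}. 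The base case $t=1$ is the assumed initialization bound, read at the level $\epsilon(1)\le 1/(4\beta_1)$; this is precisely what Lemma~\ref{lemma:tau1} combined with the choice of $\beta_0$ in \eqref{eq:choices_beta0_1_2ndthm} supplies in the application to Theorem~\ref{thm:rand2}.

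For the inductive step I would assume $\epsilon(t)\le 1/(4\beta_t)$ and begin from \eqref{eq:bound_2_eps}. In the denominator I would retain only the good cycles $k\in G_{ij}$, where $s_{ik}^*=s_{jk}^*=0$ so that $s_{ik}(t)+s_{jk}(t)=\epsilon_{ik}(t)+\epsilon_{jk}(t)\le 2\epsilon(t)$, yielding $\sum_{k\in N_{ij}} e^{-\beta_t(s_{ik}(t)+s_{jk}(t))}\ge |G_{ij}|\,e^{-2\beta_t\epsilon(t)}$; in the numerator I would use $s_{ik}(t)+s_{jk}(t)\ge s_{ik}^*+s_{jk}^*-2\epsilon(t)$ to extract a factor $e^{2\beta_t\epsilon(t)}$. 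This is exactly the passage to \eqref{eq:bound_2_eps2}, whose prefactor $e^{4\beta_t\epsilon(t)}$ the induction hypothesis bounds by $e$, giving
\begin{equation*}
\epsilon_{ij}(t+1) \le \frac{e}{|G_{ij}|}\sum_{k\in B_{ij}} e^{-\beta_t(s_{ik}^*+s_{jk}^*)}\,(s_{ik}^*+s_{jk}^*).
\end{equation*}
I would then invoke the standing assumption \eqref{eq:Mbeta} to bound the sum by $|B_{ij}|/(M\beta_t)$, together with $|B_{ij}|/|G_{ij}| = \lambda_{ij}/(1-\lambda_{ij}) \le \lambda/(1-\lambda)$ coming from \eqref{def_lambda}, to obtain $\epsilon_{ij}(t+1)\le e\,\tfrac{\lambda}{1-\lambda}\,\tfrac{1}{M\beta_t}$. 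Substituting $M=4e\lambda/((1-\lambda)r)$ collapses this to $\epsilon_{ij}(t+1)\le \tfrac{r}{4\beta_t}=\tfrac{1}{4\beta_{t+1}}$, and taking the maximum over $ij\in E$ closes the induction.

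The one genuinely delicate point is the constant. The inductive step produces the factor $e^{4\beta_t\epsilon(t)}$, which must be controlled by the hypothesis; if one only assumes the weaker $\epsilon(t)<1/\beta_t$ written verbatim in the statement, this factor can be as large as $e^4$, far too big for $M$ to absorb. Writing the hypothesis as $\epsilon(t)\le c/\beta_t$, the step requires $e^{4c}\le 4ec$, an inequality that holds with equality precisely at $c=1/4$ and fails for larger $c$; this is exactly why the induction must be run at the sharp level $1/(4\beta_t)$ and why the base-case hypothesis should be read as $\epsilon(1)\le 1/(4\beta_1)$. Everything else is routine — in particular, unlike the CEMP-A case no set-inclusion bookkeeping is needed, since the soft exponential weights automatically suppress corrupted cycles — and I would emphasize that \eqref{eq:Mbeta} is the deterministic surrogate whose validity with high probability is the substance of the remaining probabilistic part of the proof of Theorem~\ref{thm:rand2}.
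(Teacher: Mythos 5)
Your proof is correct and follows essentially the same route as the paper's: induction on the sharper claim $\epsilon(t)<1/(4\beta_t)$, using \eqref{eq:bound_2_eps2} with the induction hypothesis to bound the exponential prefactor by $e$, then \eqref{eq:Mbeta} together with the definitions of $M$ and $\lambda$ to close the step. Your observation that the base case must be read at the level $\epsilon(1)\le 1/(4\beta_1)$ (rather than the weaker $\epsilon(1)<1/\beta_1$ written verbatim in the statement) is exactly right — the paper's own proof and its application in Theorem~\ref{thm:rand2} both tacitly use the $1/(4\beta_1)$ version.
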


The last lemma uses the notation $\mathcal F=\{f_\tau(x):  \tau>\beta\}$, where we recall that $f_\tau(x) = e^{-\tau x+1}\tau x$.
\begin{lemma}\label{lemma:risk}
If either $s_{ij}^*$ for $ij \in E_b$ is 
supported on $[a,\infty)$ and $a\geq 1/|B_{ij}|$ or $Q$ is differentiable and $Q'(x)/Q(x)\lesssim 1/x$ for $x<P(1)$, then there exists an  absolute constant $c$ such that
\begin{align}\label{eq:riskH}
\Pr\left(\sup_{f_\tau\in \mathcal F(\beta)}\frac{1}{|B_{ij}|}\sum_{k\in |B_{ij}|} f_\tau (s_{ik}^*+s_{jk}^*)>V(\beta)+c\sqrt{\frac{\log |B_{ij}|}{|B_{ij}|}}\right)<e^{-\frac{1}{3}mV(\beta)}.
\end{align}
\end{lemma}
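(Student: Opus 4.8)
The plan is to read \eqref{eq:riskH} as a one-sided uniform deviation bound for the one-parameter family $\mathcal F(\beta)=\{f_\tau:\tau>\beta\}$ evaluated on the i.i.d.\ sample $\{Y_k:=s_{ik}^*+s_{jk}^*\}_{k\in B_{ij}}$, and I write $m=|B_{ij}|$ throughout. The starting point is the crude but crucial bound $0\le f_\tau(x)=e\,\tau x\,e^{-\tau x}\le 1$ for all $x\ge 0$ and $\tau\ge 0$ (since $ue^{-u}\le e^{-1}$), together with $0\le Y\le 2$. Boundedness gives $\mathrm{Var}(f_\tau(Y))\le V^*(\beta)\le V(\beta)$ for every $\tau>\beta$, so the per-function fluctuations are governed by the variance scale $V(\beta)$; this is the quantity that must appear in both the centering and the exponent.

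The crux of the argument, and the step where the two alternative hypotheses enter, is to control the population mean uniformly, namely to establish
\[
\sup_{\tau>\beta}\mathbb E[f_\tau(Y)]\le V(\beta)
\]
(after absorbing a universal constant into the dominating function $V$). The point is that $\mathbb E[f_\tau(Y)]$ is of the same order as $\mathrm{Var}(f_\tau(Y))$ uniformly in $\tau$: writing, via the change of variables $y=Q(u)$, $\mathbb E[f_\tau(Y)]=e\int_0^1 \tau Q(u)e^{-\tau Q(u)}\,\mathrm{d}u$ and $\mathbb E[f_\tau(Y)^2]=e^2\int_0^1 \tau^2 Q(u)^2 e^{-2\tau Q(u)}\,\mathrm{d}u$, the quantile regularity $Q'(x)/Q(x)\lesssim 1/x$ is exactly what bounds the ratio of these integrals and forces $\mathbb E[f_\tau]\lesssim \mathbb E[f_\tau^2]$, which is comparable to $\mathrm{Var}(f_\tau)$. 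In the alternative regime the support bound $Y\ge a\ge 1/m$ plays the same role, and additionally makes $f_\tau(Y_k)$ negligible once $\tau\gtrsim m$, so the effective range of $\tau$ is confined to an interval $(\beta,\tau_{\max}]$ of length $\mathrm{poly}(m)$. I expect this mean/variance comparison to be the main obstacle, since it is the only place where the distributional regularity of $Y$ is used and it is what legitimises $V(\beta)$ (rather than some larger quantity) as the correct centering in \eqref{eq:riskH}.

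With both the mean and the variance controlled by $V(\beta)$, I would finish by a standard discretise-and-union-bound argument. Because $\mathcal F(\beta)$ is a one-parameter VC-subgraph class and $\tau$ effectively ranges over an interval of length $\mathrm{poly}(m)$ on which $f_\tau$ is Lipschitz in $\tau$ (with constant $O(\tau_{\max})$ on $Y\in[0,2]$), a net $\{\tau_1,\dots,\tau_N\}$ with $N=\mathrm{poly}(m)$ approximates $\sup_{\tau>\beta}\frac1m\sum_{k\in B_{ij}}f_\tau(Y_k)$ up to an error of order $1/m$; this discretisation error together with $\log N\asymp\log m$ produces the additive correction $c\sqrt{\log m/m}$. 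For each net point I would apply Bernstein's inequality to the bounded i.i.d.\ variables $f_{\tau_\ell}(Y_k)\in[0,1]$ with variance proxy $V(\beta)$,
\[
\Pr\!\left(\frac1m\sum_{k\in B_{ij}} f_{\tau_\ell}(Y_k) > \mathbb E[f_{\tau_\ell}(Y)] + t\right)\le \exp\!\left(-\frac{m t^2}{2\bigl(V(\beta)+t/3\bigr)}\right),
\]
and choose the deviation budget $t$ equal to the gap between the threshold $V(\beta)+c\sqrt{\log m/m}$ and the mean bound $\mathbb E[f_{\tau_\ell}]\le V(\beta)$; the Bernstein trade-off then yields a per-point tail of order $e^{-\frac13 mV(\beta)}$, and a union bound over the $N=\mathrm{poly}(m)$ net points (absorbed into $c$ and the $\log m$ factor) gives \eqref{eq:riskH}. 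The only delicate bookkeeping here is to verify that the two sub-regimes $V(\beta)\gtrsim\sqrt{\log m/m}$ and $V(\beta)\lesssim\sqrt{\log m/m}$ both deliver the exponent $\tfrac13 mV(\beta)$; the remaining steps are routine concentration with the variance scale built in.
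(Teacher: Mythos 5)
Your overall architecture (finite net in $\tau$, pointwise Bernstein, union bound over the net) is a genuinely different route from the paper, which instead applies Bousquet's uniform Bennett inequality to the centered supremum $\|\mathcal P_m-\mathcal P\|_{\mathcal F(\beta)}$ and bounds $\mathbb E\|\mathcal P_m-\mathcal P\|_{\mathcal F(\beta)}$ by symmetrization and a Dudley entropy integral. However, as sketched your route has a quantitative gap that I do not see how to close: the deviation allocation cannot produce the exponent $\tfrac13 m V(\beta)$. You center each net point at $\mathbb E f_{\tau_\ell}(Y)\le V(\beta)$ and give Bernstein the deviation budget $t=c\sqrt{\log m/m}$; the resulting exponent is $mt^2/\bigl(2(V(\beta)+t/3)\bigr)$, which in the regime $V(\beta)\gg\sqrt{\log m/m}$ is of order $(\log m)/V(\beta)$ --- a polynomial tail in $m$, not $e^{-mV(\beta)/3}$. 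To reach $e^{-mV(\beta)/3}$ the deviation itself must be of order $V(\beta)$, which would force $\sup_{\tau>\beta}\mathbb E f_\tau(Y)$ to be a small fraction of $V(\beta)$ (or even $\lesssim\sqrt{\log m/m}$); that is much stronger than your claimed bound $\mathbb E f_\tau\le V(\beta)$ and is false in general (in angular synchronization both $\mathbb E f_\tau(Y)$ and $\mathrm{Var}(f_\tau(Y))$ are of order $p_1/\tau$). The paper's use of Bousquet's inequality is precisely what avoids this: there the deviation is $t=V(\beta)+2c_1\sqrt{\log m/m}\ge v$, and the Bennett function satisfies $h(t/v)\ge t/(3v)$, giving $-mvh(t/v)\le -mt/3\le -mV(\beta)/3$. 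Moreover, your mean--variance comparison $\mathbb E f_\tau\lesssim\mathrm{Var}(f_\tau(Y))$ fails when $Y$ is nearly deterministic (e.g.\ $\mathbb Z_2$ with $q_*$ close to $1$: the variance can be arbitrarily small while $\mathbb E f_\tau(Y)$ is of order $1$ near $\tau\approx 1$), and the constant you propose to ``absorb into $V$'' is not at your disposal, since $V$ is any fixed dominating function of $V^*$ given in the hypothesis, possibly $V^*$ itself.

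You have also misplaced the role of the two alternative hypotheses. In the paper they are not used to compare $\mathbb E f_\tau$ with $\mathbb E f_\tau^2$; they are used to control the size of the discretization. Since $f_\tau(Y_k)<\varepsilon$ is only guaranteed once $\tau\gtrsim\log(1/\varepsilon)/\min_k Y_k$, the relevant range of $\tau$ --- hence the cardinality of the net, equivalently the covering number $N(\mathcal F(\beta);\ell_2(\mathcal P_m);\varepsilon)$ --- scales like $1/\min_k Y_k$, a random quantity with no deterministic polynomial bound in the quantile regime. The paper's entropy lemma integrates $\sqrt{\log(1/\min_k X_k)}$ against the cdf $1-(1-P(x))^m$ of the minimum, and it is exactly the condition $a\ge 1/m$ or $Q'(x)/Q(x)\lesssim 1/x$ that makes this expectation $O(\sqrt{\log m})$. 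Your sketch handles only the support-bounded regime (where $\tau_{\max}=\mathrm{poly}(m)$) and leaves the second regime --- the one actually needed for $SO(2)$ and $SO(3)$ --- unaddressed. One point in your favor: you correctly observe that passing from $\sup_\tau\mathcal P_m f_\tau$ to a centered quantity requires controlling $\sup_\tau\mathcal P f_\tau$, a step the paper's own ``rewrite'' of \eqref{eq:riskH} as a bound on $\|\mathcal P_m-\mathcal P\|_{\mathcal F(\beta)}$ passes over in silence.
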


The proofs of Lemmas~\ref{lemma:tau1} and~\ref{lemma:M} are similar to the ones of Lemmas \ref{lemma:eps1_iter} and \ref{lemma:deterministic1}. For completeness, we include them in Appendices~\ref{sec:lemma:tau1} and~\ref{sec:lemma:M}, respectively. The proof of Lemma~\ref{lemma:risk} requires tools from empirical risk minimization, and we thus provide it later in Appendix~\ref{sec:risk}.

According to Lemma~\ref{lemma:M}, the theorem follows by guaranteeing w.h.p.~the following two conditions of this lemma:  
\eqref{eq:Mbeta}  and
${1}/{4\beta_1}>\epsilon (1)$. 
We note that \eqref{eq:Mbeta} is guaranteed w.p.~at least $1-\exp(-\Omega(V(\beta_1)|B_{ij}|))$ by applying Lemma~\ref{lemma:risk} with $\beta_1$ such that $V(\beta_1)<{e}/{2M}$ and $|B_{ij}|$ sufficiently large such that $\sqrt{\log |B_{ij}|/|B_{ij}|}<{e}/{2cM}$. The combination of the middle inequality of \eqref{eq:choices_beta0_1_2ndthm} and \eqref{eq:snr_via_ratiosq} implies that  $V(\beta_1)<e/(2M)=(1-\lambda)r/(8\lambda)$ with the same probability as in \eqref{eq:snr_via_ratiosq}.
We note that \eqref{eq:snr_via_ratiosq} implies that if 
$|B_{ij}|/\log |B_{ij}|\gtrsim \left((1-q_*^2)/(q_*^2 r)\right)^2$, then with the probability specified in \eqref{eq:snr_via_ratiosq}, $\sqrt{\log |B_{ij}|/|B_{ij}|}<{e}/{2cM}$ holds. We recall that $|B_{ij}|\gtrsim np^2(1-q_*^2)$ with probability $1-\exp(\Omega(np^2q_{\min}))$. Combining this observation with \eqref{eq:nomega} concludes w.h.p.~the desired bound, that is,   
$|B_{ij}|/\log |B_{ij}|\gtrsim \left((1-q_*^2)/(q_*^2 r)\right)^2$. In summary, 
 \eqref{eq:Mbeta} holds with probability at least $1-\exp(-\Omega(V(\beta_1)|B_{ij}|))-\exp(\Omega(np^2q_{\min}))$.

Next we verify w.h.p.~the other condition of Lemma \ref{lemma:M}, namely, $1/(4\beta_1)>\epsilon (1)$. If the assumption of Lemma~\ref{lemma:tau1} holds,  then application of the conclusion of this lemma and following up with a combination of the first equation in \eqref{eq:choices_beta0_1_2ndthm} and \eqref{eq:snr_via_ratiosq} yield (with the probability specified in \eqref{eq:snr_via_ratiosq})
\begin{align*}
\epsilon (1)\leq \frac{\lambda}{1-\lambda} \frac{1}{q_g^2\beta_0}<\frac{1}{4\beta_1}
\quad \text{ if } \quad \frac{1}{4\beta_0}\geq \max_{ij\in E} \left| s_{ij}(0)-\mathbb E s_{ij}(0)\right|.
\end{align*}
In order to verify w.h.p.~the assumption of Lemma~\ref{lemma:tau1}, we apply Proposition~\ref{lemma:init} with $\gamma=1/(4\beta_0)$ (note that $\gamma>0$ by the first inequality of \eqref{eq:choices_beta0_1_2ndthm}) together with a union bound over $ij\in E$ to obtain that 
\begin{multline*}
    \Pr\left(\max_{ij\in E} \left| s_{ij}(0)-\mathbb E s_{ij}(0)\right|<\frac{1}{4\beta_0} \right)\\
    \geq  1-|E|e^{-\Omega\left(q_{\min}\min\limits_{ij\in E}|N_{ij}|\right)}-|E|\,e^{-\Omega\left(\frac{1}{\beta_0^2}\min\limits_{ij\in E}|N_{ij}|\right)}.
\end{multline*}
These arguments and our earlier observation that $|E|\lesssim n^2p$, $\min_{ij\in E}|B_{ij}|\gtrsim np^2(1-q_*^2)$ and $\min_{ij\in E}|N_{ij}|\gtrsim np^2$ w.p.~$1-n^2p\exp(-\Omega(np^2q_{\min}))$ conclude the proof.

\subsubsection{Proof of Theorems \ref{thm:rand1_full} and \ref{thm:rand2_full}}
\label{sec:rand3}
We prove Theorem \ref{thm:rand1_full}, whereas the proof of Theorem \ref{thm:rand2_full} is identical. Note that  \eqref{eq:lambda_rand1} describes the same conditions of Theorem \ref{thm:it1} with $\lambda$ replaced by $ 1/4-\Delta_q$. Thus, it suffices to prove that $\lambda\leq 1/4-\Delta_q$ with the probability 
specified in \eqref{eq:prob_bound_rand1_full}. This implies the conclusion of Theorem \ref{thm:it1} with the latter probability, or equivalently, the conclusion of Theorem \ref{thm:rand1_full}.
Applying Proposition \ref{prop:lambda2} with $x=1/4-\Delta_q$ and $q_*=\Omega(1)$ ($q_* \geq q_c$), and then the fact that $\Delta_q=\frac14-\Delta_q-(1-q_c^2)\leq \frac14-\Delta_q-(1-q_*^2)$, 
we obtain that 
\begin{align}\label{eq:prob_rand1_full_proof}
&\Pr\left(\lambda_{ij}>\frac14-\Delta_q\right)<\exp\left(-\Omega\left(\left((1/4-\Delta_q)-(1-q_*^2)\right)^2|N_{ij}|\right)\right) \nonumber\\
&\leq \exp\left(-\Omega\left(\Delta_q^2|N_{ij}|\right)\right) \ 
\text{ for any } \ ij\in E.
\end{align}
We note that application of Chernoff bound in \eqref{eq:chernoff2} twice, first with the random variables $\{\mathbf{1}_{\{ij\in E\}}\}_{i, j\in [n]}$, where $\mu = p$, $m=n^2$,  then with  $\{\mathbf{1}_{\{k\in N_{ij}\}}\}_{k\in [n]}$, where $\mu = p^2$, $m=n$  (for each fixed $ij\in E$), and then a union bound, yields that with probability at least $1-\exp(-\Omega(n^2p))-n^2p\exp(-\Omega(np^2))$, or equivalently, $1-n^2p\exp(-\Omega(np^2))$, the following events hold: $|E|\lesssim n^2p$ and 
$\min_{ij\in E}|N_{ij}|\gtrsim np^2$. Combining this observation, \eqref{eq:prob_rand1_full_proof} and a union bound over $ij\in E$ results in the desired probability bound of \eqref{eq:prob_bound_rand1_full} for the event 
$\lambda\leq 1/4-\Delta_q$.

\subsection{Clarification of Quantities Used in Theorems \ref{thm:rand1} and \ref{thm:rand2}}\label{sec:APP}
Theorems \ref{thm:rand1} and \ref{thm:rand2} use the quantities $P_{\max}(x)$, $z_\mathcal G$, $V(x)$ and $Q(x)$. In this section, we  provide explicit expressions for these quantities for common group synchronization problems. We also verify that the special condition of Theorem \ref{thm:rand2} holds in these cases. This special condition is that either $s_{ij}^*$ for $ij \in E_b$ is supported on $[a,\infty)$, where $a\geq 1/(np^2(1-q_*^2))$, or $Q$ is differentiable and $Q'(x)/Q(x)\lesssim 1/x$ for $x<P(1)$. 
When using the first part of this condition, then $Q$ is not needed and we will thus not specify it in this case.
We recall that $Y$ denotes the random variable $s_{ik}^*+s_{jk}^*$ for any arbitrarily fixed $ij\in E$ and $k\in B_{ij}$.

\subsubsection{$\mathbb{Z}_2$ Synchronization}
\label{sec:Z2_demonstration}
In this problem, $\mathcal G=\mathbb Z_2$, which is commonly represented by $\{-1,1\}$ with ordinary multiplication. It is common to use the bi-invariant metric $d_{\mathcal G}(g_1,g_2)=|g_1-g_2|/2$ and thus $d_{ij,k}=|g_{ij}g_{jk}g_{ki}-1|/2\in \{0,1\}$. The Haar measure on $\mathbb Z_2$ is the Rademacher distribution. 

We note that $z_\mathcal G=1/2$ and 
$P_{\max}(x)=\mathbf{1}_{\{x=1\}}$
(since for $k\in B_{ij}$, $\max\{s_{ik}^*, s_{jk}^*\}$ $=1$). We next show that $V(x)=6e^{-x}$. Indeed, $Y=1,2$ w.p.~$p_1=2q_*(1-q_*)/(1-q_*^2)$ and $p_2=(1-q_*)^2/(1-q_*^2)$, respectively, and thus  
\begin{align*}
    \sup_{\tau>x}\text{Var}(f_\tau(Y))&\leq e^2\sup_{\tau>x}\mathbb E(e^{-2\tau Y}\tau^2 Y^2)
    =e^2\sup_{\tau>x}(p_1 e^{-2\tau}\tau^2+4p_2 e^{-4\tau}\tau^2)\nonumber\\
    &\leq  \mathbf 1_{\{0<x<1\}}+\left(e^2(p_1 e^{-2x}x^2+ 4p_2 e^{-4x}x^2)\right)\mathbf 1_{\{x>1\}}\nonumber\\
    &\leq  \mathbf 1_{\{0<x<1\}}+e^2\max\left( e^{-2x}x^2, 4 e^{-4x}x^2\right)\mathbf 1_{\{x>1\}}<6e^{-x}.
\end{align*}
Since $s^*_{ij}$ for $ij \in E_b$ is supported on $\{1\}$, the special condition of Theorem \ref{thm:rand2} holds when $n = \Omega(1/(p^2(1-q_*^2)))$. This latter asymptotic bound is necessary so that the third term in \eqref{eq:rand2prob} is less than 1.

\subsubsection{Permutation Synchronization}

In this problem, $\mathcal G=S_N$, whose elements are commonly represented by permutation matrices in  $\mathbb R^{N\times N}$. A common bi-invariant metric on $S_N$ is $d_\mathcal G (\bP_1, \bP_2)=1-\Tr(\bP_1\bP_2^{-1})/N$ and  thus $d_{ij,k}=1- \Tr(\bP_{ij}\bP_{jk}\bP_{ki})/N$.
The cdf of $\max\{s_{ik}^*,s_{jk}^*\}$, $P_{\max}(x)$, can be complicated, but one can find a more concise formula for an upper bound for it, which is sufficient for verifying the middle inequality in \eqref{eq:choices_beta0_1}. Indeed, the cdf of $s^*_{ij}$ for $ij\in \tilde E_b$,
gives an upper bound of $P_{\max}(x)$.  For $N\in \mathbb N$, $1 \leq m \leq N$ and $ij\in \tilde 
E_b$ fixed, $s_{ij}^*=d_\mathcal G(\bP_\text{Haar}, \bI_{N\times N})$ for $\bP_\text{Haar} \sim \text{Haar}(S_N)$. Moreover, $s_{ij}^*=m/N$ is equivalent to having exactly  $m$ elements displaced (and $N-m$ fixed) by  $\bP_\text{Haar}$. Therefore, using the notation $[x]$ for the nearest integer to $x$, for  $1 \leq m \leq N$,
\begin{equation*}
P_{\max}
\left(\frac{m}{N}\right) \leq \sum_{l=0}^m\Pr\left(s_{ij}^*=
\frac{l}{N} \left| ij \in \tilde E_b \right. \right)=
\frac{1}{N!}+
\sum_{l=1}^m \frac{1}{N!} {N \choose l}
\left[\frac{l!}{e}\right].
\end{equation*}
Since $z_\mathcal G=\mathbb E (s_{ij}^*)$ for $ij\in \tilde E_b$, the exact formula for computing $z_\mathcal G$ is
\[
z_{\mathcal G}=\sum_{m=1}^N 
\frac{m}{N!}{N\choose m}
\left[\frac{m!}{e}\right].
\]

We claim that $V(x)$ can be chosen as
\begin{align}
\label{eq:V_perm}
    V(x)=\mathbf 1_{\{x\leq N\}}+\mathbf 1_{\{x> N\}}\frac{e^2 }{N^2} e^{-2 x/N}  x^2.
\end{align}
Indeed, if $q_m$ denotes the probability density function (pdf) of $Y$ and $x_m=m/N$, then
\begin{align}
   & \sup_{\tau>x}\text{Var}(f_\tau(Y))\leq e^2\sup_{\tau>x}\sum_{m=1}^{2N}e^{-2\tau x_m}\tau^2 x_m^2q_m\nonumber \\
    \leq &e^2\sup_{\tau>x}\sum_{x_m\leq \frac{1}{x}}e^{-2\tau x_m}\tau^2 x_m^2q_m + e^2\sup_{\tau>x}\sum_{x_m> \frac{1}{x}}e^{-2\tau x_m}\tau^2 x_m^2q_m \nonumber \\
    \leq &\sum_{x_m\leq \frac{1}{x}}q_m + e^2\sum_{x_m> \frac{1}{x}}e^{-2x x_m}x^2 x_m^2q_m. \label{eq:permuH}
\end{align}
where the last inequality follows from  the facts that $e^2e^{-2\tau x_m}\tau^2 x_m^2\leq 1$ for any $x_m$ and $\tau$ and $e^{-2\tau x}\tau^2 x^2$ achieves global maximum at $x=1/\tau$. To conclude \eqref{eq:V_perm} we note that for $x>1/x_1=N$
(so $x_m>1/x$ for all $m \geq 1$), the right term on the RHS of \eqref{eq:permuH} is bounded by
$ e^2e^{-2x_1 x} x_1^2 x^2=e^2 e^{-2 x/N}  x^2/N^2
$.

Since $s^*_{ij}$ for $ij \in E_b$ is supported on $\{m/N\}_{m=1}^N$, the special condition of Theorem \ref{thm:rand2} holds when $n = \Omega(N/(p^2(1-q_*^2)))$. As mentioned above the requirement $n = \Omega(1/(p^2(1-q_*^2)))$ is necessary so that the third term in \eqref{eq:rand2prob} is less than 1. The additional dependence on $N$ is specific for this
application and makes sense.

\subsubsection{Angular Synchronization}
\label{subsubsec:angular}
In this problem, $\mathcal G=SO(2)$, which is commonly associated with the unit circle, $S^1$, in the complex plane with complex multiplication. A common bi-invariant metric is $d_{\mathcal G}(\theta_1,\theta_2)=|(\theta_1-\theta_2) \mod (-\pi,\pi]|/\pi$ and thus $d_{ij,k}=     |(\theta_{ij}+\theta_{jk}+\theta_{ki})\mod (-\pi,\pi]|/\pi$.
The Haar measure is the uniform measure on $S^1$ and thus $s_{ij}^*$ for $ij \in E_b$ is uniformly distributed on $[0,1]$. 

We first compute $P_{\max}(x)$ and $z_\mathcal G$.
We note that if either $ik$ or $jk\in E_b$, but not both in $E_b$, then the cdf of $\max(s_{ik}^*,s_{jk}^*)$ is $x$. Also, if $ik$,
$jk\in E_b$, then the cdf of $\max(s_{ik}^*,s_{jk}^*)$ is $x^2$. Thus, for $k\in B_{ij}$,
$P_{\max}(x)=p_1x+p_2x^2$, where $p_1=2q_*(1-q_*)/(1-q_*^2)$ and $p_2=(1-q_*)^2/(1-q_*^2)$. Furthermore, $z_\mathcal G=1/2$. We also note that a simple upper bound for $P_{\max}(x)$ is $x$.

The pdf of $Y$ is $p(t)=p_1\mathbf 1_{\{t\leq 1\}}+p_2 (t\mathbf 1_{\{t<1\}}+(2-t)\mathbf 1_{\{t\geq 1\}})$. We note that $V(x)$ can be chosen as the following bound on $V^*(x)$
\begin{align*}
    &\sup_{\tau>x}\text{Var}(f_\tau(Y))\leq e^2\sup_{\tau>x}\mathbb E(e^{-2\tau X}\tau^2 t^2)
    <e^2\sup_{\tau>x} \int_0^\infty e^{-2\tau t}\tau^2t^2(p_1+p_2t) \, dt\nonumber\\
    =&p_1\frac{e^2}{4x}+p_2\frac{3e^2}{8x^2}\leq e^2\max\left\{\frac{1}{4x}\,,\frac{3}{8x^2}\right\}.
\end{align*}

At last, we verify that the special condition of Theorem \ref{thm:rand2} holds.
By integrating the above pdf, the cdf of $Y$ is $P(t)=p_1t\mathbf 1_{\{t\leq 1\}}+p_2( t^2/2 \mathbf 1_{\{t<1\}} + (p_1+1-(t-2)^2/2)\mathbf 1_{\{t\geq 1\}})$. We note that $Q'(x)=1/p(Q(x))$ and thus for $x<P(1)$, $Q'(x)=1/(p_1+p_2Q(x))$. Therefore, for $x<P(1)$
\begin{align*}
    \frac{Q'(x)}{Q(x)}=\frac{1}{p_1Q(x)+p_2Q^2(x)}\leq \frac{1}{x},
\end{align*}
where the last inequality follows from the observation $p_1t+p_2t^2>P(t)$ for $t\leq 1$.
\subsubsection{Rotation Synchronization}
In rotation synchronization $\mathcal G = SO(3)$ and a common metric is $d_{\mathcal G}(R_1,R_2)=1/(\sqrt 2 \pi)\cdot \|\log(R_1R_2^T)\|_F$,
which is bi-invariant \cite{metric_rotation}.
Therefore, $d_{ij,k}= 1/(\sqrt2 \pi)\cdot \|\log(R_{ij}R_{jk}R_{ki})\|_F$. We remark that $\|\log(R)\|/\sqrt 2$ is the absolute value of rotation angle, theta, around the eigenspace of $R$ with eigenvalue 1. The Haar measure on $SO(3)$ is described, e.g., in \cite{wang2013exact}. 

The 
distribution of $s_{ij}^*$ is exactly the distribution of $|\theta|$ (described above) for the corresponding group ratio. It is shown in \cite{wang2013exact} that $\theta$ is supported on $[-\pi,\pi]$ with density $(1-\cos\theta)/{(2\pi)}$. Thus, the pdf of $s_{ij}^*$ is  $1-\cos(\pi x)$ for $x\in [0,1]$.
We note that if either $ik\in E_b$ or $jk\in E_b$ (but not both), then the cdf of $\max(s_{ik}^*,s_{jk}^*)$ is $\int_0^x 1-\cos(\pi t) dt = x-\sin(\pi x)/\pi$. Furthermore, if $ik, jk\in E_b$, then the cdf of $\max(s_{ik}^*,s_{jk}^*)$ is $(x-\sin(\pi x)/\pi)^2$. Thus, for $k\in B_{ij}$ and $p_1$ and $p_2$ as specified in Section \ref{subsubsec:angular},
\begin{align*}
    P_{\max}(x)=p_1(x-\sin(\pi x)/\pi)+p_2(x-\sin(\pi x)/\pi)^2\leq x-\sin(\pi x)/\pi.
\end{align*}
Furthermore, 
\begin{align*}
    z_\mathcal G = \int_0^1 (1-\cos(\pi t)) t \, dt=\frac12+\frac{2}{\pi^2}.
\end{align*}

We next specify V(x). Clearly,  
\begin{align*}
    \sup_{\tau>x}\text{Var}(f_\tau(Y))&\leq e^2\sup_{\tau>x}\mathbb E(e^{-2\tau Y}\tau^2 Y^2)= e^2\sup_{\tau>x}\int_0^2 e^{-2\tau t}\tau^2 t^2 p(t) \,dt,
\end{align*}
where $p(t)$ is the pdf of $Y$. It can be easily shown that 
\begin{align*}
    p(t)= p_1(1-\cos(\pi x))\mathbf 1_{\{t\leq 1\}}+p_2(\mathbf 1_{\{t\leq 1\}}p_A(t)+ \mathbf 1_{\{t>1\}}p_B(t)),
\end{align*}
where $p_A(t)=t-3\sin(\pi t)/(2\pi)+\cos(\pi t)t/2$ and $p_B(t)=2-t-5\sin(\pi t)/(2\pi)+ \cos(\pi t)(2-t)/2$. One can verify that $p(t)\leq p_1\pi^2t^2/2+p_2\pi^4t^5/120$.
Thus, $V(x)$ can be chosen as the final RHS of the following equation
\begin{align*}
    &\sup_{\tau>x}\text{Var}(f_\tau(Y))\leq   p_1\frac{e^2\pi^2}{2}\sup_{\tau>x}\int_0^\infty e^{-2\tau t}\tau^2 t^3 \,dt+p_2\frac{e^2\pi^4}{120}\sup_{\tau>x}\int_0^\infty e^{-2\tau t}\tau^2 t^7 \,dt =\nonumber\\
    &p_1\frac{e^2\pi^2}{2}\sup_{\tau>x}\frac{3}{8\tau^2}+p_2\frac{e^2\pi^4}{120}\sup_{\tau>x}\frac{315}{16\tau^6}=p_1\frac{3e^2\pi^2}{16x^2}+p_2\frac{21e^2\pi^4}{128x^6}<\max\left\{\frac{14}{x^2}\, ,\frac{120}{x^6}\right\}.
\end{align*}

 At last, we verify the special condition of Theorem \ref{thm:rand2}.
We first note that by the fact that $p(1)=0.5$, the pdf $p(t)$ satisfies $p_12t^2 + p_2t^5/2\leq p(t)\leq p_1\pi^2t^2/2+p_2\pi^4t^5/120$ for $t\leq 1$. Thus for $t\leq 1$ , the cdf $P(t)$ satisfies $p_12t^3/3+p_2t^6/12\leq P(t)\leq  p_1\pi^2t^3/6+p_2\pi^4t^6/720$. As a result, if $x<P(1)$, then  $Q'(x)=1/p(Q(x))\leq 1/(p_12Q^2(x)+p_2Q^5(x)/2)$. Consequently,
\begin{align*}
    \frac{Q'(x)}{Q(x)}\leq \frac{1}{p_12Q^3(x)+p_2\frac{1}{2}Q^6(x)}\leq \frac1x,
\end{align*}
where the last inequality follows from $P(t)\leq p_1\pi^2t^3/6+p_2\pi^4t^6/720\leq p_12t^3+p_2 t^6/2$ for $t\leq 1$.

\section{Numerical Experiments}\label{sec:experiment}

We demonstrate the numerical performance of CEMP-B and validate the proposed theory.  For comparison, we also test some well-known baseline approaches for group synchronization. We consider the following two representatives of discrete and continuous groups: $\mathbb Z_2$ and $SO(2)$.

Section \ref{sec:implement} summarizes various implementation details of CEMP-B and the baseline algorithms we compare with.
Section \ref{sec:num_conv} numerically verifies our theoretical implications for the choice of $\{\beta_t\}_{t=1}^T$ and our convergence estimates for CEMP-B in the setting of adversarial corruption. 
Sections \ref{sec:num_exact} and \ref{sec:num_noise} test the recovery of CEMP and other baseline algorithms under adversarial corruption without and with noise. Finally, Section \ref{sec:num_uniform} demonstrates phase transition plots of different approaches under uniform corruption. 

\subsection{Details of Implementation and Comparison}
\label{sec:implement}

Our choices of $d_\mathcal G$ for $\mathbb Z_2$ and $SO(2)$ are specified in Sections \ref{sec:Z2_demonstration} and \ref{subsubsec:angular}, respectively.
We represent the elements of $SO(2)$ by the set of angles modulo $2 \pi$, or equivalently, by elements of the unit complex circle $U(1)$. 

All implemented codes are available in the following supplementary Github page: \url{ https://github.com/yunpeng-shi/CEMP}. All experiments were performed on a computer with a 3.8 GHz 8-core i7-10700K CPU and 48 GB memory. 
For CEMP we only implemented CEMP-B, since it is our recommended practical approach. We used the following natural choice of default parameters for CEMP (i.e., CEMP-B) throughout all experiments: $\beta_t = 1.2^{t}$ for $0\leq t\leq 20$. We justify this choice in Section \ref{sec:num_conv}. Other choices of parameters are only tested in Section \ref{sec:num_conv}.
We implemented the slower version of CEMP,
with $\mathcal{C}=\mathcal{C}_3$ (instead of using a subset of $\mathcal{C}_3$ with a fixed number of $3$-cycles per edge), since it is fully justified by our theory.

In Sections \ref{sec:num_exact} and \ref{sec:num_noise}, we compare CEMP+GCW with Spectral~\cite{Z2,singer2011angular} and SDP~\cite{Z2Afonso,singer2011angular} for solving $\mathbb Z_2$ and $SO(2)$ synchronization. 
Our codes for Spectral and SDP follow their description after \eqref{eq:SDR}. 
The specific implementations are rather easy. Indeed, recalling the  notation $\bY$ of \eqref{eq:SDR}, for $kl \in E$: $\bY_{kl}\in \{1,-1\}$ for $\mathcal{G}=\mathbb Z_2$ and  $\bY_{kl}=e^{i \theta_{kl}}\in U(1)$ for $\mathcal{G}=SO(2)$. For SDP, we use a default Matlab-based CVX-SDP solver. 

For $\mathcal G=SO(2)$, we also compare with an IRLS  algorithm that aims to solve \eqref{eq:gs} with $\rho(\cdot)=\|\cdot\|_1$. It first initializes the group elements using Spectral \cite{singer2011angular} and then iteratively solves a relaxation of the following weighted least squares formulation
\begin{align}\label{eq:wls}
    \{\hat g_i(t)\}_{i\in [n]}=\argmin_{\{g_i\}_{i\in [n]}\subset \mathcal G} \sum_{ij\in E}\tilde w_{ij}(t)d^2_{\mathcal G}\left( g_{ij}, g_i g_j^{-1}\right),
\end{align}
where $\tilde w_{ij}(t) = \frac{w_{ij}(t)}{\sum_{j\in N_i}w_{ij}(t)}$, and the new weight is updated by
\begin{align*}
    w_{ij}(t+1) = \frac{1}{d_{\mathcal G}\left(g_{ij}, \hat g_i(t)\hat g_j^{-1}(t)\right)+10^{-4}};
\end{align*}
the additional regularization term $10^{-4}$ aims to avoid a zero denominator. More specifically, the relaxed solution of \eqref{eq:wls} is practically found by the weighted spectral method described after \eqref{eq:weighted}, where $\tilde p_{ij}$ in \eqref{eq:weighted} is replaced by $\tilde w_{ij}(t)$. The weights $w_{ij}(t)$ are equally initialized, and IRLS is run for maximally 100 iterations, where it is terminated  whenever the mean of the distances $d_\mathcal G(\hat g_i(t) \hat g_j^{-1}(t), \hat g_i(t+1) \hat g_j^{-1}(t+1))$ over all $ij\in E$ is less than 0.001. We remark that this approach is the $\ell_1$ minimization  version of \cite{se3_sync} for $SO(2)$. Since IRLS is not designed for discrete optimization, we do not apply it to $\mathbb Z_2$ synchronization.

In the special noiseless setting of Section \ref{sec:test_speed}, we also test CEMP+MST. Our code for CEMP+MST follows the description in Section \ref{sec:practical}, where the MST is found by Prim's algorithm.

Since we can only recover the group elements up to a global right group action, we 
use the following error metric that overcomes this issue: 
\begin{align}\label{eq:errg}
    \text{error}_\mathcal G = \frac{1}{|E|}\sum_{ij\in E}d_{\mathcal G}(\hat g_i\hat g_j^{-1}, g_i^*g_j^{*-1}).
\end{align}
We use \eqref{eq:errg} to measure the performance of CEMP+GCW, Spectral, SDP and IRLS.
We note that we cannot use \eqref{eq:errg} to measure the performance of CEMP, as it does not directly solve group elements. Thus, we evaluate CEMP by 
\begin{align}
\text{error}_S
= \frac{1}{|E|}\sum_{ij\in E}\left|s_{ij}^*-s_{ij}(T)\right|.   
\end{align}

\subsection{Numerical Implications of the Theoretical Estimates for the Adversarial Case}\label{sec:num_conv}
Theorem 
\ref{thm:ir1} suggests that in the noiseless adversarial setting, $\beta_0$ should be sufficiently small and $\beta_t$ should  exponentially increase to infinity with a sufficiently small rate $r$. Theorem 
\ref{thm:ir2} suggests that in the adversarial setting with noise level  $\delta$, $\beta_0$ should be sufficiently small and $\beta_t$ should start increasing almost exponentially with a small rate $r$ and then slow down and converge to a large number proportional to $1/\delta$.
Nevertheless, one cannot test the algorithm with arbitrarily large $t$ due to numerical instabilities; furthermore the noise level $\delta$ is unknown. Therefore, in practice, we use a simpler strategy: we start with a sufficiently small $\beta_0$, and then exponentially increase it with a sufficiently small rate $r>1$, so $\beta_{t}=r\beta_{t-1}$, and stop when $\beta_t$ exceeds a large number $\beta_{\max}$. Our default values  
are $\beta_0=1$, $\beta_{\max}=40$ and $r=1.2$. This choice leads to $T=20$ and we thus expressed it earlier in Section \ref{sec:implement} as $\beta_t = 1.2^{t}$ for $0\leq t\leq 20$. Note that if $\beta_T\approx 40$, then any $s_{ij}(T)=1$ is assigned a negligible weight $\approx \exp(-40)\approx 10^{-17}$. Therefore, enlarging the number of iterations cannot help much and it can worsen the accuracy by accumulating errors.
We remark that in some noisy scenarios lower $T$ may be preferable and we demonstrate below an issue like this when using the ``log max'' error.

We will check whether the above choices for $\{\beta_t\}_{t=1}^T$ work sufficiently well under basic corruption models for CEMP-\er. We also test two choices that contradict our theory: 1) $\beta_0=1$ and $\beta_{\max}=5$ ($\beta_{\max}$ is too small). 2) $\beta_0=30$ and $\beta_{\max}=40$ ($\beta_0$ is too large).

We fix the group $SO(2)$ and generate $G([n], E)$ by the Erd\H{o}s-R\'{e}nyi model $G(n,p)$ with $n=200$ and $p=0.5$. The ground truth elements of $SO(2)$, $\{\theta_i^*\}_{i=1}^n$, are i.i.d.$\sim \text{Haar}(SO(2)) \in (-\pi,\pi]$. 
We assume an additive noise for the uncorrupted group ratios with noise level $\sigma_{in}$. For this purpose we i.i.d.~sample  $\{\varepsilon_{ij}\}_{ij \in E}$ from either a standard Gaussian or a uniform distribution on $[-\sqrt3, \sqrt3]$ (in both cases the variance of  $\varepsilon_{ij}$ is $1$).
We also generate adversarially corrupted elements, $\{\theta_i^{adv}\}_{i=1}^n$, that are i.i.d.$\sim \text{Haar}(SO(2))$. For each $ij \in E$, the observed group ratio is independently corrupted with probability $q$ as follows:
\begin{align*} 
\theta_{ij}=\begin{cases}
 \theta_{i}^*-\theta_j^{*}+\sigma_{in}\varepsilon_{ij} \mod (-\pi, \pi], & \text{ w.p. } 1-q;\\
\theta_{i}^{adv}-\theta_j^{adv}\mod (-\pi, \pi], & \text{ w.p. } q.
\end{cases}
\end{align*}
This setting was adversarially created so that the corrupted group ratios are cycle-consistent. Clearly, the information-theoretic threshold on $q$ is $0.5$. That is, exact recovery is impossible if and only if $q\geq 0.5$. We thus fix $q=0.45$ in our first demonstration so that our setting (especially with noise) is sufficiently challenging.

We consider three noise regimes: $\sigma_{in}=0$, $\sigma_{in}=0.05$ and $\sigma_{in}=0.2$ (the last two cases include both Gaussian and uniform noise). We test the three different choices of $\beta_0$ and $\beta_{\max}$ described above (one implied by our theory and two contradicting it) with fixed $T=20$. 

\begin{figure}[htbp]
\begin{center}
   \includegraphics[width=1\linewidth]{./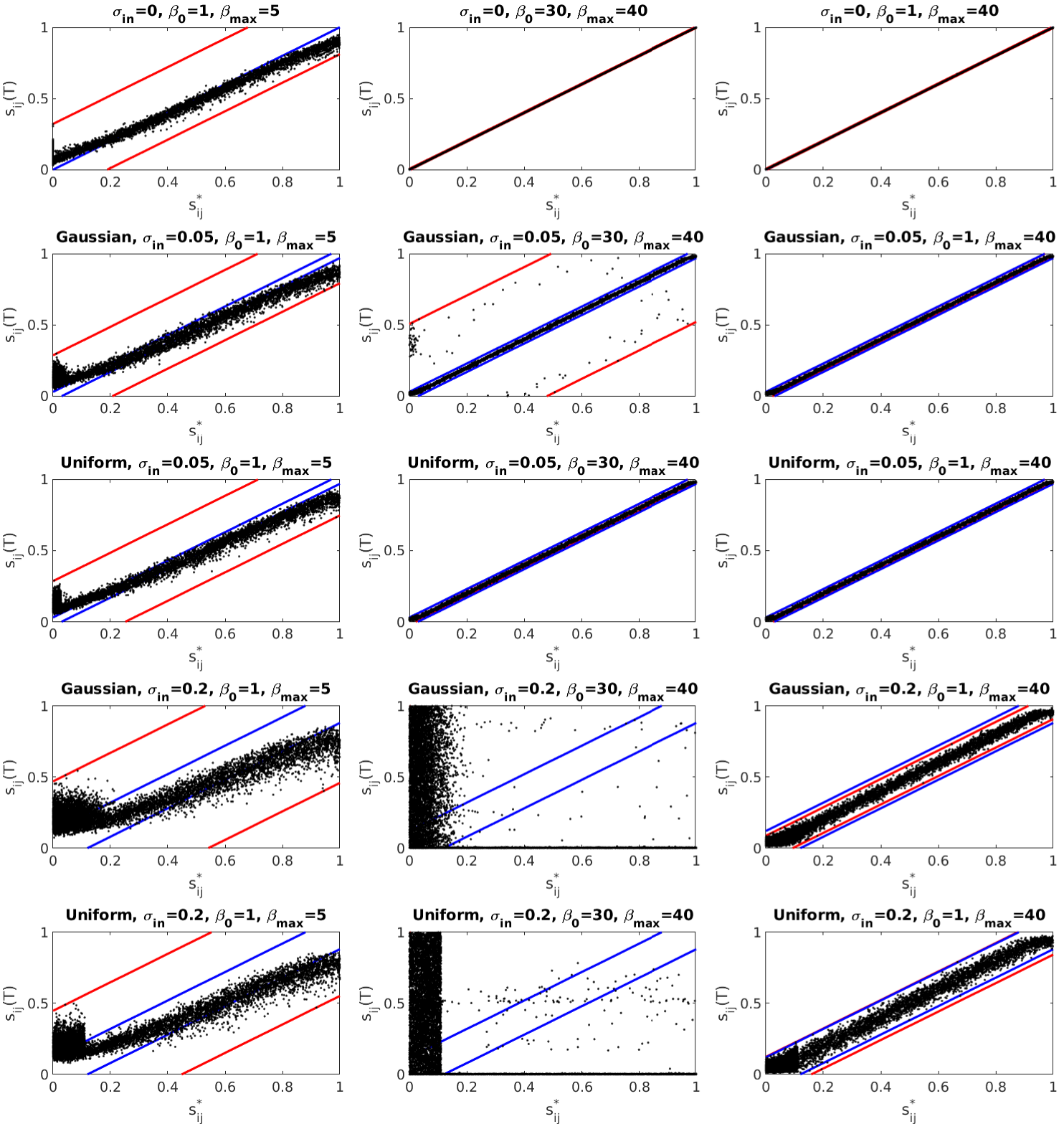}
\end{center}
   \caption{Scatter plot of the estimated corruption levels v.s. the ground truth.}\label{fig:para}
\end{figure}

Figure \ref{fig:para} presents scatter plots for the estimated corruption level, $s_{ij}(T)$, as a function of the ground truth one,  $s_{ij}^*$, for all $ij\in E$. 
The last column corresponds to application of our recommended parameters and the other two columns correspond to other choices of parameters that violate our theory.  The rows correspond to different noise levels and noise distributions. Ideally, in the case of exact recovery, the points in the scatter plot should lie exactly on the line $y=x$. However, we note that in the noisy case ($\sigma_{in}>0$), the exact estimation of $s_{ij}^*$ is impossible. The red lines form a  tight region around the main line containing all points;  their equations are $y=x+\epsilon_+$ and $y=x-\epsilon_-$, where $\epsilon_+=\max_{ij\in E} (s_{ij}(T)-s_{ij}^*)$ and $\epsilon_- = \max_{ij\in E}(s_{ij}^*-s_{ij}(T))$. The blue lines indicate variation by $0.6 \cdot \sigma_{in}$ from the main line (these are the lines $y=x \pm 0.6 \cdot \sigma_{in}$). We chose the constant 0.6 since in the third column, these lines are close to the red ones.

One can see that CEMP-\er with the recommended $\{\beta_t\}_{t=1}^T$ achieves exact recovery in the noiseless case. It approximately estimates the corruption levels in the presence of noise and its maximal error is roughly proportional to the noise level. In contrast, when $\beta_0$ and $\beta_{\max}$ are both small, the algorithm fails to recover the true noise level even when $\sigma_{in}=0$. Indeed, with a small $\beta_t$, bad cycles are assigned weights sufficiently far from $0$ and this results in inaccurate estimates of $s_{ij}^*$. When $\beta_0$ and $\beta_{\max}$ are both large, the algorithm becomes unstable in the presence of noise.
When the noise level is low ($\sigma_{in}=0.05$), the performance is fine when the distribution is uniform; however, when the distribution is Gaussian, there are already some wrong estimates with large errors. When the noise level is $0.2$, the self-consistent bad edges are wrongly recognized as inliers and assigned corruption levels $0$.

Figure \ref{fig:conv} demonstrates the convergence rate of CEMP and verifies the claimed theory. It uses the above  adversarial corruption model with Gaussian noise and the same three noise levels (demonstrated in the three columns), but it tests both $q=0.2$ and $q=0.45$ (demonstrated in the two rows). Each subplot shows three metrics of estimation: ``log max", ``log mean" and ``log median", which correspond to   $\log_{10}(\max_{ij\in E}|s_{ij}(t)-s_{ij}^*|)$, $\log_{10}(\frac{1}{|E|}\sum_{ij\in E}|s_{ij}(t)-s_{ij}^*|)$ and $\log_{10}(\text{median}(\{|s_{ij}(t)-s_{ij}^*|: ij\in E\}))$, respectively.
\begin{figure}[H]
\begin{center}
   \includegraphics[width=1\linewidth]{./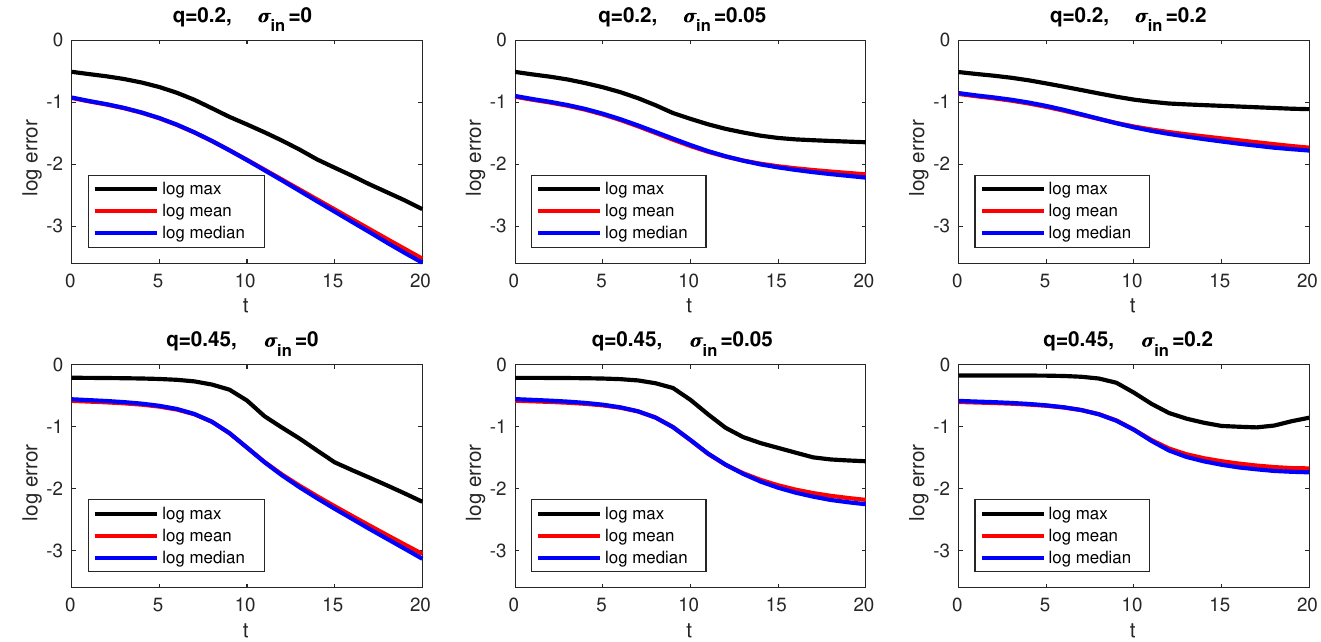}
\end{center}
   \caption{Demonstration of the rate of convergence of CEMP}\label{fig:conv}
\end{figure}

In the noiseless case, the three log errors decrease linearly with respect to $t$; indeed, Theorem \ref{thm:ir1} guarantees linear convergence of CEMP in this case. When $\sigma_{in}>0$, the log errors first demonstrate linear decay (with a smaller rate than above), but the convergence then slows down and seems to approach a constant value in most subfigures; this is consistent with Theorem \ref{thm:ir2}. When $q=0.45$ and $\sigma_{in}=0.2$, the log max error increases at the end. We believe that the source of the problem in this example is that $\beta_{\max}$ is slightly larger than what the theory recommends in this setting of high noise. 
In all plots the mean errors are close to the median errors, and they are about only 1/5 of the maximal errors (a difference of about 0.7 is noticed for the log errors and $10^{0.7} \approx 5$). This indicates that on average CEMP performs much better than its worst case (in terms of edges).

At last, we remark that for the data generated for Figures \ref{fig:para} and \ref{fig:conv}, the maximal ratio of corrupted cycles, $\lambda$, exceeds the bound $1/5$ of Theorem \ref{thm:ir2}. 
Indeed, given the underlying model one may note that $\lambda$ concentrates around $1-(1-q)^2$, which is approximately 0.7 and 0.35 when $q=0.45$ and $q=0.2$, respectively.
Nevertheless, CEMP still achieves exact recovery in these cases. Note though that the upper bound on $\lambda$, even if tight, is only a sufficient condition for good performance. Furthermore, the adversarial corruption model of this section is very special (with strong assumptions on the generation of $E$, $E_b$, the ground truth ratios and the corrupted ratios), whereas the theory was formulated for the worst-case scenario.

\subsection{Exact Recovery under Adversarial Corruption}\label{sec:num_exact}
We consider a more malicious adversarial corruption model than that of Section \ref{sec:num_conv}. Let $G([n],E)$ be generated by an Erd\H{o}s-R\'{e}nyi model $G(n,p)$ with $n=200$ and $p=0.5$. We independently draw $n_c$ graph nodes without replacement. Every time we draw a node, we randomly assign $75\%$ of its neighboring edges
to the set $\tilde E_b$ of selected edges for corruption. It is possible that an edge is assigned twice to $\tilde E_b$ (when both of its nodes are selected), but $\tilde E_b$ is not a multiset. Note that unlike the previous example of Section \ref{sec:num_conv}, the elements of $\tilde E_b$  are not independently chosen. We denote $\tilde E_g:=E \setminus \tilde E_b$.

For $\mathcal G = SO(2)$, 
$\{\theta_i^*\}_{i=1}^n$ and $\{\theta_i^{adv}\}_{i=1}^n$ are i.i.d.~Unif$(-\pi,\pi]$ and
the observed group ratios are generated as follows:
\begin{align*} 
\theta_{ij}=\begin{cases}
 \theta_{i}^*-\theta_j^{*}\mod (-\pi, \pi], & ij\in \tilde E_g;\\
\theta_{i}^{adv}-\theta_j^{adv}\mod (-\pi, \pi], & ij\in \tilde E_b.
\end{cases}
\end{align*}
For $\mathcal G=\mathbb Z_2$, $\{z_i^*\}_{i=1}^n$ and $\{z_i^{adv}\}_{i=1}^n$ are i.i.d.~Unif$\{-1,1\}$ and the observed group ratios are generated as follows:
\begin{align*} 
z_{ij}=\begin{cases}
z_i^*z_j^{*}, & ij\in \tilde E_g;\\
z_{i}^{adv}z_j^{adv}, & ij\in \tilde E_b.
\end{cases}
\end{align*}

Recall that $E_b$ is the set of actually corrupted edges. 
For $\mathcal G = SO(2)$, $E_b=\tilde E_b$; however for $\mathcal G=\mathbb Z_2$, $E_b \neq \tilde E_b$. Indeed, with probability $0.5$, $z_i^*z_j^{*}=z_{i}^{adv}z_j^{adv}$. That is, only $50\%$ of the selected edges in  $ \tilde E_b$ are expected to be corrupted. 

We note that in the case where $\mathcal G = SO(2)$ and $|E_g| \leq |E_b|$, the exact recovery of $\{\theta_i^*\}_{i\in [n]}$ becomes ill-posed, as $E_g$ is no longer the largest cycle-consistent subgraph and $\{\theta_i^{adv}\}_{i\in [n]}$ should have been labeled as the ground truth. We remark that this is not an issue for $\mathcal G=\mathbb Z_2$, since at most $50\%$ of edges in $\tilde E_b$ belong to $E_b$, and thus $|E_b|\leq |E|/2$. Therefore, for $SO(2)$, we need to control $n_c$ so that $|E_b|<|E|/2\approx n^2p/4$. We argue that $n_c/n$ needs to be less than or equal to  $0.53$. Indeed, since we subsequently corrupt $75\%$ of the neighboring edges of the $n_c$ selected nodes, the probability that $ij\in E_b$ is corrupted twice is $0.6$ (we omit the simple calculation). Namely, about $0.6|E_b|$ edges are corrupted twice and $0.4|E_b|$ edges are only corrupted once, and thus about $1.6|E_b|$ corruptions result in $|E_b|$ corrupted edges. Note that the total number of corruptions is $0.75np\cdot n_c$ for $SO(2)$, and thus we require that
\begin{align*}
    \frac{1}{1.6}\cdot \frac{3}{4}np \cdot n_c\leq \frac{n^2p}{4}\implies  \frac{n_c}{n}\leq \frac{1.6}{3}\approx 0.53.
\end{align*}


Figure~\ref{fig:adv} plots $\text{error}_S
$ for CEMP and $\text{error}_\mathcal G$ for the other algorithms as a function of the fraction of corrupted nodes, $n_c/n$. Each plotted value is an average of the estimation error over $10$ trials; it is accompanied with an error bar, which corresponds to $10\%$ and $90\%$ percentiles of the estimation errors in the 10 trials.
The figure only considers $n_c/n\leq 0.9$ for $\mathbb Z_2$ and $n_c/n\leq 0.5$ for $SO(2)$ as all algorithms performed poorly beyond these regimes. 

\begin{figure}[H]
\begin{center}
   \includegraphics[width=1\linewidth]{./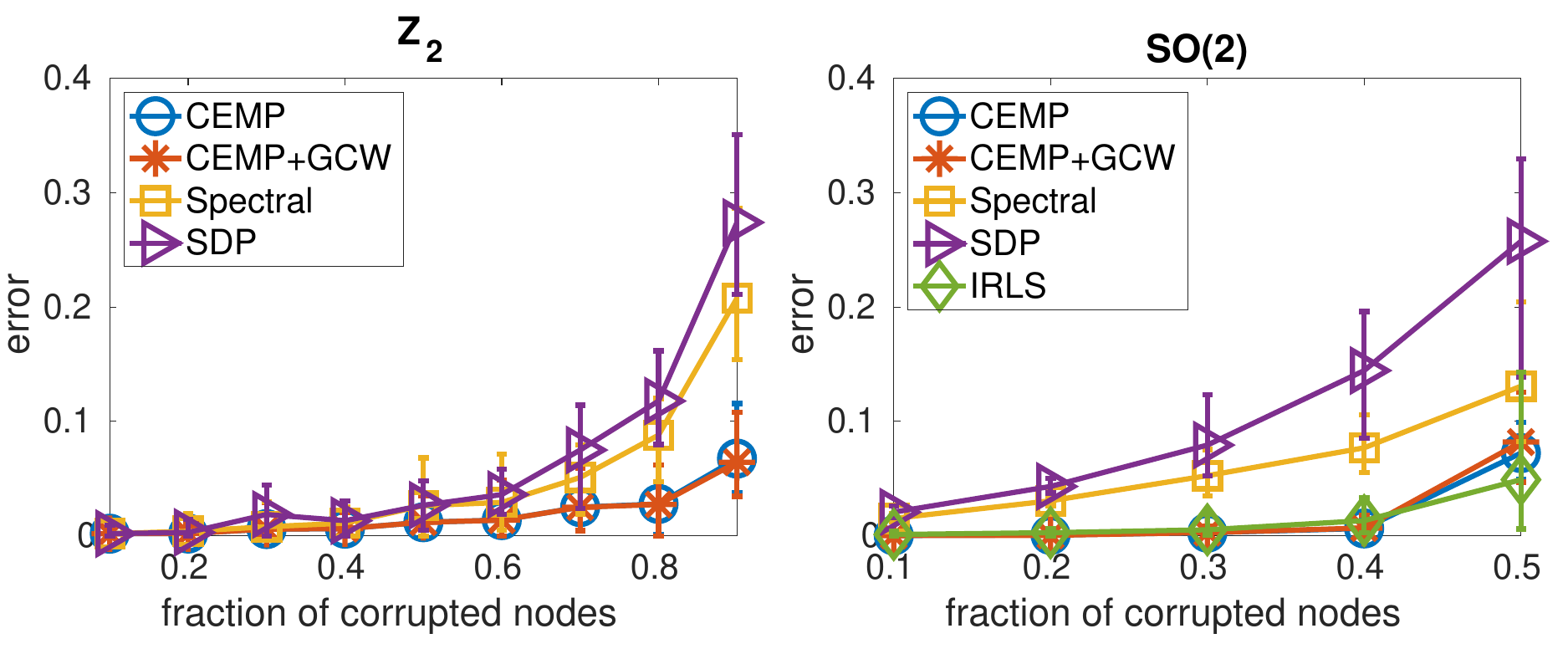}
\end{center}
   \caption{Demonstration of the estimation error under adversarial corruption without noise.}\label{fig:adv}
\end{figure}

We note that CEMP+GCW outperforms Spectral and SDP for both $\mathcal G=\mathbb Z_2$ and $\mathcal G=SO(2)$. One interesting phenomenon is that although CEMP and CEMP+GCW use two different error metrics, their errors seem to nicely align. This is strong evidence that the advantage of CEMP+GCW over Spectral is largely due to CEMP. We observe near exact recovery of CEMP+GCW when $n_c/n\leq 0.4$. In contrast, the errors of Spectral and SDP clearly deviate from 0 when $n_c/n>0.2$, where SDP performs slightly worse. 

In angular synchronization, CEMP+GCW is somewhat comparable to IRLS. It performs better than IRLS  when $n_c/n=0.4$, and worse when $n_c/n=0.5$.  We remark that unlike IRLS that solves a weighted least squares problem in each iteration, CEMP+GCW only solves a single weighted least squares problem.

\subsection{Stability to Noise under Adversarial Corruption}\label{sec:num_noise}

We use the same corruption model as in Section \ref{sec:num_exact}, while adding noise to both good and bad edges. We only consider $\mathcal G = SO(2)$. We do not consider the noisy model of $\mathbb Z_2$ since the addition of noise to a group ratio $z_{ij}\in \mathbb Z_2$ with a follow-up of projection  onto $\mathbb Z_2$ results in either $z_{ij}$ or $-z_{ij}$ and this is equivalent to corrupting $z_{ij}$ with a certain probability.

Let $\sigma_{in}$ and $\sigma_{out}$ be the noise level of inlier and outlier edges, respectively. The observed group ratio $\theta_{ij}$ is generated by
\begin{align*} 
\theta_{ij}=\begin{cases}
\theta_{i}^*-\theta_j^{*}+\sigma_{in}\varepsilon_{ij}\mod (-\pi, \pi], & ij\in E_g;\\
\theta_{i}^{adv}-\theta_j^{adv}+\sigma_{out}\varepsilon_{ij}\mod (-\pi, \pi], & ij\in E_b,
\end{cases}
\end{align*}
where $\theta_i^*$, $\theta_i^{adv}$ are i.i.d.~Unif$(-\pi,\pi]$ and the noise variable $\varepsilon$ is i.i.d.~$N(0,1)$.

\begin{figure}[htbp]
\begin{center}
   \includegraphics[width=1\linewidth]{./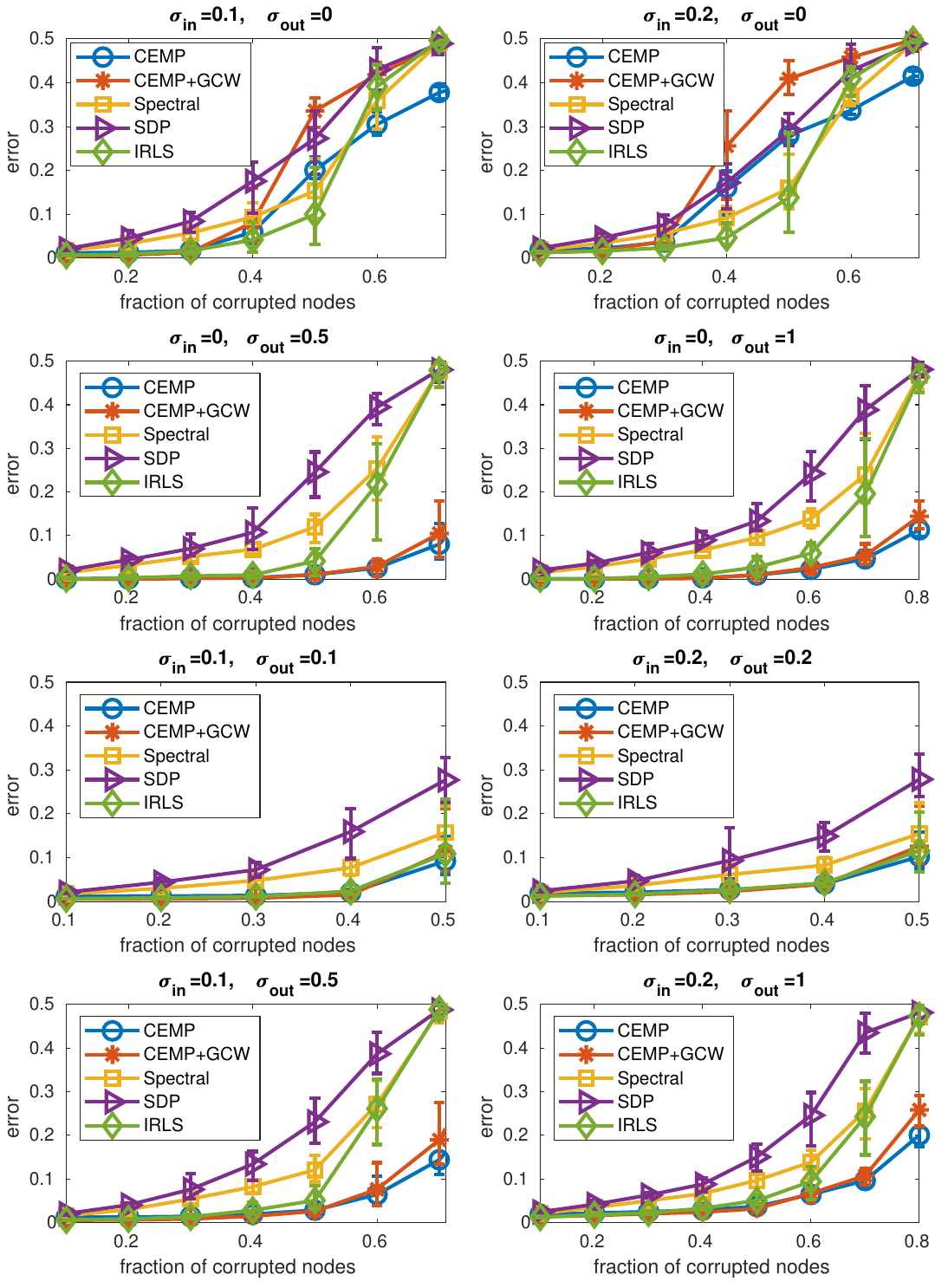}
\end{center}
   \caption{Demonstration of the estimation error under adversarial corruption with Gaussian noise.}\label{fig:adv_noise}
\end{figure}

Figure \ref{fig:adv_noise} plots $\text{error}_S
$ of CEMP and $\text{error}_\mathcal G$ of the other algorithms as a function of the fraction of corrupted nodes. As in Figure~\ref{fig:adv}, each plotted value is an average of the estimation error over $10$ trials  and is accompanied with an error bar corresponding to $10\%$ and $90\%$ percentiles. Four different scenarios are demonstrated in the four different rows of this figure.

Its first row corresponds to a very malicious case where $\sigma_{in}>\sigma_{out}=0$. In this case, the bad edges in $E_b$ are cycle-consistent (since $\sigma_{out}=0$) and the good edges in $E_g$ are only approximately cycle-consistent (due to noise). 
As explained in Section \ref{sec:num_exact}, when $\sigma_{in}=\sigma_{out}=0$ the information-theoretic bound of $n_c/n$ is 0.53. However, when $\sigma_{in}>\sigma_{out}=0$ the  information-theoretic bound is expected to be smaller. Since we do not know this theoretical bound, we first focus on two simpler regions. The first is when $n_c/n\leq 0.3$, so $E_g$ is much larger than $E_b$. In this case, CEMP and IRLS mainly select the edges in $E_g$ for the inlier graph, and CEMP+GCW is comparable to IRLS and outperforms Spectral and SDP. The second region is when $n_c/n = 0.7$ (the same result was noted when $n_c/n \geq 0.7$). Here, both CEMP and IRLS recognize $E_b$ as the edges of the inlier graph and completely ignore $E_g$; consequently they exactly recover $\{\theta_i^{adv}\}_{i=1}^n$, which results in estimation error of $0.5$ (indeed, note that $\mathbb E d_{\mathcal G}(\theta_i^*-\theta_j^*, \theta_i^{adv}-\theta_j^{adv}) =0.5$). However, Spectral and SDP cannot recover either $\{\theta_i^*\}_{i=1}^n$ or $\{\theta_i^{adv}\}_{i=1}^n$ in this regime. 
In the middle region between these two regions, CEMP seems to weigh the cycle-consistency of $E_b$ more than IRLS. On the other hand, IRLS seems to mainly weigh the relative sizes of $E_g$ and $E_b$. In particular, the transition of IRLS from mainly using $E_g$ to mainly using $E_b$ for the inlier graph occurs around the previously mentioned value of $n_c/n=0.53$ for the noiseless case, whereas CEMP transitions earlier. It seems that Spectral has a similar late transition as IRLS and SDP has an earlier transition than IRLS, but it is hard to locate it due to the poor performance of SDP.

The second row of Figure \ref{fig:adv_noise} corresponds to a less adversarial case where $\sigma_{out}>\sigma_{in}=0$. Thus, good edges are exactly cycle-consistent ($\sigma_{in}=0$), and the bad edges in $E_b$ are only approximately cycle-consistent. The information-theoretic bound of $n_c/n$ should be above 0.53 in this case. Indeed, CEMP+GCW is able to almost exactly recover the ground truth when $n_c/n=0.5$. Its estimation error is smaller than 0.1 even when $n_c/n=0.7$, and all other algorithms perform poorly in this regime. 

The third row of Figure \ref{fig:adv_noise} corresponds to the case where inlier and outlier edges have the same level of noise. Similarly to the results of Section \ref{sec:num_exact}, CEMP+GCW is comparable to IRLS and performs better than other methods. Its performance starts to degrade when $n_c/n$ approaches the information-theoretic bound of the noiseless case, $0.53$.

In the last row of Figure \ref{fig:adv_noise}, both good and bad edges are noisy, and bad edges have higher noise levels. This case is somewhat similar to the one in the second row of the figure. CEMP+GCW performs better than all other methods, especially in the high corruption regime where $n_c/n > 0.5$.

\subsection{Phase Transition under Uniform Corruption}\label{sec:num_uniform}
We demonstrate phase transition plots of the different algorithms under the uniform corruption model.  
For $\mathbb Z_2$, the group ratios are generated by
\begin{align*}
z_{ij}=\begin{cases}
z_i^*z_j^{*}, & \text{ w.p. } 1-q;\\
z^u_{ij}, &  \text{ w.p. } q,
\end{cases}
\end{align*}
where $z_i^*, z^u_{ij}$ are i.i.d.~Unif$\{1,-1\}$.
For $\mathcal G = SO(2)$, the observed group ratios $\theta_{ij}$, $ij \in E$, are generated by
\begin{align*}
\theta_{ij}=\begin{cases}
\theta_{i}^*-\theta_j^{*}\mod (-\pi, \pi], & \text{ w.p. } 1-q;\\
\theta^u_{ij}, & \text{ w.p. } q,
\end{cases}
\end{align*}
where $\theta_i^*$, $\theta_{ij}^u$ are i.i.d.~Unif$(-\pi,\pi]$.

Figures \ref{fig:Z2_uniform} and \ref{fig:angle_uniform} show the phase transition plots for $\mathbb Z_2$ and $SO(2)$ synchronization, respectively. They include plots for the averaged $\text{error}_S$ (for CEMP) and averaged $\text{error}_\mathcal G$ (for the other algorithms) over 10 different random runs for various values of $p$, $q$ and $n$ ($p$ appears on the $y$ axis, $q$ on the $x$ axis and $n$ varies with subfigures). The darker the color the smaller the error. The red and blue curves correspond to possible phase transition thresholds, which we explain below.  

\begin{figure}[htbp]
\begin{center}
   \includegraphics[width=1\linewidth]{./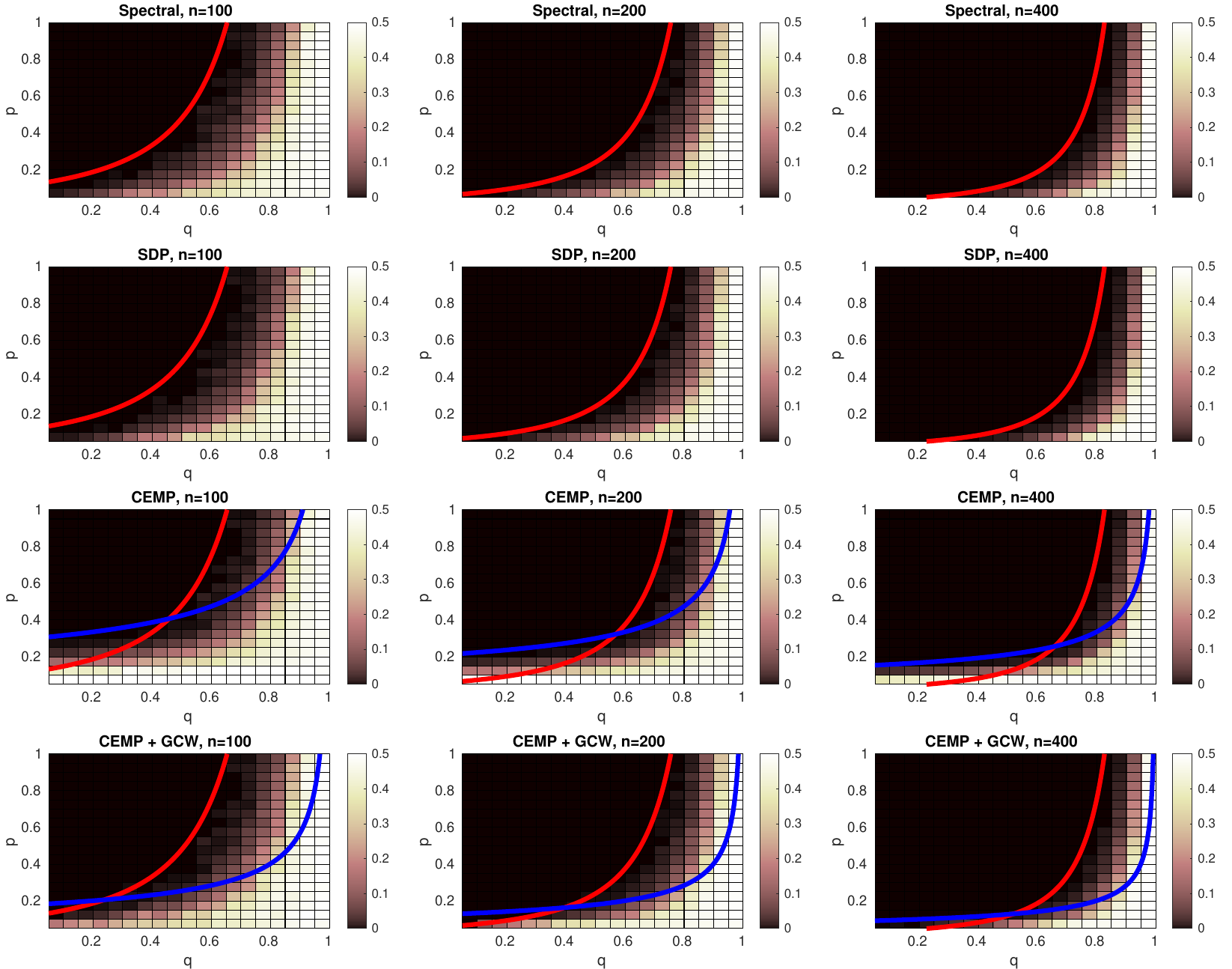}
\end{center}
   \caption{Phase transition plots for $\mathbb Z_2$ synchronization.}\label{fig:Z2_uniform}
\end{figure}

\begin{figure}[htbp]
\begin{center}
   \includegraphics[width=1\linewidth]{./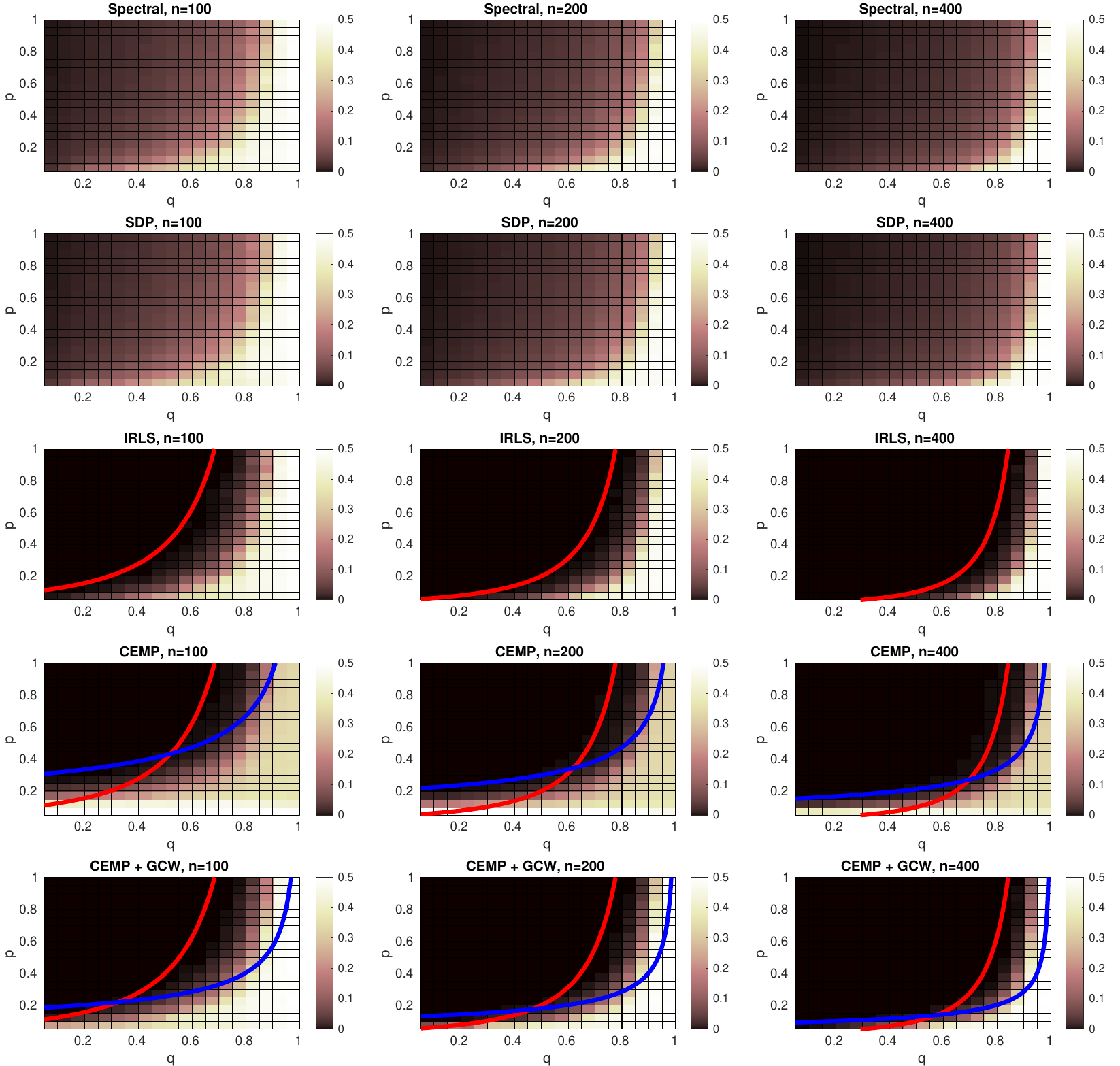}
\end{center}
   \caption{Phase transition plots for $SO(2)$ synchronization.}\label{fig:angle_uniform}
\end{figure}

For $\mathbb Z_2$, recall that \cite{Z2Afonso2} establishes the information-theoretic bound in \eqref{eq:info_bound_z2} and that \cite{Z2Afonso} and \cite{Z2} show that it also holds for SDP and Spectral, respectively.
Using this bound, while ignoring its log factor and trying to fit its unknown constant to the phase transition plots for Spectral and SDP, we set the red line as $p=12/(n(1-q)^2)$. Clearly, this is not the exact theoretical lower bound.
For $SO(2)$, recall that the information theoretic bound is the same as the above one for $\mathbb Z_2$ \cite{info_theoretic_sync}. We cannot fit a curve to Spectral and SDP since they do not exactly recover the group ratios. Instead, we try to fit a phase transition curve to IRLS  (even though there is no theory for this). We fit the red curve defined by $p=10/(n(1-q)^2)$. 

We note that the red curves (which were determined by the above-mentioned algorithms) do not align well with the phase transition plots of CEMP when $p$ is sufficiently small (that is, when the underlying graph is sparse).
Indeed, Section \ref{sec:sample_complexity} explains the limitation of CEMP (and any method based on 3-cycle-consistency) when $p$ is small. 
On the other hand, the sample complexity of CEMP might be tight as a function of $q_g=1-q$, as opposed to our current theoretical estimate (see discussion in Section \ref{sec:sample_complexity}). If this assumption is correct, then the red curve can align well with the phase transition when $p$ is not very small as may seem from the figures. For small $p$, we followed the (necessary) dependence on $p$ of our estimates in Section \ref{sec:sample_complexity} and further experimented with different powers of $(1-q)$ and consequently fit the following blue curves:  $p=3/\sqrt{n(1-q)}$ and $p=1.8/\sqrt{n(1-q)}$ for CEMP and CEMP+GCW, respectively. We used the same curves for both $\mathbb Z_2$ and $SO(2)$ synchronization.

It is evident from Figure \ref{fig:Z2_uniform} that
the phase transition plots of Spectral and SDP align well with the red curve. 
For CEMP and CEMP+GCW, the exact recovery region (dark area) seems to approximately lie in the area enclosed by the red and blue curves. 
The blue curve of CEMP+GCW is slightly closer to the $x$-axis than that of CEMP. This suggests that combining CEMP with GCW can partially help with dealing with sparse graphs.

In Figure \ref{fig:angle_uniform}, 
Spectral and SDP do not seem to exactly recover group elements in the presence of any corruption, and thus a phase transition region is not noticed for them. 
The phase transition plots of IRLS  align well with the red curve.
The exact recovery regions of both CEMP and CEMP+GCW seem to approximately lie in the area enclosed by the red and blue curves. 
Again, CEMP+GCW seems to slightly improve the required bound (its blue curve is lower). 
Nevertheless, more careful research is needed to determine in theory the correct phase transition curves of CEMP and CEMP+GCW.

\subsection{Testing the Speed of the Algorithms}
\label{sec:test_speed}
We compare the speed of the algorithms under different parameters. We assume the uniform corruption model for $SO(d)$ without noise and with $q=0.2$. We test the values $n=100$, 300, 1000, with $p=50/n$ and dimensions $d=2$, 10, 50. To be consistent with the underlying metric of other algorithms and with different choices of dimensions $d$, we use the following scaled version of the Frobenius metric: $d_\mathcal G(g_1,g_2)=\|g_1-g_2\|_F/(2\sqrt{d})\in [0,1]$. We test the same algorithms of Section \ref{sec:num_uniform} and also CEMP+MST (it can be applied here since there is no noise, though in general we don't recommend it).

Tables \ref{tab:so2}-\ref{tab:so50} 
report the runtimes of different algorithms, where each table corresponds to a different value of $d$. In order to account for a possible tradeoff between runtime and accuracy, they also report the normalized root mean squared error (NRMSE):
\begin{align*}
    \text{NRMSE} = \sqrt{\frac{1}{4d|E|}\sum_{ij\in E}\|\hat g_i\hat g_j^{-1}-g_i^*g_j^{*-1}\|_F^2}.
\end{align*}
These tables use ``NAp" (not applicable) when NRMSE is not defined (for CEMP) and ``NA" (not available) when the memory usage exceeds the 48 GB limit. They report ``0" NRMSE error whenever the error is smaller than $10^{-15}$.

\begin{table*}[htbp]
\centering 
\resizebox{1.0\columnwidth}{!}{
\renewcommand{\arraystretch}{1.3}
\tabcolsep=0.1cm
\begin{tabular}{|c||c|c||c|c||c|c|}
\hline
$n$ &  \multicolumn{2}{c||}{100} &
\multicolumn{2}{c||}{300} & \multicolumn{2}{c|}{1000} 
  \\\hline
\text{Algorithms}&  runtime & NRMSE &  runtime & NRMSE & runtime & NRMSE 
\\\hline
CEMP &0.08& NAp & 0.19 & NAp & 0.48 & NAp
\\\hline
CEMP+MST & 0.1 & \textbf{0} & 0.22 &\textbf{0} & 0.69 & \textbf{0}
\\\hline
CEMP+GCW & 0.09 & $2\times 10^{-4}$ &0.2 & $2\times 10^{-4}$ & 0.56 & $3\times 10^{-3}$
\\\hline
Spectral & \textbf{0.004} & $4\times 10^{-2}$ & \textbf{0.007} & $4\times 10^{-2}$ & \textbf{0.05} & $4\times 10^{-2}$
\\\hline
SDP & 0.48 & $4\times 10^{-2}$ & 5.41 & $4\times 10^{-2}$ &NA &NA
\\\hline
IRLS & 0.038 & $2\times 10^{-4}$ & 0.12 & $2\times 10^{-4}$ & 0.45 & $2\times 10^{-4}$
\\\hline
\end{tabular}}
\caption{Runtime (in seconds) and accuracy for $\mathcal G=SO(2)$ and $np=50$.} 
\label{tab:so2}
\end{table*}

\begin{table*}[htbp]
\centering 
\resizebox{1.0\columnwidth}{!}{
\renewcommand{\arraystretch}{1.3}
\tabcolsep=0.1cm
\begin{tabular}{|c||c|c||c|c||c|c|}
\hline
$n$ &  \multicolumn{2}{c||}{100} &
\multicolumn{2}{c||}{300} & \multicolumn{2}{c|}{1000} 
  \\\hline
\text{Algorithms}&  runtime & NRMSE &  runtime & NRMSE & runtime & NRMSE 
\\\hline
CEMP & 0.39 &NAp & 0.49 & NAp & \textbf{0.88} & NAp
\\\hline
CEMP+MST & 0.41 & \textbf{0} &0.53 &\textbf{0} & 1.2 &\textbf{0}
\\\hline
CEMP+GCW & 0.41 & $2\times 10^{-8}$ & 0.75 & $2\times 10^{-8}$ & 5.45 & $2\times 10^{-8}$
\\\hline
Spectral & \textbf{0.011} & $4\times 10^{-2}$ & \textbf{0.13} & $4\times 10^{-2}$ & 1.63 & $4\times 10^{-2}$
\\\hline
SDP & 25.52 & $4\times 10^{-2}$ &NA &NA &NA &NA
\\\hline
IRLS & 0.09 & $2\times 10^{-5}$ & 0.69 & $1\times 10^{-5}$ & 7.08 & $1\times 10^{-5}$
\\\hline
\end{tabular}}
\caption{Runtime (in seconds) and accuracy for $\mathcal G=SO(10)$ and $np=50$.} 
\label{tab:so10}
\end{table*}

\begin{table*}[htbp]
\centering 
\resizebox{1.0\columnwidth}{!}{
\renewcommand{\arraystretch}{1.3}
\tabcolsep=0.1cm
\begin{tabular}{|c||c|c||c|c||c|c|}
\hline
$n$ &  \multicolumn{2}{c||}{100} &
\multicolumn{2}{c||}{300} & \multicolumn{2}{c|}{1000} 
  \\\hline
\text{Algorithms}&  runtime & NRMSE &  runtime & NRMSE & runtime & NRMSE 
\\\hline
CEMP &21.06 & NAp &24.74 & NAp & \textbf{23.18} &NAp
\\\hline
CEMP+MST &21.08 &\textbf{0}& 24.78&\textbf{0} & 23.43 & \textbf{0} 
\\\hline
CEMP+GCW & 25.68 & $1\times 10^{-8}$ & 66.26 & $7\times 10^{-9}$& 584.62 & $2\times 10^{-8}$
\\\hline
Spectral &\textbf{1.32} &$4\times 10^{-2}$ &\textbf{12.06} &$4\times 10^{-2}$ &  147.09 & $4\times 10^{-2}$
\\\hline
SDP & NA & NA & NA &NA &NA &NA
\\\hline
IRLS & 7.17 & $2\times 10^{-5}$ & 59.05 & $2\times 10^{-5}$ &NA &NA
\\\hline
\end{tabular}}
\caption{Runtime (in seconds) and accuracy for $\mathcal G=SO(50)$ and $np=50$.} 
\label{tab:so50}
\end{table*}

We note that SDP and Spectral have the lowest accuracy since they minimize a least squares objective function, which is not robust to outliers. SDP is the slowest algorithm in all experiments, and we could not implement it on our computer (whose specifications are detailed in Section \ref{sec:implement})
when $n\geq 1000$ or $d\geq 50$. Spectral seems to be the fastest method when either $d$ or $n$ are sufficiently small. However, for $n=1000$, CEMP and CEMP+MST are faster than Spectral when $d=10$ and $d=50$; in the latter case they are more than 6 times faster. The recovery error of IRLS is small in all experiments. However, it is the second slowest algorithm. It also doubles the memory usage of Spectral since it needs to store both the original $\bY$ in \eqref{eq:SDR} and the weighted $\bY$ in each iteration. Due to this issue, IRLS exceeds the memory limit when $n=1000$ and $d=50$. We note that in most of the experiments for $d=10$, 50, CEMP+GCW achieves NRMSE 3-order of magnitude lower than that of IRLS. We also note that CEMP+MST is completely accurate in all experiments and it is the fastest method when $d\geq 10$ and $n=1000$. Since we fix $np=50$ and since the complexity of CEMP is of order $O((npd)^3)$, the runtime of CEMP does not change much in each table; whereas the runtimes of the other algorithms clearly increases with $n$. We observe that the runtime of the MST post-processing is almost negligible in comparison to CEMP. Indeed, the time complexity of building the MST is $pn^2\log n$ and that of computing $g_i=g_{ij}g_j$ along the spanning tree is $O(nd^3)$.

\section{Conclusion}
\label{sec:conclusion}
We proposed a novel message passing framework for robustly solving group synchronization problems with any compact group under adversarial corruption and sufficiently small noise. We established a deterministic exact recovery theory for finite sample size with weak assumptions on the adversarial corruption (the ratio of corrupted cycles per edge needs to be bounded by a reasonable constant). Previous works on group synchronization assumed very special generative models. Some of them  only considered asymptotic recovery and they were often restricted to special groups. Somewhat similar guarantees exist for the different problem of camera location estimation, but we already mentioned their weaknesses in view of our guarantees. 
We also established the stability of CEMP to bounded and sub-Gaussian noise.
We further guaranteed exact recovery under a previous uniform corruption model, while considering the full range of model parameters. 

There are different theoretical directions that may help in improving this work. 
First of all, the theory for adversarial corruption assumes a uniform bound on the corruption ratio per edge, whereas in practice one should allow a small fraction of edges to be contained in many corrupted cycles. We believe that it is possible to address the latter setting with CEMP by adaptively choosing $\beta_t$ for different edges. This way, instead of the current $\ell_\infty$ bound on the convergence, one can establish an $\ell_1$ or $\ell_2$ convergence bound. Nevertheless, the mathematical ideas behind guaranteeing an adaptive reweighting strategy are highly complicated and hard to verify. Instead, we prefer to clearly explain our theory with a simpler procedure. 

Another future direction is to extend the theory to other classes of reweighting functions, in addition to the indicator and exponential functions.  In particular, one may further consider finding an optimal sequence of reweighting functions under certain statistical models. This direction will be useful once an adaptive reweighting strategy is developed. On the other hand, when $\beta_t$ is the same for all edges, then Section \ref{sec:practical} advocates for the exponential reweighting function.

We emphasized the validity of the exact recovery guarantees under UCM for any $q <1$ and for various groups.
However, as we clarified in Section \ref{sec:sample_complexity}, the sample complexity bound implied by our estimates does not match the information-theoretic one. It will be interesting to see if a more careful analysis of a CEMP-type method can fill the gap. We believe that this might be doable for the dependence of the sample complexity on $q_g$, but not on $p$ (see our discussion in Section \ref{sec:sample_complexity}
and our numerical results in Section \ref{sec:num_uniform}). We expect that the use of higher-order cycles will improve the dependence on $p$ but worsen the dependence on $q_g$. 

The framework of CEMP can be relevant to other settings that exploit cycle consistency information, but with some limitations. First of all, in the case of non-compact groups, one can scale the given group elements. In particular, if both $\{g_i\}_{i=1}^n$ and $\{g^*_i\}_{i=1}^n$ lie in a ball of fixed radius, then by appropriate scaling, one can assume that $s_{ij}^* \leq 1$ for all $ij \in E$. The theory thus extends to non-compact groups with finite corruption models and bounded noise. If the distribution of the corruption or noise has infinite support, then our theory is invalid when the sample size approaches infinity, though it is still valid for a finite sample size.

We also claim that CEMP can be extended to the problem of camera location estimation. Since the scale information of group ratios is missing, one should define alternative notions of cycle consistency, inconsistency measure and corruption level, such as the ones we proposed in \cite{AAB}. In fact, using such notions, the AAB algorithm of the conference paper \cite{AAB} is CEMP-B with $s_{ik}(t)+s_{jk}(t)$ replaced by $\max\{s_{ik}(t),s_{jk}(t)\}$. We remark that there is no significant difference between these two comparable choices.
We can develop a similar, though weaker, theory for exact recovery by CEMP (or AAB) for camera location estimation. In order to keep the current work focused, we exclude this extension. The main obstacle in establishing this theory is that the metric is no longer bi-invariant and thus $d_{ij,k}$ may not equal to $s_{ij}^*$, even for uncorrupted cycles. 

A different notion of cycle consistency is also used in a histogram-based method for the identification of common lines in cryo-EM imaging \cite{cycle_common_lines}. We believe that the reweighting procedure in CEMP can be incorporated in \cite{cycle_common_lines} to reduce the rate of false positives. 

We claim that cycle consistency is also essential within each cluster of vector diffusion maps (VDM) \cite{VDM_singer}, 
which aims to solve a different problem of clustering graph nodes, for example, clustering cryo-EM images with different viewing directions \cite{VDM_jane,VDM_singer}. 
Indeed, in VDM, powers of the connection adjacency matrix give rise to ``higher-order connection affinities'' between nodes $i$ and $j$ obtained by the squared norm of a weighted sum of the products of group ratios $g_{L_{ij}}$ along paths $L_{ij}$ from $i$ to $j$ (see e.g., demonstration in Figure 4(a) in \cite{VDM_cryo}). For $i$ and $j$ in the same cluster, cycle consistency implies that each product of group ratios $g_{L_{ij}}$ is approximately $g_{ij}$ (or exactly $g_{ij}$ if there is no corruption). Consequently, for each $ij \in E$, the sum of $g_{L_{ij}}$ over ${L_{ij}}$ with fixed length (depending of the power used) is  approximately a large number times $g_{ij}$ and thus has a large norm, that is, the higher-order connection affinity is large.
On the other hand, if $i$ and $j$ belong to different clusters, then the different $g_{L_{ij}}$'s may possibly cancel or decrease the effect of each other (due to the different properties of the clusters). Consequently, the higher-order connection affinity is typically small. We note that these affinities are somewhat similar to our weighted average of cycle inconsistencies, $\sum_{L} w_L d_\mathcal G(g_L, e_\mathcal G)$. However, unlike our reweighting strategy,
VDM weighs cycles in a single step using Gaussian kernels (see (3) and (4) in \cite{VDM_jane}). We believe that a suitable reweighting strategy can be applied to VDM to improve its classification accuracy. 
After the submission of this work, \cite{shi_NEURIPS2020}
showed that for the permutation group with a very special metric, CEMP is equivalent to an iterative application of the graph connection weight (GCW) matrix, which is used in VDM in a different way (see also Section \ref{sec:computational}). 
Unfortunately, we find it unlikely to extend the ideas of \cite{shi_NEURIPS2020} to other groups and metrics.

A relevant question is the  possibility of extending all results to higher-order cycles and the usefulness of such an extension. We believe that such an extension is not difficult. As mentioned above, we expect higher-order cycles to help with sparse graphs, but possibly degrade the ability to handle very high corruption and also significantly enlarge the computational complexity. We did not find it necessary to explore this, since datasets of structure from motion seem to have enough 3-cycles per edge to guarantee that CEMP can reliably estimate the corruption levels of most edges. In this case, the combination of CEMP with other methods can improve results for edges that may not have enough 3-cycles (see e.g., \cite{AAB,MPLS}).

Finally, we mention that while the theory is very general and seems to apply well to the common compact groups $\mathbb Z_2$, $S_N$ and $SO(d)$, 
specific considerations need to be addressed for special groups and special instances. For example,  the problem of recovering orientations of jigsaw puzzles \cite{vahan_jigsaw} can be solved by $\mathbb{Z}_4$ synchronization, where its ideal graph is a two-dimensional lattice. In this setting, each edge is contained in at most two cycles of length at most 4. Thus effective inference of corruption requires cycles with length greater than $4$. 

\section*{Acknowledgments}

We thank the anonymous reviewers for their helpful suggestions. We also thank Tyler Maunu who mentioned to us that our earlier algorithm in \cite{AAB} seemed to him like a message-passing procedure. Our effort to quantitatively understand his view led to this work.

\appendix
\section{Supplementary Proofs}

\subsection{Proof of Proposition \ref{prop:equiv}}\label{sec:prop_equiv}

 \ref{item:Eg}$\to$ \ref{item:gi}: One can find the minimum spanning tree of the connected graph $G([n], E_g)$, arbitrarily assign an element $g_1$ at the root of the tree, and inductively assign the value $g_j$ to a node whose parent was assigned the value $g_i$ as follows: $g_j=g^*_{ji}g_i$ ($ij \in E_g$ and thus $g^*_{ij}$ is known). This results in correct recovery of $\{g_i^*\}_{i=1}^n$ up to right multiplication by the arbitrarily assigned element $g_1$.

\noindent
\ref{item:gi}$\to$ \ref{item:sij}: Given $\{g_{ij}\}_{ij\in E}$ and the known $\{g_i^*\}_{i\in [n]}$ one can immediately compute $g_{ij}^*=g_i^* {g_j^*}^{-1}$ and $s_{ij}^*=d_\mathcal G(g_{ij},g_{ij}^*)$ for $ij\in E$.

\noindent
\ref{item:sij}$\to$ \ref{item:Eg}: $E_g$ is exactly recovered by finding edges such that $s_{ij}^*=0$.

\subsection{Proof of Lemma~\ref{lemma:bi}}\label{sec:appbi}
Without loss of generality fix $L=\{12,23,34,\dots,n1\}$. By the bi-invariance of $d_\mathcal G$ and the triangle inequality
\begin{align}
\label{eq:1st_for_lemma_3.6}
|d_{L}-s_{12}^*|&=     \left|d_{\mathcal G} \left(g_{12}g_{23}\cdots g_{n1}\,, e_\mathcal G\right)-d_{\mathcal G} \left(g_{12}  g_{12}^{*-1}\,, e_\mathcal G\right)\right|\nonumber\\
&\leq d_{\mathcal G} \left(g_{12}g_{23}\cdots g_{n1}\,,  g_{12}  g_{12}^{*-1}\right)
=d_\mathcal G\left(g_{12}^*g_{23}\cdots g_{n1}\,, e_\mathcal G\right).
\end{align}
Note that for all $1 \leq i \leq n-1$, the bi-invariance of $d_\mathcal G$ implies that
\begin{equation}
\label{eq:si_i+1}
s_{i\,i+1}^* = d_\mathcal G(g_{i\,i+1}, g_{i\,i+1}^*) =d_\mathcal G\left(g_{12}^*\cdots g_{i-1\,i}^*\,g_{i\,i+1}\cdots g_{n1}\,\,,\, g_{12}^*\cdots g_{i\,i+1}^*g_{i+1\, i+2}\cdots g_{n1}\right).
\end{equation}
Application of \eqref{eq:cyclecons} and then several applications of the triangle inequality and \eqref{eq:si_i+1} yield
\begin{align}
\label{eq:2nd_for_lemma_3.6}
&d_\mathcal G\left(g_{12}^*g_{23}\cdots g_{n1}\,, e_\mathcal G\right)
=d_\mathcal G\left(g_{12}^*g_{23}\cdots g_{n1}\,, g_{12}^*g_{23}^*\cdots g_{n1}^*\right)\nonumber\\
\leq& d_\mathcal G\left(g_{12}^*g_{23}\cdots g_{n1}\,, g_{12}^*g_{23}^*g_{34}\cdots g_{n1}\right)+d_\mathcal G\left(g_{12}^*g_{23}^*g_{34}\cdots g_{n1}\,, g_{12}^*g_{23}^*\cdots g_{n1}^*\right)\nonumber\\
\leq & \sum_{i=2}^n d_\mathcal G\left(g_{12}^*\cdots g_{i-1\,i}^*\,g_{i\,i+1}\cdots g_{n1}\,\,,\, g_{12}^*\cdots g_{i\,i+1}^*g_{i+1\, i+2}\cdots g_{n1}\right)\nonumber\\
=& \sum_{i=2}^n s_{i\,i+1}^*=\sum_{ij\in L\setminus \{12\}}s_{ij}^*.
\end{align}
We conclude the proof by combining \eqref{eq:1st_for_lemma_3.6} and \eqref{eq:2nd_for_lemma_3.6}.

\subsection{Proof of Proposition \ref{prop:cycle}}\label{sec:proofs_bijection}

For any $ij\in E$, let $L_{ij}$ denote a path between nodes $i$ and $j$. 
We claim that $h$ is invertible and its inverse is $h^{-1}((g_{ij})_{ij\in E})=[(g_{L_{ik}}g_k)_{i\in [n]}]$, where $k\in [n]$ (due to the equivalence relationship, this definition is independent of the choice of $k$). Using the definitions of $h^{-1}$ and then $h$, basic group properties and 
the cycle consistency constraints for cycles in $\mathcal C$,
\begin{multline*}
    hh^{-1}((g_{ij})_{ij\in E}) =h([(g_{L_{ik}}g_k)_{i\in [n]}])=
    \\(g_{L_{ik}}g_k(g_{L_{jk}}g_k)^{-1})_{ij\in E}=(g_{L_{ik}}g_{L_{jk}}^{-1})_{ij\in E}=(g_{ij})_{ij\in E}.
\end{multline*}
For $ij \in E$, define $\hat g_{ij}=g_ig_j^{-1}$ and note that $(\hat g_{ij})_{ij\in E}$ is cycle-consistent for any cycle in $G([n], E)$. Thus, $\hat g_{L_{ik}} =\hat g_{ik} = g_i g_k^{-1}$. Using this observation and the definitions of $h$ and $h^{-1}$
$$
h^{-1}h([(g_i)_{i\in [n]}])=h^{-1}((\hat g_{ij})_{ij\in E})=[(\hat g_{L_{ik}}g_k)_{i\in [n]}]=[(g_i)_{i\in [n]}].
$$
The combination of the above two equations concludes the proof.

\subsection{Proof of Proposition~\ref{prop:shift}}\label{sec:prop_shift}

For each $ij\in E$ and $L\in N_{ij}$, the cycle weight computed from the shifted corruption levels $\{s_{ij}(t)+s\}_{ij\in E}$ is
\begin{align*}
w_{ij,L}(t) 
&= \frac{\exp(-\beta_t\sum_{ab\in N_L\setminus \{ij\}}(s_{ab}(t)+s))}{\sum_{L'\in N_{ij}} \exp(-\beta_t\sum_{ab\in N_{L'}\setminus \{ij\}}(s_{ab}(t)+s))}\\
&= \frac{\exp(-\beta_t\sum_{ab\in N_L\setminus \{ij\}}s_{ab}(t))\exp(-\beta_t ls)}{\sum_{L'\in N_{ij}} \exp(-\beta_t\sum_{ab\in N_{L'}\setminus \{ij\}}s_{ab}(t))\exp(-\beta_t ls)}\\
&= \frac{\exp(-\beta_t\sum_{ab\in N_L\setminus \{ij\}}s_{ab}(t))}{\sum_{L'\in N_{ij}} \exp(-\beta_t\sum_{ab\in N_{L'}\setminus \{ij\}}s_{ab}(t))},
\end{align*} 
which equals the original cycle weight.

\subsection{Proof of Theorem~\ref{thm:it2}}\label{sec:it2}
We prove the theorem by induction. For $t=0$, we note that a similar argument to \eqref{eq:proof_for_t_0} implies that 
\begin{align}
\label{eq:arg_for_t_0_app}
    \epsilon_{ij}(0) &\leq \frac{\sum\limits_{k\in N_{ij}}|d_{ij,k}-s^*_{ij}|}{|N_{ij}|}= \frac{\sum\limits_{k\in G_{ij}}|d_{ij,k}-s^*_{ij}|+\sum\limits_{k\in B_{ij}}|d_{ij,k}-s^*_{ij}|}{|N_{ij}|}\nonumber \\
    &\leq \frac{\sum\limits_{k\in G_{ij}}|s^*_{ik}+s_{jk}^*|+|B_{ij}|}{|N_{ij}|}\leq \frac{|G_{ij}|}{|N_{ij}|}\cdot 2\delta+\frac{|B_{ij}|}{|N_{ij}|}\leq \lambda + 2\delta \leq \frac{1}{\beta_0}-\delta.
\end{align}

Next, we assume that $\epsilon (t)+\delta<\frac{1}{\beta_t}$ for an arbitrary $t>0$, and show that $\epsilon (t+1)+\delta<\frac{1}{\beta_{t+1}}$.
We use similar notation and arguments as in the proof of Theorem \ref{thm:it1}.
We note that $\frac{1}{\beta_t}\geq \epsilon (t)+\delta \geq \max_{ij\in E_g}\epsilon_{ij} (t)+\delta=\max_{ij\in E_g}s_{ij}(t)$ and thus for any $ij\in E$, $G_{ij} \subseteq A_{ij}(t)$.
We also note that  $s_{ij}(t)\leq\frac{1}{\beta_t}$ implies that $s_{ij}^*\leq s_{ij}(t)+\epsilon (t)\leq s_{ij}(t)+\frac{1}{\beta_t}-\delta\leq 2\frac{1}{\beta_t}-\delta$.
We use these observations in an argument analogous to \eqref{eq:more_ratio_eps}, which we describe in short as follows:
\begin{align}
\nonumber
\epsilon_{ij}(t+1)
&\leq\frac{\sum\limits_{k\in G_{ij}}\mathbf{1}_{\{s_{ik}(t), s_{jk}(t)\leq \frac{1}{\beta_t}\}}2\delta+\sum\limits_{k\in B_{ij}}\mathbf{1}_{\{s_{ik}(t), s_{jk}(t)\leq \frac{1}{\beta_t}\}}(4\frac{1}{\beta_t}-2\delta)}{|A_{ij}(t)|}\\
&\leq \frac{|G_{ij}|}{|N_{ij}|}2\delta+\frac{|B_{ij}|}{|N_{ij}|}(4\frac{1}{\beta_t}-2\delta).
\label{eq:another_arg_1}
\end{align}
Maximizing over $ij\in E$ the LHS and RHS of \eqref{eq:another_arg_1}
and using the assumptions $\lambda<1/4$ and $4\lambda\frac{1}{\beta_t}+(3-4\lambda)\delta<\frac{1}{\beta_{t+1}}<\frac{1}{\beta_t}$, we conclude \eqref{eq:itall} as follows
\begin{align*}
\epsilon (t+1)+\delta\leq 2(1-\lambda)\delta+2\lambda(2\frac{1}{\beta_t}-\delta)+\delta=4\lambda\frac{1}{\beta_t}+(3-4\lambda)\delta<\frac{1}{\beta_{t+1}}.
\end{align*}

At last, since $\frac{1}{\beta_{t+1}}>4\lambda\frac{1}{\beta_t}+(3-4\lambda)\delta$ and  $\beta_0<\frac{1-4\lambda}{(3-4\lambda)\delta}$, $\beta_t<\frac{1-4\lambda}{(3-4\lambda)\delta}$ for all $t\geq 0$. The latter inequality and the fact that $\{ \beta_t \}_{t \geq 0}$ is increasing imply that $\varepsilon$ is well defined (that is, 
$\lim_{t\to\infty}\beta_t$ exists) and $0<\varepsilon\leq 1$. Taking the limit of \eqref{eq:itall} when $t \to \infty$ yields \eqref{eq:itinf}.

\subsection{Proof of Theorem~\ref{thm:ir2}}\label{sec:ir2}
We prove the theorem by induction. For $t=0$, \eqref{eq:arg_for_t_0_app} implies that $\epsilon(0) \leq \lambda + 2\delta$ and thus
$\frac{1}{4\beta_0}>\lambda +\frac52 \delta\geq \epsilon(0)+\frac12\delta$. Next, we assume that $\epsilon (t)+\frac12\delta<\frac{1}{4\beta_t}$ and show that $\epsilon (t+1)+\frac12\delta<\frac{1}{4\beta_{t+1}}$.

By a similar proof to \eqref{eq:bound_2_eps} and \eqref{eq:bound_2_eps2}, while using the current model assumption
$\max_{ij\in E_g}s_{ij}^*$ $<\delta$, we obtain that
\begin{align*}
\epsilon (t+1)_{ij}
&\leq 2\delta+\frac{\sum\limits_{k\in B_{ij}}e^{-\beta_t\left(s_{ik}^*+s_{jk}^*\right)}\left(s_{ik}^*+ s_{jk}^*\right)e^{\beta_t\left(\epsilon_{ik}(t)+\epsilon_{jk} (t)\right)}}{\sum\limits_{k\in G_{ij}}e^{-\beta_t\left(2\delta+\epsilon_{ik} (t)+\epsilon_{jk} (t)\right)}}.
\end{align*}

The same arguments of proving \eqref{eq:bound_2_eps3} and \eqref{eq:bound_2_eps4} yield the estimate
\begin{align*}
\epsilon (t+1)
\leq 2\delta+\frac{\lambda}{1-\lambda}\frac{1}{e\beta_t}e^{\beta_t(2\delta+4\epsilon (t))}.
\end{align*}
We conclude \eqref{eq:irall} by applying the assumptions $\epsilon (t)+\frac12\delta\leq \frac{1}{4\beta_t}$ and $\frac52\delta+\frac{\lambda}{1-\lambda}\frac{1}{\beta_t}<\frac{1}{4\beta_{t+1}}<\frac{1}{4\beta_t}$ to the above equation as follows
\begin{align*}
\epsilon (t+1)+\frac12\delta
&\leq \frac52\delta+\frac{\lambda}{1-\lambda}\frac{1}{e\beta_t}e^{\beta_t(2\delta+4\epsilon (t))}\leq \frac52\delta+\frac{\lambda}{1-\lambda}\frac{1}{\beta_t}<\frac{1}{4\beta_{t+1}}.
\end{align*}

Establishing $0<\varepsilon\leq 1$ and \eqref{eq:irinf} is the same as in the proof of 
Theorem~\ref{thm:it2}
in Section \ref{sec:it2}.

\subsection{Proof of Theorem \ref{thm:subg}}\label{sec:proofs_subg}

For the fixed $x>0$, we define 
$$G^x_{ij}: = \left\{k\in G_{ij}:\max\{s_{ik}^*,s_{jk}^*\}<\sigma\mu+\sigma x\right\}  \text{ and } \lambda_x=\max_{ij\in E}\left(1-|G_{ij}^x|/|N_{ij}|\right).$$ 
Since $s_{ij}^*\sim sub(\mu,\sigma^2)$, for any $k\in G_{ij}$ and $ij\in E$
\begin{align*}
\Pr(k\notin G_{ij}^x)<\exp(-x^2/2).
\end{align*}
We note that the random variable $X_k = \mathbf 1_{\{k \notin G_{ij}^x\}}$ is a Bernoulli random variable with mean  $p_k < \exp(-x^2/2)$. Using the above notation, $\bar p=1/|G_{ij}|\cdot \sum_{k\in G_{ij}}p_k<\exp(-x^2/2)$. We define $c=\exp(-x^2/2)/\bar p> 1$.
The application of the one-sided Chernoff bound in \eqref{eq:chernoff1} to the independent Bernoulli random variables $\{X_k\}_{k \in G_{ij}}$
with $\eta=2c-1>1$ results in
\begin{align*}
\Pr\left(1-\frac{|G_{ij}^x|}{|G_{ij}|}>2e^{-\frac{x^2}{2}}\right)&=
\Pr\left( \frac{1}{|G_{ij}|} \sum_{k=1}^{|G_{ij}|} X_k > 2c\bar p \right)
< e^{-\frac13 \eta\bar p |G_{ij}| }.
\end{align*}
Since $(1+\eta)\bar p=2\exp(-x^2/2)$ and $\eta>1$, we obtain that $\eta \bar p>(1+\eta)\bar p/2=\exp(-x^2/2)$, and consequently,
\begin{align}\label{eq:gchern}
\Pr\left(1-\frac{|G_{ij}^x|}{|G_{ij}|}>2e^{-\frac{x^2}{2}}\right)<e^{-\frac13 e^{-\frac{x^2}{2}}|G_{ij}|}.
\end{align}
Application of a union bound over $ij\in E$ to \eqref{eq:gchern} yields
\begin{align}\label{eq:gunion}
\Pr\left(1-\min_{ij\in E}\frac{|G_{ij}^x|}{|G_{ij}|}>2e^{-\frac{x^2}{2}}\right)
<|E|e^{-\frac13 e^{-\frac{x^2}{2}}\min\limits_{ij\in E}|G_{ij}|}<|E|e^{-\frac13 e^{-\frac{x^2}{2}}\min\limits_{ij\in E}|N_{ij}|(1-\lambda)}.
\end{align}
We note that
\begin{align}\label{eq:gunion2}
    \lambda_x =1-\min_{ij\in E}\frac{|G_{ij}^x|}{|N_{ij}|}=1-\min_{ij\in E}\frac{|G_{ij}|}{|N_{ij}|}\frac{|G_{ij}^x|}{|G_{ij}|}\leq 1-(1-\lambda)\min_{ij\in E}\frac{|G_{ij}^x|}{|G_{ij}|}.
\end{align}
The combination of \eqref{eq:gunion} and \eqref{eq:gunion2} results in
\begin{align*}
\Pr\left(\lambda_x<1-(1-\lambda)\left(1-2e^{-\frac{x^2}{2}}\right)\right)>1-|E|e^{-\frac13 e^{-\frac{x^2}{2}}\min\limits_{ij\in E}|N_{ij}|(1-\lambda)}.
\end{align*}
Applying the inequality $1-(1-\lambda)(1-2e^{-\frac{x^2}{2}})<\lambda+2e^{-\frac{x^2}{2}}$ for $0< \lambda <1/4$ to the above equation yields
\begin{align}
\label{eq:last_sub_gauss}
    \Pr\left(\lambda_x<\lambda+2e^{-\frac{x^2}{2}}\right)> 1-|E|e^{-\frac13 e^{-\frac{x^2}{2}}\min\limits_{ij\in E}|N_{ij}|(1-\lambda)}.
\end{align}
That is, with the probability indicated on the RHS of \eqref{eq:last_sub_gauss}, for any $ij\in E$, there is a subset of $N_{ij}$ whose proportion is at least $1-\lambda-2\exp(-x^2/2)$ and for any element indexed by $k$ in this subset, both $s_{ik}^*$ and $s_{jk}^*$ are bounded above by $\sigma\mu+\sigma x$. We thus conclude the proof by applying Theorems~\ref{thm:it2} and \ref{thm:ir2}, while replacing their parameters $\delta$ and $\lambda$ with the current parameters $\sigma\mu+\sigma x$ and $\lambda+2\exp(-x^2/2)$, respectively.

\subsection{Proof of Proposition \ref{prop:mode}}
\label{sec:prop_mode}

We note that under UCM, $\hat s_{ij} = s_{ij}^*$ with probability $1$ if and only if $|G_{ij}|\geq 2$. Indeed,  $|G_{ij}|\geq 2$ if and only if there are at least two elements in $D_{ij}$ that equal to $s_{ij}^*$. Furthermore, for $k\in B_{ij}$, the values of $d_{ij,k}$ are distinct from each other with probability $1$, due to the smooth density of the Haar measure over the continuous group. Thus, with probability 1, $s_{ij}^*$ is the mode of $D_{ij}$, or equivalently, $\hat s_{ij} = s_{ij}^*$.  

In order to conclude the proof we show that the condition $|G_{ij}|\geq 2$ holds for all $ij\in E$ with sufficiently high probability. We note that $X_k :=\mathbf{1}_{\{k\in G_{ij}\}}$ for $k\in [n]$ are i.i.d.~Bernoulli random variables with mean $\mu=p^2q_g^2$. By applying Chernoff bound in \eqref{eq:chernoff2} to $X_k$, we obtain that
\begin{align}\label{eq:Gij2}
    \Pr(|G_{ij}|\geq 2) = \Pr\left(\frac{1}{n}\sum_{k\in [n]} X_k\geq \frac{2}{np^2q_g^2}\mu\right)
    >  1-e^{-\frac13\left(1-\frac{2}{np^2q_g^2}\right)^2 p^2q_g^2 n}.
\end{align}
If $n/\log n\geq  10/(p^2q_g^2)$, then $2/(n p^2q_g^2)<1/5$ for $n>2$ and thus \eqref{eq:Gij2} implies that 
\begin{align*}
    \Pr(|G_{ij}|\geq 2) 
    >  1-e^{-\frac{16}{75}p^2q_g^2 n}.
\end{align*}
By taking a union bound over $ij\in E$ and applying the assumption $n/\log n\geq c/(p^2q_g^2)$ for $c\geq 10$, we obtain that
\begin{align*}
    &\Pr(\min_{ij\in E}|G_{ij}|\geq 2) > 1-n^2e^{-\frac{16}{75}p^2q_g^2 n} \geq  1-n^2 e^{-\frac{16c}{75}\log n}
    \\
    =& 1-n^{2-\frac{16c}{75}} \geq 1- n^{-\frac{2}{15}}.
\end{align*}
and consequently with the same probability $\hat s_{ij}=s_{ij}^*$ for all $ij\in E$.

\subsection{Proofs of the Preliminary Results of Section \ref{sec:prelim_ucm}}\label{sec:proofs_prelim_ucm}

\subsubsection{Proof of Proposition~\ref{prop:lambda}}

We first assume the case where $\sqrt2/2<q_*<1$, or equivalently, $q_{\min} =1-q_*^2$. For any fixed $ij\in E$, we define the random variables $X_k=\mathbf 1_{\{k\in B_{ij}\}}$, $k\in N_{ij}$.
We note that they are i.i.d.~Bernoulli with mean $q_{\min}=1-q_*^2$. 
We further note that 
$\lambda_{ij}$ is the average of $X_k$ over all $k\in N_{ij}$. Thus, direct application of Chernoff bound in
\eqref{eq:chernoff2} implies \eqref{eq:lambda1} in this case.

Next, we assume the case where $q_* \leq \sqrt2/2$, or equivalently, $q_{\min} =q_*^2$. For any $ij \in E$, define the random variables $Y_k=\mathbf 1_{\{k\in G_{ij}\}}$,  $k\in N_{ij}$. We note that they are i.i.d.~Bernoulli with mean $q_*^2$. By applying Chernoff bound (see \eqref{eq:chernoff2}) with  $\{Y_k\}_{k\in N_{ij}}$, we obtain \eqref{eq:lambda1} in this case and thus in general.

At last,  
applying a union bound over $ij\in E$ to \eqref{eq:lambda1} yields \eqref{eq:lambda2}.

\subsubsection{Proof of Proposition~\ref{prop:lambda2}}

The idea of the proof is similar to that of Proposition \ref{prop:lambda}. We first note that for any fixed $ij\in E$: the mean of $\mathbf 1_{\{k\in G_{ij}\}}$ for $k\in N_{ij}$ equals $q_*^2$; $\Pr(\lambda_{ij}>x)=\Pr(1-\lambda_{ij}<1-x)$; and $1-\lambda_{ij}$ is the average over ${k \in N_{ij}}$ of the Bernoulli random variables $\mathbf 1_{\{k\in G_{ij}\}}$. The proposition is concluded by applying a one-sided version of Chernoff bound in \eqref{eq:chernoff2} with $\{\mathbf 1_{\{k\in G_{ij}\}}\}_{k\in N_{ij}}$ (for each fixed $ij \in E$), $\eta=1-(1-x)/q_*^2$, $\mu=q_*^2$ and $m=|N_{ij}|$.

\subsubsection{Proof of Proposition~\ref{prop:initial1}}

We consider three disjoint cases of $k$'s in the sum of \eqref{eq:initial_corruption}. Since $\mathbb E(s_{ij}(0))=\mathbb E(d_{ij,k})$, we compute in each case the contribution of that case to the expectation of $d_{ij,k}$ given that case.

The first case is when $k\in G_{ij}$, so $d_{ij,k}=s_{ij}^*$, and thus the corresponding elements in \eqref{eq:initial_corruption} equal  $s_{ij}^*$. This case occurs w.p.~$q_g^2$.

The second case is when $k\in B_{ij}$ and either  $ik$ or $jk$ (but not both) is corrupted, and it occurs with probability $2q_g(1-q_g)$. Without loss of generality, we assume that $ik\in E_g$ and $jk\in E_b$. Using the bi-invariance of $d_\mathcal G$, we obtain that in this case, $d_{ij,k}=d_\mathcal G(g_{ij}g_{jk}g_{ki}^*,e_\mathcal G)=d_\mathcal G(g_{ki}^*g_{ij}g_{jk},e_\mathcal G)$. For any given $g_{ki}^*$ and $g_{ij}$, $g_{ki}^*g_{ij}g_{jk}\sim \text{Haar}(\mathcal G)$, due to the fact that $g_{jk}\sim \text{Haar}(\mathcal G)$ and the definition of Haar measure.
Thus, in this case $\mathbb E(d_{ij,k}|ik\in E_g, jk\in E_b)=z_\mathcal G.$

The last case is when $k\in B_{ij}$ and both $ik$ and $jk$ are corrupted. This case occurs with probability $(1-q_g)^2$. We claim that since $g_{jk}$, $g_{ki}\sim \text{Haar}(\mathcal{G})$
and $g_{jk}$ and $g_{ki}$ are independent, 
$g_{jk}g_{ki}\sim \text{Haar}(\mathcal{G})$.
Indeed, for any $g\in \mathcal G$, $gg_{jk}\sim \text{Haar}(\mathcal{G})$, and furthermore, $g_{ki}$ is independent of both $g_{jk}$ and $gg_{jk}$. Thus, $g_{jk}g_{ki}$ and $gg_{jk}g_{ki}$ are identically distributed  for any $g\in\mathcal G$ and thus $g_{jk}g_{ki}\sim \text{Haar}(\mathcal{G})$. Consequently, for fixed $g_{ij}$,  $g_{ij}g_{jk}g_{ki}\sim \text{Haar}(\mathcal{G})$
and thus
 \[\mathbb E(d_{ij,k}|ik\in E_b, jk\in E_b)=
 \mathbb E(d_\mathcal G(g_{ij}g_{jk}g_{ki},e_\mathcal G)|ik\in E_b, jk\in E_b) =
 z_\mathcal G.\] Combining all the three cases, we conclude \eqref{eq:initial1}.

\subsection{Proof of Lemma~\ref{lemma:tau1}}\label{sec:lemma:tau1}
Denote $\gamma_{ij} = |s_{ij}(0) - \mathbb E(s_{ij}(0))|$ 
for $ij \in E$ and $\gamma=\max_{ij \in E} \gamma_{ij}$, so that the condition of the lemma can be written more simply as 
${1}/{(4\beta_0)}\geq \gamma$.  By rewriting $s_{ij}(0)$ as $q_g^2s_{ij}^*+(1-q_g^2)z_\mathcal G+\gamma_{ij}$ and applying \eqref{eq:estimate_eps} with $t=0$,
\begin{align*}
\epsilon_{ij}(1)
\leq \frac{\sum\limits_{k\in B_{ij}}e^{-\beta_0\left(q_g^2s_{ik}^*+\gamma_{ik}+q_g^2s_{jk}^*+\gamma_{jk}\right)}\left(s_{ik}^*+ s_{jk}^*\right)}{\sum\limits_{k\in G_{ij}}e^{-\beta_0\left(q_g^2s_{ik}^*+\gamma_{ik}+q_g^2s_{jk}^*+\gamma_{jk}\right)}}.
\end{align*}
By first applying the obvious facts: $|\gamma_{ik}|,|\gamma_{jk}|\leq \gamma$ and $s_{ik}^*=s_{jk}^*=0$ for $k\in G_{ij}$, then applying the assumption ${1}/{(4\beta_0)}\geq \gamma$, and at last the inequality $xe^{-ax}\leq 1/(ea)$ for $x$, $a>0$ with $x=s_{ik}^{*}+s_{jk}^{*}$ and $a=\beta_0q_g^2$, we obtain that
\begin{align*}
\epsilon_{ij}(1)
&\leq  \frac{\sum\limits_{k\in B_{ij}}e^{-\beta_0q_g^2\left(s_{ik}^*+s_{jk}^*\right)}\left(s_{ik}^*+ s_{jk}^*\right)e^{4\beta_0\gamma}}{|G_{ij}|}\nonumber\\ &\leq
\frac{e\sum\limits_{k\in B_{ij} }e^{-\beta_0q_g^2\left(s_{ik}^*+s_{jk}^*\right)}\left(s_{ik}^*+ s_{jk}^*\right)}{|G_{ij}|}\leq \frac{|B_{ij}|}{|G_{ij}|}\frac{1}{q_g^2\beta_0}.
\end{align*}
The lemma is concluded by maximizing over $ij\in E$ both the LHS and RHS of the above inequality.
\subsection{Proof of Lemma~\ref{lemma:M}}\label{sec:lemma:M}
We prove \eqref{eq:lemma:M}, or equivalently, $\epsilon (t)<{1}/({4\beta_t})$ for all $t\geq 1$, by induction. We note that $\epsilon (1)<({1}/{4\beta_1})$ is an assumption of the lemma.  We next show that $\epsilon (t+1)<{1}/({4\beta_{t+1}})$ if  $\epsilon (t)<{1}/({4\beta_t})$. By combining \eqref{eq:bound_2_eps2} and the induction assumption $\epsilon (t)<{1}/({4\beta_t})$ and then using the definition of $\lambda$,
\begin{align*}
\epsilon_{ij}(t+1)
\leq \frac{e \sum\limits_{k\in B_{ij}}e^{-\beta_t\left(s_{ik}^*+s_{jk}^*\right)}\left(s_{ik}^*+ s_{jk}^*\right)}{|G_{ij}|}
\leq e\frac{\lambda}{1-\lambda}\frac{1}{|B_{ij}|}\sum\limits_{k\in B_{ij}}e^{-\beta_t\left(s_{ik}^*+s_{jk}^*\right)}\left(s_{ik}^*+ s_{jk}^*\right).
\end{align*}
Combining \eqref{eq:Mbeta} with the above equation, then applying the definition of $M$, and at last using  $\beta_{t+1}=\beta_t/r$,
\begin{align*}
\epsilon (t+1)<e\frac{\lambda}{1-\lambda}\frac{1}{M\beta_t}=\frac{r}{4\beta_t}=\frac{1}{4\beta_{t+1}}.
\end{align*}

\subsection{Proof of Lemma~\ref{lemma:risk}}\label{sec:risk}
We arbitrarily fix $ij\in E$ and $\beta>0$. We denote $m=|B_{ij}|$ and assume that $k=1, \ldots, m$ index the elements of $B_{ij}$.
We use the i.i.d.~random variables $X_k=s_{ik}^*+s_{jk}^*$, $k=1, \ldots, m$, with cdf denoted (as earlier) by $P$.  Let $\mathcal P$ and $\mathcal P_m$ denote the functionals that provide the expectation with respect to the probability and empirical measures of $\{X_k\}_{k=1}^m$, respectively. That is, $\mathcal Pf=\int f(x) d P(x)$ and $\mathcal P_m f=\frac{1}{m}\sum_{k=1}^m f(X_k)$. For any functional $\mathcal Y: \mathcal F(\beta)\to \mathbb R$, let $\| \mathcal Y\|_{\mathcal F(\beta)}=\sup_{f\in \mathcal F(\beta)}|\mathcal Y(f)|$.
Given this notation, we can rewrite 
\eqref{eq:riskH} that we need to prove as follows 
\begin{align}\label{eq:riskH1}
\Pr\left(\|\mathcal P_m-\mathcal P\|_{\mathcal F(\beta)}>V(\beta)+c\sqrt{\frac{\log m}{m}}\right)<e^{-\frac{1}{3}mV(\beta)}.
\end{align}

The above formulation is similar to the following uniform version of Bennett's inequality in our setting (see Theorem 2.3 of \cite{Bousquet}): For any $t>0$
\begin{align}\label{eq:bousquet}
\Pr(\|\mathcal P_m-\mathcal P\|_{\mathcal F(\beta)}>\mathbb E\| \mathcal P_m- \mathcal P\|_{\mathcal F(\beta)}+t)<e^{-mvh\left(\frac tv\right)},
\end{align}
where $h(x)=(x+1)\log(x+1)-x$ and $v=V(\beta)+2\mathbb E\| \mathcal P_m- \mathcal P\|_{\mathcal F(\beta)}$ ($V$ is the same as ours).  We remark that \eqref{eq:bousquet} holds under the condition that $\sup_{f_\tau\in \mathcal F(\beta)}\|f_\tau-\mathcal P f_\tau\|_\infty\leq 1$. This condition holds in our setting since $0\leq f_\tau (x)\leq 1$ for any $\tau\geq 0$ and $x\geq 0$.

In order to conclude \eqref{eq:riskH1} from \eqref{eq:bousquet}, we formulate the following lemma that provides an upper bound for $\mathbb E\| \mathcal P_m- \mathcal P\|_{\mathcal F(\beta)}$ in \eqref{eq:bousquet}. We prove it in Section~\ref{sec:entropy} below.

\begin{lemma}\label{lemma:entropy}
Assume the setting of Theorem \ref{thm:rand2}. There exists an absolute constant $c_1$ such that for all $\beta, m>0$
\begin{align}\label{eq:lemma:entropy}
\mathbb E\| \mathcal P_m- \mathcal P\|_{\mathcal F(\beta)} \leq  c_1\sqrt{\frac{\log m}{m}}.
\end{align}
\end{lemma}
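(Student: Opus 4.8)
The plan is to bound the expected supremum of the empirical process indexed by the one-parameter family $\mathcal F(\beta) = \{f_\tau : \tau > \beta\}$ using classical empirical-process theory. The first observation is uniform boundedness: writing $f_\tau(x) = e\cdot(\tau x)e^{-\tau x}$ and using $u e^{-u} \le e^{-1}$ for $u \ge 0$, every $f_\tau$ satisfies $0 \le f_\tau \le 1$, so the class has constant envelope $1$; each individual deviation $\mathcal P_m f_\tau - \mathcal P f_\tau$ is therefore an average of i.i.d.\ $[0,1]$-valued random variables. Moreover $\mathcal F(\beta) \subseteq \mathcal F(0)$ for every $\beta > 0$, so it suffices to bound $\mathbb E\| \mathcal P_m - \mathcal P\|_{\mathcal F(0)}$, and the resulting estimate is then automatically uniform in $\beta$, as the statement requires.

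The key structural step is to show that $\mathcal F(0)$ is a VC-subgraph class whose index is at most an absolute constant. Substituting $u = \tau x$ gives $f_\tau(x) = \phi(\tau x)$ with $\phi(u) = e\, u e^{-u}$ a fixed function that increases strictly on $[0,1]$ and decreases strictly on $[1,\infty)$, with maximum value $1$. Consequently, for each fixed $x \ge 0$ and level $t$, the set $\{\tau > 0 : f_\tau(x) > t\} = \{\tau > 0 : \phi(\tau x) > t\}$ is an interval, since $\{u : \phi(u) > t\}$ is an interval (by unimodality of $\phi$) and $\tau \mapsto \tau x$ is monotone. Given any $n$ points $(x_1,t_1),\dots,(x_n,t_n)$, membership of point $k$ in the subgraph of $f_\tau$ is governed by whether $\tau$ lies in an interval $I_k \subseteq \mathbb R_{+}$; the arrangement of $n$ intervals partitions $\mathbb R_{+}$ into at most $2n+1$ cells, so at most $2n+1$ distinct labelings are realizable. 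Since $2n+1 < 2^n$ for $n \ge 3$, the subgraph class cannot shatter three points, hence its VC index is bounded by an absolute constant $V_0$.

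With uniform boundedness and a constant VC index established, the conclusion follows from standard bounds over VC-subgraph classes. One route is symmetrization followed by Dudley's chaining with the uniform-entropy bound $\sup_Q N(\epsilon, \mathcal F(0), L^2(Q)) \lesssim \epsilon^{-2V_0}$, which yields $\mathbb E\|\mathcal P_m - \mathcal P\|_{\mathcal F(0)} \lesssim \sqrt{V_0/m}$; since $V_0$ is absolute this is certainly bounded by $c_1\sqrt{(\log m)/m}$. Alternatively, one can give a self-contained argument: choose an $\epsilon$-net over the effective range of $\tau$ of polynomial cardinality in $m$, control each net point by Hoeffding's inequality together with a union bound (which is precisely where the $\log m$ factor would originate), and bound the residual oscillation between net points using the modulus of continuity of $\tau \mapsto f_\tau$. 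Either way the bound depends only on $V_0$ and the envelope, not on $\beta$ or the distribution $P$.

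The hard part will be handling the unbounded parameter range $\tau \in (0,\infty)$ and obtaining a complexity estimate independent of $\beta$, since as $\tau \to \infty$ the functions $f_\tau$ concentrate arbitrarily near the origin, so a naive modulus-of-continuity argument in $\tau$ would blow up. The unimodality reduction of the second paragraph is exactly what circumvents this: it shows that the combinatorial complexity of the subgraphs is controlled by the VC geometry of intervals rather than by the scale $\tau$, so no dependence on $\beta$ or on how peaked the functions become survives in the final bound. I note that the special hypotheses of Theorem~\ref{thm:rand2} on the support of $\{s_{ij}^*\}_{ij\in E_b}$ and on the quantile function $Q$ are not needed for this entropy estimate; they enter only through the value of $V(\beta)$ when Lemma~\ref{lemma:risk} is assembled via Bennett's inequality \eqref{eq:bousquet}.
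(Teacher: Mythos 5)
Your proof is correct, and it takes a genuinely different route from the paper's. Both arguments begin with the same symmetrization-plus-Dudley reduction to a covering-number estimate for $\mathcal F(0)$ in $L_2(\mathcal P_m)$. The paper then builds an explicit cover by intervals in the parameter $\tau$; because the functions $f_\tau$ become arbitrarily peaked near the origin as $\tau\to\infty$, the cardinality of that cover depends on $\min_{k} X_k$, the entropy integral picks up a $\sqrt{\log(1/\min_k X_k)}$ term, and its expectation is controlled only by invoking the special hypotheses of Theorem~\ref{thm:rand2} (support bounded below by $a\gtrsim 1/m$, or the quantile condition $Q'(x)/Q(x)\lesssim 1/x$) --- which is exactly where the paper's $\sqrt{\log m}$ factor originates. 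Your unimodality observation --- $f_\tau(x)=\phi(\tau x)$ with $\phi(u)=eue^{-u}$ unimodal, so each set $\{\tau: f_\tau(x)>t\}$ is an interval, the arrangement of $n$ such intervals realizes at most $2n+1$ labelings, and hence the subgraph class cannot shatter three points --- replaces this data-dependent cover by a distribution-free VC-subgraph entropy bound that is uniform over all discretizing measures, including $\mathcal P_m$. This yields the strictly stronger estimate $O(1/\sqrt{m})$, uniformly in $\beta$ and in the law of $Y$, and you are right that the support/quantile hypotheses of Theorem~\ref{thm:rand2} then become superfluous for this lemma (they would matter only through $V(\beta)$ when Lemma~\ref{lemma:risk} is assembled via Bennett's inequality). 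The only pedantic caveat, which the paper's own statement and proof share, is that absorbing $\sqrt{V_0/m}$ into $c_1\sqrt{\log m/m}$ requires $m\geq 2$, since the right-hand side of \eqref{eq:lemma:entropy} vanishes at $m=1$.
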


By letting  $t=V(\beta)+2c_1\sqrt{\log m/m}$
in \eqref{eq:bousquet} 
and $c=3c_1$ in \eqref{eq:riskH1} and applying Lemma \ref{lemma:entropy}, we conclude that the event of \eqref{eq:riskH1} contains the event of \eqref{eq:bousquet}. It thus remains to show that the probability bound in \eqref{eq:riskH1} controls the one in \eqref{eq:bousquet}. This follows from the facts that $t/v>1$ (which follows by direct application of Lemma \ref{lemma:entropy}) and 
 $h(x)>x/3$ when $x\geq 1$ (which is a direct calculus exercise).

\subsubsection{Proof of Lemma~\ref{lemma:entropy}}\label{sec:entropy}
In order to upper bound $\mathbb E\| \mathcal P_m- \mathcal P\|_{\mathcal F(\beta)}$ we use tools from empirical risk minimization. Define $\mathcal R_m(f)=\frac{1}{m}\sum_{k=1}^m \epsilon_k f_\tau(X_k)$, where $\epsilon_k$ are i.i.d.~Rademacher random variables. We first note that $\mathbb E\| \mathcal P_m- \mathcal P\|_{\mathcal F(\beta)}$ can be controlled by the Rademacher complexity of $\mathcal F(\beta)$, which is defined as $\mathbb E \|\mathcal R_m\|_{\mathcal F(\beta)}$. Specifically,
 Theorems 2.1 and 3.11 of \cite{oracle_ERM} state that there exists an absolute constant $c_2$ such that
\begin{align}\label{eq:Dudley}
\mathbb E\| \mathcal P_m- \mathcal P\|_{\mathcal F(\beta)}\leq 2\mathbb E\|\mathcal R_m\|_{\mathcal F(\beta)}\leq \frac{c_2}{\sqrt m}\mathbb E\int_0^{2\sigma_m}\sqrt{\log N(\mathcal F(\beta);\ell_2(\mathcal P_m);\varepsilon)}d\varepsilon,
\end{align}
where $\sigma_m^2=\sup_{f_\tau\in \mathcal F(\beta)}\mathcal P_m f^2$ and  $N(\mathcal F(\beta);\ell_2(P_n);\varepsilon)$ is the covering number of $\mathcal F(\beta)$ using $\ell_2(P_m)$-balls of radius $\varepsilon$. Note that the $\ell_2(P_m)$-ball of radius $\varepsilon$ centered at any function $f_{\tau^*}(x)\in \mathcal F(\beta)$ is defined as $\left\{f_\tau\in \mathcal F(\beta): \frac1m\sum_{k=1}^m (f_\tau(X_k)-f_{\tau^*}(X_k))^2<\varepsilon^2\right\}$. In view of  \eqref{eq:Dudley}, since $\mathcal F(\beta)\subseteq \mathcal F(0)$ for any $\beta>0$, we can prove  \eqref{eq:lemma:entropy} by showing that there exists an absolute constant $c_3$ such that 
\begin{align}\label{eq:erm}
    \mathbb E\int_0^{2\sigma_m}\sqrt{\log N(\mathcal F(0);\ell_2(\mathcal P_m);\varepsilon)}d\varepsilon\leq c_3\sqrt{\log m}.
\end{align}
In order to conclude \eqref{eq:erm}, we first give an upper bound for $N(\mathcal F(0);\ell_2(P_m);\varepsilon)$ for fixed $\varepsilon$, $m$ and $\{X_k\}_{k=1}^m$ by constructing a specific $\ell_2(P_m)$-ball covering $\{B_i\}_{i=1}^{N_\varepsilon}$ of $\mathcal F(0)$. We note that since  $f_\tau(X_k)\leq 1$ for any $f\in \mathcal F(0)$ and $X_k\geq 0$, the covering number $N(\mathcal F(0);\ell_2(P_m);\varepsilon)$ equals 1 for all $\varepsilon \geq 1$; therefore, its log is zero and in this case there is no contribution to the integral in \eqref{eq:erm}. It is thus sufficient to consider $\varepsilon<1$. For simplicity, we represent each ball $B_i$ in our proposed cover by an interval $I_i=[a_i, b_i)$ that indicates the range of parameters $\tau$ of functions in $B_i$. In our construction, $I_1=[a_1,\infty)$, $b_{i+1} = a_{i}$ for $i=1, \ldots, N_\varepsilon-1$ and $\{I_i\}_{i=1}^{N_\varepsilon}$ cover $[0,\infty)$. This implies that $B_i=\{f_\tau: \tau \in I_i\}$, $i=1, \ldots, {N_\varepsilon}$, cover $F(0) = \{f_\tau: \tau \in [0, \infty)\}$.

We define
\begin{align}
\label{eq:def_I1}
I_1 = \left(\frac{2\log (\frac1\varepsilon)+2}{\min_{1\leq k\leq m}{X_k}},\infty\right).
\end{align}
We claim that the ball $B_1=\{f_\tau: \tau \in I_1\}$ is contained in $B(0,\varepsilon)$, whose center $f_\tau(x)\equiv 0$ corresponds to $\tau=\infty$.  
Indeed, if $\tau\in I_1$ and $\varepsilon<1$, then $\tau X_k>2\log(1/\varepsilon)+2>2$ and in particular  $\exp(\frac12\tau X_k)>\tau X_k$. Using these inequalities, we verify our claim as follows
\begin{align*}
\sqrt{\frac1m\sum_{k=1}^m (e^{-\tau X_k+1}\tau X_k)^2}\leq \max_{1\leq k\leq m} e^{-\tau X_k+1}\tau X_k<\max_{1\leq k\leq m} e^{-\frac{1}{2}\tau X_k+1}<\varepsilon.
\end{align*}

Given $I_i=(a_i,b_i]$, we define $I_{i+1}=(a_{i+1}, b_{i+1}]$, where $b_{i+1}=a_i$ and $a_{i+1}=a_{i}-\varepsilon/(2e)$, so that $|I_{i+1}| = \varepsilon/(2e)$. We claim that $B_{i+1}=\{f_\tau: \tau \in I_{i+1}\}$ is contained in $B(f_{b_{i+1}},\varepsilon)$. Indeed, since the function $xe^{x+1}$ is Lipschitz with constant $e$ and $0\leq X_k\leq 2$, for any $\tau \in I_{i+1}$
\begin{align*}
&\sqrt{\frac1m\sum_{k=1}^m (f_{\tau}(X_k)-f_{b_{i+1}}(X_k))^2}\leq 
\max_{1\leq k\leq m}|e^{-\tau X_k+1} \tau X_k-e^{-b_{i+1} X_k+1}b_{i+1} X_k|\nonumber\\
&\leq e\max_{1\leq k\leq m}|\tau-b_{i+1}|X_k\leq 2e|a_{i+1}-b_{i+1}| = \varepsilon.
\end{align*}
We have thus obtained a covering of $\mathcal F(0)$ by $\ell_2(P_m)$-balls with radius $\varepsilon$. The total number of corresponding intervals (where intervals $I_i$, $i \geq 2$, cover $(0,a_1)$ and have length $\varepsilon/(2 e)$)  is at most $2e
a_1/\varepsilon+1$. Using this observation and the value of $a_1$ specified in \eqref{eq:def_I1}, then applying the facts  $X_k\leq 2$ and $\varepsilon<1$, and at last the inequality $1+\log x\leq x$, we obtain that
\begin{align}\label{eq:Nupper}
N(\mathcal F(\beta);\ell_2(P_m);\varepsilon)\leq 4e\frac{(\log (\frac1\varepsilon)+1)\frac{1}{\varepsilon}}{\min_{1\leq k\leq m}{X_k}}+1< 6e\frac{(\log (\frac1\varepsilon)+1)\frac{1}{\varepsilon}}{\min_{1\leq k\leq m}{X_k}}< e^3\frac{1}{\min_{1\leq k\leq m}{X_k}}\frac{1}{\varepsilon^2}.
\end{align}

We note that the cdf of ${\min_{1\leq k\leq m}{X_k}}$ is $1-(1-P(x))^m$. Combining this observation, the fact that $\varepsilon < 1$ and \eqref{eq:Nupper}, and then applying basic inequalities, using the notation $a_+:=\max(a,0)$, and in particular final application of Jensen's inequality with the concave function $\sqrt x$, we obtain that 
\begin{align}
&\mathbb E\int_0^{2\sigma_m}\sqrt{\log N(\mathcal F(\beta);\ell_2(\mathcal P_m);\varepsilon)}d\varepsilon\nonumber
<  \int_0^2\int_0^1\sqrt{\log\frac{1}{x}+2\log\frac{1}{\varepsilon}+3}\,\,d\varepsilon\, d(1-(1-P(x))^m)\nonumber\\
\leq & \int_0^2\int_0^1\left(\sqrt{\left(\log\frac{1}{x}\right)_+}+\sqrt{2\left(\log\frac{1}{\varepsilon}\right)}\right)d\varepsilon\, d(1-(1-P(x))^m)+\sqrt 3\nonumber\\
=&\,\,\int_0^1 \sqrt{\log\frac{1}{x}}\,d(1-(1-P(x))^m)+\sqrt2 \int_0^1\sqrt{\log\frac{1}{\varepsilon}}d\varepsilon+\sqrt3\nonumber\\
\leq&\sqrt {\int_0^1 \log\frac{1}{x}\,d(1-(1-P(x))^m)} +\sqrt2+\sqrt 3.
\label{eq:logm}
\end{align}

Next, we give an upper bound for the first term in the RHS of \eqref{eq:logm}, while considering the two cases of Theorem \ref{thm:rand2}. If $X_k$, $1 \leq k \leq m$, is supported on $[a,\infty)$ and $a\gtrsim 1/m$, then
\begin{align}
&\int_0^1 {\log\frac{1}{x}}\,d(1-(1-P(x))^m)\leq\left(\log \frac{1}{a}\right)_+\lesssim \log m.
\end{align}

If on the other hand, the quantile function $Q(x)$ is differentiable and $Q'(x)/Q(x)\lesssim 1/x$ for $x<P(1)$, 
then we substitute $u=1-P(x)$ and obtain that
\begin{align}\label{eq:smoothlog}
&\int_{x\in[0,1]} {\log\frac{1}{x}}\,d(1-(1-P(x))^m)
=-\int_{u\in [1-P(1),1]} \log\frac{1}{Q(1-u)}\,d(1-u^m)\nonumber\\
=&\int_{u\in [1-P(1),1]} (1-u^m)d\log\frac{1}{Q(1-u)}
=\int_{1-P(1)}^1 (1-u^m)\frac{Q'(1-u)}{Q(1-u)}du \nonumber\\ \lesssim & \int_{0}^1 \frac{1-u^m}{1-u}du
= \int_{0}^1 \sum_{i=0}^{m-1} u^i du
=\sum_{i=1}^m \frac1i\leq (\log m+1).
\end{align}

Combining \eqref{eq:logm}-\eqref{eq:smoothlog}, we conclude \eqref{eq:erm} and thus Lemma \ref{lemma:entropy}.


%
%

\bibliographystyle{spmpsci}      
\bibliography{robust_sync.bib}   

\end{document}